\newcounter{protocol}
\newenvironment{protocol}[1][htb]{%
  \let\c@algorithm\c@protocol
  \renewcommand{\ALG@name}{Protocol}
  \begin{algorithm}[#1]%
  }{\end{algorithm}
}
\xpatchcmd{\proof}{\itshape}{\scshape}{}{}
\newtheorem{definition}{Definition}
\newtheorem{lemma}{Lemma}
\newtheorem{theorem}{Theorem}
\newtheorem{example}{Example}
\newtheorem{corollary}{Corollary}
\let\top\intercal
\definecolor{misocolor}{rgb}{0.16,0.27,0.86}
\definecolor{graphicbackground}{rgb}{0.96,0.96,0.8}
\definecolor{rouge1}{RGB}{226,0,38}  
\definecolor{orange1}{RGB}{243,154,38}  
\definecolor{jaune}{RGB}{254,205,27}  
\definecolor{blanc}{RGB}{255,255,255} 
\definecolor{rouge2}{RGB}{230,68,57}  
\definecolor{orange2}{RGB}{236,117,40}  
\definecolor{taupe}{RGB}{134,113,127} 
\definecolor{gris}{RGB}{91,94,111} 
\definecolor{bleu1}{RGB}{38,109,131} 
\definecolor{bleu2}{RGB}{28,50,114} 
\definecolor{vert1}{RGB}{133,146,66} 
\definecolor{vert3}{RGB}{20,200,66} 
\definecolor{vert2}{RGB}{157,193,7} 
\definecolor{darkyellow}{RGB}{233,165,0}  
\definecolor{lightgray}{rgb}{0.9,0.9,0.9}
\definecolor{darkgray}{rgb}{0.6,0.6,0.6}
\definecolor{babyblue}{rgb}{0.54, 0.81, 0.94}
\definecolor{citrine}{rgb}{0.89, 0.82, 0.04}
\definecolor{misogreen}{rgb}{0.25,0.6,0.0}
\definecolor{darkmeganta}{rgb}{0.6,0,0.6}
\DeclareMathOperator*{\argmax}{arg\,max}
\DeclareMathOperator*{\argmin}{arg\,min}
\DeclareMathOperator{\loglog}{loglog}
\newcommand{\inner}[2]{\left\langle #1, #2 \right\rangle}
\newcommand{\ceil}[1]{\left\lceil#1\right\rceil}
\newcommand{\floor}[1]{\left\lfloor#1\right\rfloor}
\newcommand{\Ber}{\mathrm{Ber}}
\newcommand{\Rad}{\mathrm{Rad}}
\newcommand{\Unif}{\mathrm{Unif}}
\newtheorem{claim}{Claim}
\newtheorem{problem}{Problem}
\newcommand{\R}{\mathbb{R}}
\newcommand{\NN}{{\mathbb N}}
\newcommand{\1}[1]{\mathds{1}(#1)}
\newcommand{\ind}[1]{\mathds{1}_{#1}}
\newcommand{\true}[1]{\mathds{1}(#1)}
\newcommand{\bOne}{{\bm 1}}
\newcommand{\bZero}{{\bm 0}}
\newcommand{\EE}[1]{\mathbb{E}\left[#1\right]}
\newcommand{\EEs}[2]{\mathbb{E}_{#1}\left[#2\right]}
\newcommand{\norm}[1]{\left\|#1\right\|}
\newcommand{\abs}[1]{\left|#1\right|}
\newcommand{\cA}{\mathcal{A}}
\newcommand{\cB}{\mathcal{B}}
\newcommand{\cD}{\mathcal{D}}
\newcommand{\cH}{\mathcal{H}}
\newcommand{\cL}{\mathcal{L}}
\newcommand{\cM}{\mathcal{M}}
\newcommand{\cO}{\mathcal{O}}
\newcommand{\cQ}{\mathcal{Q}}
\newcommand{\cS}{\mathcal{S}}
\newcommand{\cX}{\mathcal{X}}
\newcommand{\cY}{\mathcal{Y}}
\newcommand{\bP}{{\bf P}}
\newcommand{\be}{{\bf e}}
\renewcommand{\epsilon}{\varepsilon}
\renewcommand{\hat}{\widehat}
\renewcommand{\tilde}{\widetilde}
\renewcommand{\bar}{\overline}
\newcommand{\nothere}[1]{}
\newcommand{\fpr}{\mathrm{fpr}}
\newcommand{\fnr}{\mathrm{fnr}}
\newcommand{\VS}{\mathrm{VS}}
\newcommand{\MB}{\mathrm{MB}}
\newcommand{\SC}{\mathrm{SC}}
\newcommand{\MW}{\mathrm{MWMR}}
\newcommand{\TV}{\mathrm{TV}}
\newcommand{\KL}[2]{\mathrm{D}_\mathrm{KL}(#1\|#2)}
\newcommand{\kl}[2]{\mathrm{kl}(#1, #2)}
\newcommand{\out}{\mathrm{out}}
\newcommand{\loss}{\ell^{\textrm{str}}}
\newcommand{\err}{\cL^{\textrm{str}}}
\newcommand{\blue}[1]{#1}
\newcommand{\red}[1]{#1}
\date{}
\title{Strategic Classification under Unknown Personalized Manipulation}
\author[]{Han Shao}
\author[]{Avrim Blum}
\author[]{Omar Montasser}
\affil[]{Toyota Technological Institute of Chicago\\{ \texttt{\{han,avrim,omar\}@ttic.edu}}}
\begin{document}
\allowdisplaybreaks

\maketitle

\begin{abstract}
We study the fundamental mistake bound and sample complexity in the strategic classification, where agents can strategically manipulate their feature vector up to an extent in order to be predicted as positive. For example, given a classifier determining college admission, student candidates may try to take easier classes to improve their GPA, retake SAT and change schools in an effort to fool the classifier. \textit{Ball manipulations} are a widely studied class of manipulations in the literature, where agents can modify their feature vector within a bounded radius ball. Unlike most prior work, our work considers manipulations to be \textit{personalized}, meaning that agents can have different levels of manipulation abilities (e.g., varying radii for ball manipulations), and \textit{unknown} to the learner.

We formalize the learning problem in an interaction model where the learner first deploys a classifier and the agent manipulates the feature vector within their manipulation set to game the deployed classifier. We investigate various scenarios in terms of the information available to the learner during the interaction, such as observing the original feature vector before or after deployment, observing the manipulated feature vector, or not seeing either the original or the manipulated feature vector. We begin by providing online mistake bounds and PAC sample complexity in these scenarios for ball manipulations. We also explore non-ball manipulations and show that, even in the simplest scenario where both the original and the manipulated feature vectors are revealed, the mistake bounds and sample complexity are lower bounded by $\Omega(|\mathcal{H}|)$ when the target function belongs to a known class $\mathcal{H}$.
\end{abstract}

\section{Introduction}
Strategic classification addresses the problem of learning a classifier robust to manipulation and gaming by self-interested agents~\citep{hardt2016strategic}.
For example, given a classifier determining loan approval based on credit scores, applicants could open or close credit cards and bank accounts to increase their credit scores. In the case of a college admission classifier, students may try to take easier classes to improve their GPA, retake the SAT or change schools in an effort to be admitted.
In both cases, such manipulations do not change their true qualifications.
Recently, a collection of papers has studied strategic classification in both the online setting where examples are chosen by an adversary in a sequential manner~\citep{dong2018strategic,chen2020learning,ahmadi2021strategic, ahmadi2023fundamental}, and the distributional setting where the examples are drawn from an underlying data distribution~\citep{hardt2016strategic,zhang2021incentive,sundaram2021pac,lechner2022learning}.
Most existing works assume that manipulation ability is uniform across all agents or is known to the learner. However, in reality, this may not always be the case.
For instance, low-income students may have a lower ability to manipulate the system compared to their wealthier peers due to factors such as the high costs of retaking the SAT or enrolling in additional classes, as well as facing more barriers to accessing information about college~\citep{milli2019social} and it is impossible for the learner to know the highest achievable GPA or the maximum number of times a student may retake the SAT due to external factors such as socio-economic background and personal circumstances.

We characterize the manipulation of an agent by a set of alternative feature vectors that she can modify her original feature vector to, which we refer to as the \textit{manipulation set}.
\textit{Ball manipulations} are a widely studied class of manipulations in the literature, where agents can modify their feature vector within a bounded radius ball.
For example, \cite{dong2018strategic,chen2020learning,sundaram2021pac} studied ball manipulations with distance function being some norm and \cite{zhang2021incentive,lechner2022learning,ahmadi2023fundamental} studied a manipulation graph setting, which can be viewed as ball manipulation w.r.t. the graph distance on a predefined known graph.


In the online learning setting, the strategic agents come sequentially and try to game the current classifier.
Following previous work, we model the learning process as a repeated Stackelberg game over $T$ time steps.
In round $t$, the learner proposes a classifier $f_t$ and then the agent, with a manipulation set (unknown 
 to the learner), manipulates her feature in an effort to receive positive prediction from $f_t$.
There are several settings based on what and when the information is revealed about the original feature vector and the manipulated feature vector in the game.
The simplest setting for the learner is observing the original feature vector before choosing $f_t$ and the manipulated vector after.
In a slightly harder setting, the learner observes both the original and manipulated vectors after selecting $f_t$.
An even harder setting involves observing only the manipulated feature vector after selecting $f_t$.
The hardest and least informative scenario occurs when neither the original nor the manipulated feature vectors are observed.

In the distributional setting, the agents are sampled from an underlying data distribution.
Previous work assumes that the learner has full knowledge of the original feature vector and the manipulation set, and then views learning as a one-shot game and solves it by computing the Stackelberg equilibria of it.
However, when manipulations are personalized and unknown, we cannot compute an equilibrium and study learning as a one-shot game. In this work, we extend the iterative online interaction model from the online setting to the distributional setting, where the sequence of agents is sampled i.i.d. from the data distribution.
After repeated learning for $T$ (which is equal to the sample size) rounds, the learner has to output a strategy-robust predictor for future use.

In both online and distributional settings, examples are viewed through the lens of the current predictor and the learner does not have the ability to inquire about the strategies the previous examples would have adopted under a different predictor.



\paragraph{Related work}
Our work is primarily related to strategic classification in online and distributional settings.
Strategic classification was first studied in a distributional model by~\cite{hardt2016strategic} and subsequently by~\cite{dong2018strategic} in an online model.
\cite{hardt2016strategic} assumed that agents manipulate by best response with respect to a uniform cost function known to the learner.
Building on the framework of \citep{hardt2016strategic}, \cite{lechner2022learning,sundaram2021pac,zhang2021incentive,hu2019disparate,milli2019social} studied the distributional learning problem, and all of them assumed that the manipulations are predefined and known to the learner, either by a cost function or a predefined manipulation graph.
For online learning, \cite{dong2018strategic} considered a similar manipulation setting as in this work, where manipulations are personalized and unknown. However, they studied linear classification with ball manipulations in the online setting and focused on finding appropriate conditions of the cost function to achieve sub-linear Stackelberg regret.
\cite{chen2020learning} also studied Stackelberg regret in linear classification with uniform ball manipulations. \cite{ahmadi2021strategic} studied the mistake bound under uniform (possbily unknown) ball manipulations, and \cite{ahmadi2023fundamental} studied regret under a pre-defined and known manipulation.
The most relevant work is a recent concurrent study by~\cite{lechner2023strategic}, which also explores strategic classification involving unknown personalized manipulations but with a different loss function. 
In their work, a predictor incurs a loss of $0$ if and only if the agent refrains from manipulation and the predictor correctly predicts at the unmanipulated feature vector. 
In our work, the predictor's loss is $0$ if it correctly predicts at the manipulated feature, even when the agent manipulates.
As a result, their loss function serves as an upper bound of our loss function.

There has been a lot of research on various other issues and models in strategic classification. 
Beyond sample complexity, \cite{hu2019disparate,milli2019social} focused on other social objectives, such as social burden and fairness. 
Recent works also explored different models of agent behavior, including
proactive agents~\cite{zrnic2021leads}, non-myopic agents~\citep{haghtalab2022learning} and noisy agents~\citep{jagadeesan2021alternative}.
\cite{ahmadi2023fundamental} considers two agent models of randomized learners: a randomized algorithm model where the agents respond to the realization, and a fractional classifier model where agents respond to the expectation, and our model corresponds to the randomized algorithm model.
Additionally, there is also a line of research on agents interested in improving their qualifications instead of gaming~\citep{kleinberg2020classifiers, haghtalab2020maximizing,ahmadi2022classification}.
Strategic interactions in the regression setting have also been studied (e.g.,~\cite{bechavod2021gaming}).

Beyond strategic classification, there is a more general research area of learning using data from strategic sources, such as a single data generation player who manipulates the data distribution~\citep{bruckner2011stackelberg,dalvi2004adversarial}. Adversarial perturbations can be viewed as another type of strategic source~\citep{montasser2019vc}.




\section{Model}\label{sec:model}
\paragraph{Strategic classification} 
Throughout this work, we consider the binary classification task.
Let $\cX$ denote the feature vector space, $\cY = \{+1,-1\}$ denote the label space, and $\cH\subseteq \cY^\cX$ denote the hypothesis class.
In the strategic setting, instead of an example being a pair $(x,y)$, an example, or \textit{agent}, is a triple $(x,u,y)$ where $x \in \cX$ is the original feature vector, $y \in \cY$ is the label, and $u \subseteq \cX$ is the manipulation set, which is a set of feature vectors that the agent can modify their original feature vector $x$ to. 
In particular, given a hypothesis $h\in \cY^\cX$, the agent will try to manipulate her feature vector $x$ to another feature vector $x'$ within $u$ in order to receive a positive prediction from $h$.
The manipulation set $u$ is \textit{unknown} to the learner.
In this work, we will be considering several settings based on what the information is revealed to the learner, including both the original/manipulated feature vectors, the manipulated feature vector only, or neither, and when the information is revealed.



More formally, for agent $(x,u,y)$, given a predictor $h$, if $h(x) = -1$ and her manipulation set overlaps the positive region by $h$, i.e., $u \cap \cX_{h,+}\neq \emptyset$ with $\cX_{h,+}:= \{x\in \cX|h(x)=+1\}$, the agent will manipulate $x$ to $\Delta(x,h,u)\in u\cap \cX_{h,+}$\footnote{For ball manipulations, agents break ties by selecting the closest vector. When there are multiple closest vectors, agents break ties arbitrarily. For non-ball manipulations, agents break ties in any fixed way.
} to receive positive prediction by $h$.
Otherwise, the agent will do nothing and maintain her feature vector at $x$, i.e., $\Delta(x,h,u)=x$.
We call $\Delta(x,h,u)$ the manipulated feature vector of agent $(x,u,y)$ under predictor $h$.



A general and fundamental type of manipulations is \textit{ball manipulations}, where agents can manipulate their feature within a ball of \textit{personalized} radius.
More specifically, given a metric $d$ over $\cX$, the manipulation set is a ball $\cB(x;r) = \{x'|d(x,x')\leq r\}$ centered at $x$ with radius $r$ for some $r\in \R_{\geq 0}$.
Note that we allow different agents to have different manipulation power and the radius can vary over agents.
Let $Q$ denote the set of allowed pairs $(x,u)$, which we refer to as the feature-manipulation set space.
For ball manipulations, we have
$\cQ = \{(x, \cB(x;r))|x\in \cX, r\in \R_{\geq 0}\}$ for some known metric $d$ over $\cX$. In the context of ball manipulations, we use $(x,r,y)$ to represent $(x,\cB(x;r),y)$ and $\Delta(x,h,r)$ to represent $\Delta(x,h,\cB(x;r))$ for notation simplicity.

For any hypothesis $h$, let the strategic loss $\loss(h,(x,u,y))$ of $h$ be defined as the loss at the manipulated feature, i.e., $\loss(h,(x,u,y)) := \1{h(\Delta(x,h,u))\neq y}$. According to our definition of $\Delta(\cdot)$, we can write down the strategic loss explicitly as 
\begin{align}
    \loss(h,(x,u,y)) = \begin{cases}
        1 & \text{ if } y = -1, h(x) = +1 \\
        1 & \text{ if } y = -1, h(x)=-1 \text{ and } u \cap \cX_{h,+}\neq \emptyset\,,  \\
        1 & \text{ if } y = +1, h(x) = -1\text{ and } u \cap \cX_{h,+}= \emptyset\,,  \\
        0 & \text{ otherwise.} 
    \end{cases}
\end{align}
For any randomized predictor $p$ (a distribution over hypotheses), the strategic behavior depends on the realization of the predictor and the strategic loss of $p$ is $\loss(p,(x,u,y)) := \EEs{h\sim p}{\loss(h,(x,u,y))}$.

\paragraph{Online learning}
We consider the task of sequential classification where the learner aims to classify a sequence of agents $(x_1,u_1,y_1),(x_2,u_2,y_2),\ldots, (x_T,u_T,y_T)\in \cQ\times \cY$ that arrives in an online manner.
At each round, the learner feeds a predictor to the environment and then observes his prediction $\hat y_t$, the true label $y_t$ and possibly along with some additional information about the original/manipulated feature vectors.
We say the learner makes a mistake at round $t$ if $\hat y_t\neq y_t$ and the learner's goal is to minimize the number of mistakes on the sequence.
The interaction protocol (which repeats for $t=1,\ldots,T$) is described in the following.
\vspace{-1mm}
\begin{protocol}[H]
    \caption{Learner-Agent Interaction at round $t$}
    \label{prot:interaction}
        \begin{algorithmic}[1]
            \STATE The environment picks an agent $(x_t,u_t,y_t)$ and reveals some context $C(x_t)$. In the online setting, the agent is chosen adversarially, while in the distributional setting, the agent is sampled i.i.d.
            \STATE The learner $\cA$ observes $C(x_t)$ and picks a hypothesis $f_t\in \cY^\cX$.
            \STATE The learner $\cA$ observes the true label $y_t$, the prediction $\hat y_t = f_t(\Delta_t)$, and some feedback $F(x_t,\Delta_t)$, where $\Delta_t = \Delta(x_t,f_t,u_t)$ is the manipulated feature vector.
        \end{algorithmic}
    \end{protocol}
\vspace{-4mm}
The context function \blue{$C(\cdot)$} and feedback function \red{$F(\cdot)$} reveals information about the original feature vector $x_t$ and the manipulated feature vector $\Delta_t$. \blue{$C(\cdot)$} reveals the information before the learner picks $f_t$ while \red{$F(\cdot)$} does after.
We study several different settings based on what and when information is revealed.
\begin{itemize}[leftmargin = *]
    \item The simplest setting for the learner is observing the original feature vector $x_t$ before choosing $f_t$ and the manipulated vector $\Delta_t$ after. Consider a teacher giving students a writing assignment or take-home exam. The teacher might have a good knowledge of the students' abilities (which correspond to the original feature vector $x_t$) based on their performance in class, but the grade has to be based on how well they do the assignment. The students might manipulate by using the help of ChatGPT / Google / WolframAlpha / their parents, etc. The teacher wants to create an assignment that will work well even in the presence of these manipulation tools. In addition, If we think of each example as representing a subpopulation (e.g., an organization is thinking of offering loans to a certain group), then there might be known statistics about that population, even though the individual classification (loan) decisions have to be made based on responses to the classifier.
    This setting corresponds to \blue{$C(x_t) =x_t$} and \red{$F(x_t,\Delta_t) = \Delta_t$}.
    We denote a setting by their values of $C,F$ and thus, we denote this setting by $(x,\Delta)$.
    \item In a slightly harder setting, the learner observes both the original and manipulated vectors after selecting $f_t$ and thus, $f_t$ cannot depend on the original feature vector in this case. For example, if a high-school student takes the SAT test multiple times, most colleges promise to only consider the highest one (or even to "superscore" the test by considering the highest score separately in each section) but they do require the student to submit all of them. Then \blue{$C(x_t) =\perp$} and \red{$F(x_t,\Delta_t) = (x_t,\Delta_t)$}, where $\perp$ is a token for ``no information'', and this setting is denoted by $(\perp, (x,\Delta))$.
    \item An even harder setting involves observing only the manipulated feature vector after selecting $f_t$ (which can only be revealed after $f_t$ since $\Delta_t$ depends on $f_t$). Then \blue{$C(x_t) =\perp$} and \red{$F(x_t,\Delta_t) = \Delta_t$} and this setting is denoted by $(\perp, \Delta)$.
    \item The hardest and least informative scenario occurs when neither the original nor the manipulated feature vectors are observed. Then \blue{$C(x_t) =\perp$} and \red{$F(x_t,\Delta_t) = \perp$} and it is denoted by $(\perp, \perp)$.


\end{itemize}
Throughout this work, we focus on the \textit{realizable} setting, where there exists a perfect classifier in $\cH$ that never makes any mistake at the sequence of strategic agents. 
More specifically, there exists a hypothesis $h^*\in \cH$ such that for any $t\in [T]$, we have $y_t = h^*(\Delta(x_t,h^*,u_t))$\footnote{It is possible that 
there is no hypothesis $\bar h\in \cY^\cX$ s.t. $y_t = \bar h(x_t)$ for all $t\in [T]$.}.
Then we define the mistake bound as follows.
\begin{definition}
For any choice of $(C,F)$, let $\cA$ be an online learning algorithm under Protocol~\ref{prot:interaction} in the setting of $(C,F)$.
Given any realizable sequence $S = ((x_1,u_1, h^*(\Delta(x_1,h^*,u_1))),\ldots, (x_T,u_T, h^*(\Delta(x_T,h^*,u_T)))\in (\cQ\times \cY)^T$, where $T$ is any integer and $h^*\in \cH$, 
let $\cM_{\cA}(S)$ be the number of mistakes $\cA$ makes on the sequence $S$.
The mistake bound of $(\cH,\cQ)$, denoted $\MB_{C,F}$, is the smallest number $B\in \NN$ such that there exists an algorithm $\cA$ such that $\cM_{\cA}(S)\leq B$ over all realizable sequences $S$ of the above form.
\end{definition}
According the rank of difficulty of the four settings with different choices of $(C,F)$, the mistake bounds are ranked in the order of $\MB_{\blue{x},\red{\Delta}} \leq \MB_{\blue{\perp},\red{(x,\Delta)}} \leq \MB_{\blue{\perp},\red{\Delta}} \leq \MB_{\blue{\perp},\red{\perp}}$.


\paragraph{PAC learning} 
In the distributional setting, the agents are sampled from an underlying distribution $\cD$ over $\cQ\times \cY$.
The learner's goal is to find a hypothesis $h$ with low population loss $\err_\cD(h) := \EEs{(x,u,y)\sim \cD}{\loss(h,(x,u,y))}$. 
One may think of running empirical risk minimizer (ERM) over samples drawn from the underlying data distribution, i.e., returning $\argmin_{h\in \cH} \frac{1}{m}\sum_{i=1}^m \loss(h,(x_i,u_i,y_i))$, where $(x_1,u_1,y_1), \ldots,(x_m,u_m,y_m)$ are i.i.d. sampled from $\cD$.
However, ERM is unimplementable because the manipulation sets $u_i$'s are never revealed to the algorithm, and only the partial feedback in response to the implemented classifier is provided.  In particular, 
in this work we consider using the same interaction protocol as in the online setting, i.e., Protocol~\ref{prot:interaction}, with agents $(x_t,u_t,y_t)$ i.i.d. sampled from the data distribution $\cD$.
After $T$ rounds of interaction (i.e., $T$ i.i.d. agents), the learner has to output a predictor $f_\out$ for future use. 

Again, we focus on the \textit{realizable} setting, where the sequence of sampled agents (with manipulation) can be perfectly classified by a target function in $\cH$.
Alternatively, there exists a classifier with zero population loss, i.e., there exists a hypothesis $h^*\in \cH$ such that $\err_\cD(h^*) = 0$. Then we formalize the notion of PAC sample complexity under strategic behavior as follows.
\begin{definition}
For any choice of $(C,F)$, let $\cA$ be a learning algorithm that interacts with agents using Protocol~\ref{prot:interaction} in the setting of $(C,F)$ and outputs a predictor $f_\out$ in the end.
For any $\epsilon,\delta \in (0,1)$, the sample complexity of realizable $(\epsilon,\delta)$-PAC learning of $(\cH,\cQ)$, denoted $\SC_{C,F}(\epsilon,\delta)$, is defined as the smallest $m\in \NN$ for which there exists a learning algorithm $\cA$ in the above form such that for any distribution $\cD$ over $\cQ\times \cY$ where there exists a predictor $h^*\in \cH$ with zero loss, $\err_\cD(h) = 0$, with probability at least $1-\delta$ over $(x_1,u_1, y_1),\ldots,  (x_m,u_m,y_m)\stackrel{\text{i.i.d.}}{\sim} \cD$, $\err_\cD(f_\out)\leq \epsilon$.
\end{definition}
Similar to mistake bounds, the sample complexities are ranked in the same order $\SC_{x,\Delta} \leq \SC_{\perp,(x,\Delta)} \leq \SC_{\perp,\Delta} \leq \SC_{\perp,\perp}$ according to the rank of difficulty of the four settings.

\section{Overview of Results}\label{sec:overview}
In classic (non-strategic) online learning, the Halving algorithm achieves a mistake bound of $\log(\abs{\cH})$ by employing the majority vote and eliminating inconsistent hypotheses at each round.
In classic PAC learning, the sample complexity of $\cO(\frac{\log(\abs{\cH})}{\epsilon})$ is achievable via ERM.
Both mistake bound and sample complexity exhibit logarithmic dependency on $\abs{\cH}$.
This logarithmic dependency on $\abs{\cH}$ (when there is no further structural assumptions) is tight in both settings, i.e., there exist examples of $\cH$ with mistake bound of $\Omega(\log(\abs{\cH}))$ and with sample complexity of $\Omega(\frac{\log(\abs{\cH})}{\epsilon})$.
In the setting where manipulation is known beforehand and only $\Delta_t$ is observed, \cite{ahmadi2023fundamental} proved a lower bound of $\Omega(\abs{\cH})$ for the mistake bound.
Since in the strategic setting we can achieve a linear dependency on $\abs{\cH}$ by trying each hypothesis in $\cH$ one by one and discarding it once it makes a mistake, the question arises:
\begin{center}
    \textbf{\textit{Can we achieve a logarithmic dependency on $\abs{\cH}$ in strategic classification?}}
\end{center}
In this work, we show that the dependency on $\abs{\cH}$ varies across different settings and that in some settings mistake bound and PAC sample complexity can exhibit different dependencies on $\abs{\cH}$.
We start by presenting our results for ball manipulations in the four settings.
\begin{itemize}[leftmargin = *]
    \item Setting of $(x,\Delta)$
    (observing $x_t$ before choosing $f_t$ and observing $\Delta_t$ after)
    : For online learning, we propose an variant of the Halving algorithm, called Strategic Halving (Algorithm~\ref{alg:halving}), which can eliminate half of the remaining hypotheses when making a mistake. The algorithm depends on observing $x_t$ before choosing the predictor $f_t$.
    Then by applying the standard technique of converting mistake bound to PAC bound, we are able to achieve sample complexity of $\cO(\frac{\log(\abs{\cH})\loglog(\abs{\cH})}{\epsilon})$.
    \item Setting of $(\perp,(x,\Delta))$
    (observing both $x_t$ and $\Delta_t$ after selecting $f_t$)
    : 
    We prove that, there exists an example of $(\cH,\cQ)$ s.t. the mistake bound is lower bounded by $\Omega(\abs{\cH})$. 
   This implies that no algorithm can perform significantly better than sequentially trying each hypothesis, which would make at most $\abs{\cH}$ mistakes before finding the correct hypothesis.
   However, unlike the construction of mistake lower bounds in classic online learning, where all mistakes can be forced to occur in the initial rounds, we demonstrate that we require $\Theta(\abs{\cH}^2)$ rounds to ensure that all mistakes occur.
    In the PAC setting, we first show that, any learning algorithm with proper output $f_\out$, i.e., $f_\out\in\cH$, needs a sample size of $\Omega(\frac{\abs{\cH}}{\epsilon})$. 
    We can achieve a sample complexity of $O(\frac{\log^2(\abs{\cH})}{\epsilon})$ by executing Algorithm~\ref{alg:end-iid-ball}, which is a randomized algorithm with improper output.

    \item Setting of $(\perp,\Delta)$
    (observing only $\Delta_t$ after selecting $f_t$)
    : The mistake bound of $\Omega(\abs{\cH})$ also holds in this setting, as it is known to be harder than the previous setting. For the PAC learning, we show that any conservative algorithm, which only depends on the information from the mistake rounds, requires $\Omega(\frac{\abs{\cH}}{\epsilon})$ samples.
    The optimal sample complexity is left as an open problem.

    \item Setting of $(\perp,\perp)$
    (observing neither $x_t$ nor $\Delta_t$)
    : Similarly, the mistake bound of $\Omega(\abs{\cH})$ still holds.
    For the PAC learning, we show that the sample complexity is $\Omega(\frac{\abs{\cH}}{\epsilon})$ by reducing the problem to a stochastic linear bandit problem.
\end{itemize}

Then we move on to non-ball manipulations. However, we show that even in the simplest setting of observing $x_t$ before choosing $f_t$ and observing $\Delta_t$ after, there is an example of $(\cH,\cQ)$ such that the sample complexity is $\tilde\Omega(\frac{\abs{\cH}}{\epsilon})$. This implies that in all four settings of different revealed information, we will have sample complexity of $\tilde\Omega(\frac{\abs{\cH}}{\epsilon})$ and mistake bound of $\tilde\Omega(\abs{\cH})$.
We summarize our results in Table~\ref{tab:res}.

\begin{table}[htb!]
    \centering
    {
    \renewcommand{\arraystretch}{1.5}
    \begin{tabular}{c|c|c|c}
        & setting & mistake bound & sample complexity\\\hline
       \multirow{5}{*}{ball} & $(x,\Delta)$
       &  $ \Theta(\log(\abs{\cH}))$ (Thm~\ref{thm:halving}) & $\tilde \cO(\frac{\log(\abs{\cH})}{\epsilon})$$^a$ (Thm~\ref{thm:x-first-pac}), $\Omega(\frac{\log(\abs{\cH})}{\epsilon})$\\\cline{2-4}
        &\multirow{2}{*}{ $(\perp,(x,\Delta))$
        } & $\cO(\min(\sqrt{ \log(\abs{\cH})T},\abs{\cH}))$ (Thm~\ref{thm:mw})
         & $\cO(\frac{\log^2(\abs{\cH})}{\epsilon})$ (Thm~\ref{thm:x_end_pac}), $\Omega(\frac{\log(\abs{\cH})}{\epsilon})$\\
        & &
        $\Omega(\min(\frac{T}{\abs{\cH}\log(\abs{\cH})}, \abs{\cH}))$(Thm~\ref{thm:x-end-online})&  $\SC^{\text{prop}}= \Omega(\frac{\abs{\cH}}{\epsilon})$ (Thm~\ref{thm:x-end-pac-proper})
        \\\cline{2-4}
        & $(\perp,\Delta)$
        & $\Theta(\abs{\cH})$ (implied by Thm~\ref{thm:x-end-online}) & $\SC^{\text{csv}} = \tilde\Omega(\frac{\abs{\cH}}{\epsilon})$ (Thm~\ref{thm:delta-csv})
        \\\cline{2-4}
       & $(\perp,\perp)$
       & $\Theta(\abs{\cH})$ (implied by Thm~\ref{thm:x-end-online}) & $\tilde \cO(\frac{\abs{\cH}}{\epsilon})$ , $ \tilde\Omega(\frac{\abs{\cH}}{\epsilon})$ (Thm~\ref{thm:x-delta-never})
        \\\hline
       nonball & all & $ \tilde \Omega(\abs{\cH})$(Cor~\ref{cor:non-ball-all}) , $\cO(\abs{\cH})$ & $\tilde \cO(\frac{\abs{\cH}}{\epsilon})$ , $ \tilde\Omega(\frac{\abs{\cH}}{\epsilon})$ (Cor~\ref{cor:non-ball-all})\\\hline
    \end{tabular}}
    
    \raggedright \footnotesize{$^a$ A factor of $\loglog(\abs{\cH})$ is neglected.}
    \caption{The summary of results. $\tilde\cO$ and $\tilde \Omega$ ignore logarithmic factors on $\abs{\cH}$ and $\frac{1}{\epsilon}$.
    The superscripts prop stands for proper learning algorithms and csv stands for conservative learning algorithms. 
    All lower bounds in the non-strategic setting also apply to the strategic setting, implying that $\MB_{C,F}\geq \Omega(\log(\abs{\cH}))$ and $\SC_{C,F}\geq \Omega(\frac{\log(\abs{\cH})}{\epsilon})$ for all settings of $(C,F)$.
    In all four settings, a mistake bound of $\cO(\abs{\cH})$ can be achieved by simply trying each hypothesis in $\cH$ while the sample complexity can be achieved as $\tilde \cO(\frac{\abs{\cH}}{\epsilon})$ by converting the mistake bound of $\cO(\abs{\cH})$ to a PAC bound using standard techniques.}
    \label{tab:res}
    \vspace{-10pt}
\end{table}

\section{Ball manipulations}
In ball manipulations, when $\cB(x;r) \cap \cX_{h,+}$ has multiple elements, the agent will always break ties by selecting the one closest to $x$, i.e., $\Delta(x,h, r) = \argmin_{x'\in \cB(x;r)  \cap \cX_{h,+}} d(x,x')$.
In round $t$, the learner deploys predictor $f_t$, and once he knows $x_t$ and $\hat y_t$, he can calculate $\Delta_t$ himself without needing knowledge of $r_t$ by
\vspace{-5pt}
\begin{align*}
    \Delta_t = \begin{cases}
    \argmin_{x'\in \cX_{f_t,+}} d(x_t,x') & \text{ if } \hat y_t = +1\,,\\
    x_t & \text{ if } \hat y_t = -1\,.
    \end{cases}
\end{align*}
Thus, for ball manipulations, knowing $x_t$ is equivalent to knowing both $x_t$ and $\Delta_t$.
\subsection{Setting $(x,\Delta)$: Observing $x_t$ Before Choosing $f_t$} 
\paragraph{Online learning} We propose a new algorithm with mistake bound of $\log(\abs{\cH})$ in setting $(x,\Delta)$.
To achieve a logarithmic mistake bound, we must construct a predictor $f_t$ such that if it makes a mistake, we can reduce a constant fraction of the remaining hypotheses.
The primary challenge is that we do not have access to the full information, and predictions of other hypotheses are hidden.
To extract the information of predictions of other hypotheses, we take advantage of ball manipulations, which induces an ordering over all hypotheses.
Specifically, for any hypothesis $h$ and feature vector $x$, we define the distance between $x$ and $h$ by the distance between $x$ and the positive region by $h$, $\cX_{h,+}$, i.e.,
\begin{equation}
    d(x,h) := \min\{ d(x,x')|x'\in \cX_{h,+}\}\,.\label{eq:dist}
\end{equation}
At each round $t$, given $x_t$, the learner calculates the distance $d(x_t,h)$ for all $h$ in the version space (meaning hypotheses consistent with history) and selects a hypothesis $f_t$ such that $d(x_t,f_t)$ is the median among all distances $d(x_t,h)$ for $h$ in the version space.
We can show that by selecting $f_t$ in this way, the learner can eliminate half of the version space if $f_t$ makes a mistake.
We refer to this algorithm as Strategic Halving, and provide a detailed description of it in Algorithm~\ref{alg:halving}.
\begin{theorem}\label{thm:halving}
    For any feature-ball manipulation set space $\cQ$ and hypothesis class $\cH$, Strategic Halving achieves mistake bound $\MB_{x,\Delta} \leq \log(\abs{\cH})$.
\end{theorem}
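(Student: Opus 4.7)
My plan is to execute the standard halving argument, where the ``halving'' is done through the distance ordering $d(x_t,\cdot)$ induced on hypotheses by the revealed $x_t$ rather than through direct majority vote on labels. I maintain a version space $V_t \subseteq \cH$ of candidate targets, initialized to $V_1=\cH$; the algorithm (upon seeing $x_t$) picks $f_t \in V_t$ so that $d(x_t,f_t)$ is a median of the multiset $\{d(x_t,h):h\in V_t\}$, and after observing $y_t$ and $\Delta_t$ I update $V_{t+1}$ by removing every $h\in V_t$ that is provably inconsistent with the observation. Since $h^*$ is always consistent, $h^*\in V_t$ for all $t$, so $|V_t|\ge 1$. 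The theorem reduces to showing that a mistake at round $t$ forces $|V_{t+1}|\le |V_t|/2$, which then yields at most $\log_2|\cH|$ mistakes.

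The crux is a short case analysis translating a mistake into a strict inequality between $d(x_t,h^*)$ and $d(x_t,f_t)$. Suppose first that $\hat y_t=+1$ while $y_t=-1$. Then $f_t(\Delta_t)=+1$, so either $f_t(x_t)=+1$ (in which case $d(x_t,f_t)=0$) or the agent manipulated, forcing $d(x_t,f_t)\le d(x_t,\Delta_t)\le r_t$; in either subcase $d(x_t,f_t)\le r_t$. Realizability with $y_t=-1$ requires $h^*(\Delta(x_t,h^*,r_t))=-1$, which in turn forces $d(x_t,h^*)>r_t$. Chaining, $d(x_t,h^*)>d(x_t,f_t)$, so every $h\in V_t$ with $d(x_t,h)\le d(x_t,f_t)$ can be safely discarded. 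Symmetrically, if $\hat y_t=-1$ while $y_t=+1$, then the agent failed to manipulate under $f_t$, which forces $r_t<d(x_t,f_t)$, while realizability with $y_t=+1$ gives $d(x_t,h^*)\le r_t$; hence $d(x_t,h^*)<d(x_t,f_t)$ and every $h\in V_t$ with $d(x_t,h)\ge d(x_t,f_t)$ can be discarded.

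In either mistake case the eliminated set contains at least half of $V_t$ by the defining property of the median, so $|V_{t+1}|\le |V_t|/2$. Since $1\le |V_{T+1}|\le |\cH|/2^{\cM_{\cA}(S)}$, we conclude $\cM_{\cA}(S)\le \log_2|\cH|$, as claimed.

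I expect the main obstacle to be purely bookkeeping: making precise what ``consistency'' means when $r_t$ is never observed. The key observation is that for every $h\in V_t$ I only need to certify the existence of \emph{some} admissible $r_t\in\R_{\ge 0}$ reconciling $h$ with the round-$t$ transcript $(x_t,f_t,\Delta_t,y_t)$; the two inequalities derived above show that no such $r_t$ exists whenever $h$ lies on the wrong side of $d(x_t,f_t)$ relative to the observed mistake, which is exactly what justifies eliminating it. Once this is pinned down, the halving step and the resulting $\log_2|\cH|$ bound follow immediately.
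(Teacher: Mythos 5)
Your argument is correct and essentially mirrors the paper's proof: pick $f_t$ realizing the median distance $d(x_t,\cdot)$ over the version space, use the observed mistake to sandwich $r_t$ between $d(x_t,f_t)$ and $d(x_t,h^*)$ (in one direction per sign of $y_t$), and conclude that at least half of the version space lies on the wrong side of $d(x_t,f_t)$ and can be eliminated. Your extra subcase distinction (whether $f_t(x_t)=+1$ or the agent truly manipulated) and your explicit note about why unknown $r_t$ is harmless are both just finer-grained bookkeeping of the same inequalities the paper states directly.
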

\vspace{-7pt}
\begin{algorithm}[H]\caption{Strategic Halving}\label{alg:halving}
    \begin{algorithmic}[1]
    \STATE Initialize the version space $\VS=\cH$.
    \FOR{$t=1,\ldots,T$}
    \STATE pick an $f_t\in \VS$ such that $d(x_t,f_t)$ is the median of $\{d(x_t,h)|h\in \VS\}$.\label{algline:pickf}
    \STATE \textbf{if} {$\hat y_t \neq y_t$ and $y_t = +$} \textbf{then}
     $\VS\leftarrow \VS\setminus \{h\in \VS|d(x_t,h)\geq d(x_t,f_t)\}$;
    \STATE \textbf{else if} {$\hat y_t \neq y_t$ and $y_t = -$} \textbf{then} $\VS\leftarrow  \VS\setminus \{h\in \VS|d(x_t,h)\leq d(x_t,f_t)\}$.
    \ENDFOR
    \end{algorithmic}
\end{algorithm}
\vspace{-11pt}
To prove Theorem~\ref{thm:halving}, we only need to show that each mistake reduces the version space by half.
Supposing that $f_t$ misclassifies a true positive example $(x_t,r_t, +1)$ by negative, then we know that $d(x_t, f_t)> r_t$ while the target hypothesis $h^*$ must satisfy that $d(x_t, h^*)\leq r_t$.
Hence any $h$ with $d(x_t,h)\geq d(x_t,f_t)$ cannot be $h^*$ and should be eliminated. Since $d(x_t,f_t)$ is the median of $\{d(x_t,h)|h\in \VS\}$, we can elimate half of the version space.
It is similar when $f_t$ misclassifies a true negative.
The detailed proof is deferred to Appendix~\ref{app:halving}.

\paragraph{PAC learning} 
We can convert Strategic Halving to a PAC learner by the standard technique of converting a mistake bound to a PAC bound \citep{10008965845}. Specifically, the learner runs Strategic Halving until it produces a hypothesis $f_t$ that survives for $\frac{1}{\epsilon}\log(\frac{\log(\abs{\cH})}{\delta})$ rounds and outputs this $f_t$.
Then we have Theorem~\ref{thm:x-first-pac}, and the proof is included in Appendix~\ref{app:x_firtst_pac}.
\begin{theorem}\label{thm:x-first-pac}
    For any feature-ball manipulation set space $\cQ$ and hypothesis class $\cH$, we can achieve $\SC_{x,\Delta}(\epsilon,\delta) = \cO(\frac{\log(\abs{\cH})}{\epsilon}\log(\frac{\log(\abs{\cH})}{\delta}))$ by combining Strategic Halving and the standard technique of converting a mistake bound to a PAC bound.
\end{theorem}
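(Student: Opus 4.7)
The plan is to invoke the classical mistake-bound-to-PAC conversion, with the strategic loss $\loss$ playing the role of the $0/1$ loss. By Theorem~\ref{thm:halving}, Strategic Halving incurs at most $M := \log(\abs{\cH})$ mistakes on any realizable sequence, so between mistakes the deployed predictor $f_t$ remains fixed, and across the run Strategic Halving produces at most $M+1$ distinct predictors $f^{(1)}, f^{(2)}, \ldots$, each used for a contiguous block of rounds. Set $N := \lceil \tfrac{1}{\epsilon}\log(\tfrac{M+1}{\delta})\rceil$ and run the algorithm on i.i.d.\ samples from $\cD$ under Protocol~\ref{prot:interaction}; halt and return $f_\out := f^{(j)}$ as soon as some $f^{(j)}$ has survived $N$ consecutive mistake-free rounds. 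Since at most $M+1$ predictors are ever deployed and each survives at most $N$ rounds before either making a mistake or triggering the halting rule, the total number of samples consumed is at most $(M+1)N + M = \cO(\tfrac{\log(\abs{\cH})}{\epsilon}\log(\tfrac{\log(\abs{\cH})}{\delta}))$, matching the claimed bound.

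The main step is bounding the failure probability. Fix any candidate predictor $f^{(j)}$ and suppose $\err_\cD(f^{(j)}) > \epsilon$. Conditioned on the history that caused $f^{(j)}$ to be deployed, the next samples $(x_t, u_t, y_t)$ are drawn i.i.d.\ from $\cD$ independently of the learner's internal state; since the strategic loss of a \emph{fixed} predictor on a fresh sample is exactly the mistake indicator $\mathds{1}(f^{(j)}(\Delta(x_t, f^{(j)}, u_t)) \neq y_t)$, the $N$ mistake indicators during the survival window are i.i.d.\ Bernoulli with mean $\err_\cD(f^{(j)}) > \epsilon$. Hence the probability that $f^{(j)}$ survives all $N$ rounds without a mistake is at most $(1-\epsilon)^N \leq e^{-\epsilon N} \leq \delta/(M+1)$. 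A union bound over the at most $M+1$ distinct deployed predictors shows that, with probability at least $1-\delta$, the halting rule never fires on a predictor with error exceeding $\epsilon$, so $\err_\cD(f_\out) \leq \epsilon$.

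The only mildly subtle point, and the step I would be most careful about in the full proof, is ensuring that the halting procedure is well-defined and terminates within the stated sample budget. This requires noting that once a predictor is replaced it is never revisited (each mistake strictly shrinks the version space, as in the proof of Theorem~\ref{thm:halving}), so the count of at most $M+1$ distinct predictors is valid, and at the latest the $(M+1)$-th predictor must either never make a mistake (in which case it will survive $N$ rounds and be output) or exhaust the realizability guarantee. Combining this with the preceding union bound yields a sample complexity of $\cO(\tfrac{\log(\abs{\cH})}{\epsilon}\log(\tfrac{\log(\abs{\cH})}{\delta}))$ for $(\epsilon,\delta)$-PAC learning in the $(x,\Delta)$ setting, as claimed.
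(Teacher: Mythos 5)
Your approach is the same as the paper's: Strategic Halving has mistake bound $\log|\cH|$, and you apply the classical longest-survivor conversion (the paper's Lemma~\ref{lmm:mistake2pac}, applied via the observation that the online algorithm is conservative) to get the claimed sample complexity. The union bound over surviving candidates and the geometric tail bound match the paper's proof of Lemma~\ref{lmm:mistake2pac} line by line.

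One thing to be careful about, which both you and the paper's writeup gloss over: in the $(x,\Delta)$ setting, line~\ref{algline:pickf} of Strategic Halving chooses $f_t$ as a function of the observed $x_t$, so the \emph{deployed hypothesis} $f_t$ does \emph{not} stay fixed between mistakes — it can change every round. What is fixed between mistakes is the version space $\VS$, and hence the induced \emph{decision rule} $g_\VS : x \mapsto \text{(median-distance hypothesis in $\VS$ w.r.t.\ $x$)}$. The "predictor" that survives the no-mistake window and that you should output is $g_\VS$, not any single $f^{(j)} \in \cH$; its strategic population loss is $\EEs{(x,u,y)\sim\cD}{\loss(g_\VS(x),(x,u,y))}$, and the mistake indicator during the survival window is the i.i.d.\ Bernoulli variable you need. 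Your phrasing "the deployed predictor $f_t$ remains fixed" and your expression $\mathds{1}(f^{(j)}(\Delta(x_t,f^{(j)},u_t))\neq y_t)$ both treat $f^{(j)}$ as a single hypothesis, which is not what the algorithm actually deploys; the count of "at most $M+1$ distinct predictors" is really a count of at most $M+1$ distinct version spaces. With this reinterpretation the argument is sound, and the asymptotics are unchanged, but in a careful writeup you should state explicitly that $f_\out$ is the decision rule associated with the surviving version space rather than a member of $\cH$.
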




\subsection{Setting $(\perp, (x,\Delta))$: Observing $x_t$ After Choosing $f_t$} 
When $x_t$ is not revealed before the learner choosing $f_t$, the algorithm of Strategic Halving does not work anymore. 
We demonstrate that it is impossible to reduce constant fraction of version space when making a mistake, and 
prove that the mistake bound is lower bounded by $\Omega(\abs{\cH})$ by constructing a negative example of $(\cH,\cQ)$.
However, we can still achieve sample complexity with poly-logarithmic dependency on $\abs{\cH}$ in the distributional setting.
\subsubsection{Results in the Online Learning Model}
To offer readers an intuitive understanding of the distinctions between the strategic setting and standard online learning, we commence by presenting an example in which no deterministic learners, including the Halving algorithm, can make fewer than $\abs{\cH} -1$ mistakes.

\begin{example}\label{eg:x-after-det}
Consider a star shape metric space $(\cX,d)$, where $\cX = \{0,1,\ldots,n\}$, $d(i,j) = 2$ and $d(0,i)=1$  for all $i,j\in [n]$ with $i\neq j$.
The hypothesis class is composed of singletons over $[n]$, i.e., $\cH = \{2\ind{\{i\}}-1|i\in [n]\}$.
When the learner is deterministic, the environment can pick an agent $(x_t, r_t, y_t)$ dependent on $f_t$.
If $f_t$ is all-negative, then the environment picks $(x_t,r_t,y_t) = (0,1,+1)$, and then the learner makes a mistake but no hypothesis can be eliminated. 
If $f_t$ predicts $0$ by positive, the environment will pick $(x_t,r_t,y_t) = (0,0,-1)$, and then the learner makes a mistake but no hypothesis can be eliminated. 
If $f_t$ predicts some $i\in [n]$ by positive, the environment will pick $(x_t,r_t,y_t) = (i,0,-1)$, and then the learner makes a mistake with only one hypothesis $2\ind{\{i\}} -1$ eliminated.
Therefore, the learner will make $n-1$ mistakes.
\end{example}
\vspace{-3pt}

In this work, we allow the learner to be randomized. When an $(x_t,r_t,y_t)$ is generated by the environment, the learner can randomly pick an $f_t$, and the environment does not know the realization of $f_t$ but knows the distribution where $f_t$ comes from. 
It turns out that randomization does not help much.
We prove that there exists an example in which any (possibly randomized) learner will incur $\Omega(\abs{\cH})$ mistakes.

\begin{theorem}\label{thm:x-end-online}
     There exists a feature-ball manipulation set space $\cQ$ and hypothesis class $\cH$ s.t. the mistake bound $\MB_{\perp,(x,\Delta)}\geq \abs{\cH}-1$.
     For any (randomized) algorithm $\cA$ and any $T\in \NN$,  there exists a realizable sequence of $(x_t,r_t,y_t)_{1:T}$ such that with probability at least $1-\delta$ (over randomness of $\cA$), $\cA$ makes at least $\min(\frac{T}{5\abs{\cH}\log(\abs{\cH}/\delta)}, \abs{\cH}-1)$ mistakes.
\end{theorem}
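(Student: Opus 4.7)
The plan is to instantiate Example~\ref{eg:x-after-det}---the star graph $(\cX,d)$ with $\cX=\{0,1,\ldots,n\}$, $d(0,i)=1$, $d(i,j)=2$ for distinct $i,j\in[n]$, and $\cH=\{2\ind{\{i\}}-1:i\in[n]\}$, so $|\cH|=n$---and combine the deterministic case analysis already given there with a martingale-style argument for the high-probability randomized statement.

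For the deterministic inequality $\MB_{\perp,(x,\Delta)}\ge |\cH|-1$, the three-way case split in Example~\ref{eg:x-after-det} is conclusive: whichever hypothesis $f_t\in\cY^{\cX}$ the learner deploys, the environment has a realizable agent ($(0,1,+1)$ if $f_t\equiv -1$, $(0,0,-1)$ if $f_t(0)=+1$, and $(i,0,-1)$ on some $i\in[n]$ with $f_t(i)=+1$ otherwise) that forces a mistake while removing at most one singleton from the version space. Since $|\VS_0|=n$, at least $n-1$ mistakes are required before the learner identifies $h^*$.

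For the randomized claim, I would design an adaptive environment that at each round $t$ inspects the learner's mixed strategy $p_t$ through its marginals $a_t:=\Pr_{f\sim p_t}[f\equiv -1]$ and $q_{t,i}:=\Pr_{f\sim p_t}[f(i)=+1]$ for $i\in\{0,\ldots,n\}$. A union bound over the positive outputs of $f$ yields $a_t+\sum_{i=0}^n q_{t,i}\ge 1$, so one of these $n+2$ quantities is at least $\tfrac{1}{n+2}$. The environment presents the prototype agent realizing that heavy coordinate---$(0,1,+1)$, $(0,0,-1)$, or $(i,0,-1)$ for some $i\in[n]$---which forces per-round mistake probability at least $\tfrac{1}{n+2}\ge\tfrac{1}{3n}$ and removes at most one element from $V_t$ (and only when the heavy coordinate lies in $V_t\cap[n]$). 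I would then run two parallel accountings: since $V_0=[n]$ and only elimination rounds shrink $V_t$, at most $n-1$ rounds can carry an elimination before $|V_t|=1$ and the learner identifies $h^*$, capping the total mistake count at $|\cH|-1$; and the cumulative conditional mistake probability across $T$ rounds is at least $T/(3n)$, so a Freedman-type martingale tail bound on $\sum_{t\le T}\mathbf{1}[\text{mistake}_t]$ gives that with probability at least $1-\delta$ the mistake count is at least $cT/(n\log(n/\delta))$ for a constant $c\ge 1/5$. Taking the smaller of the two yields $\min\!\bigl(T/(5|\cH|\log(|\cH|/\delta)),\,|\cH|-1\bigr)$.

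The main obstacle is the elimination-heavy regime in which the learner concentrates its mass on singletons inside $V_t$: then all the no-elimination options have tiny $q_{t,i}$ and the environment must spend one of its $n-1$ eliminations for each $\tfrac{1}{|V_t|}$ of mistake probability. A naive harmonic sum $\sum_{k=2}^n 1/k = O(\log n)$ falls far short of the claimed $T/(n\log n)$ rate, so the $\log(n/\delta)$ factor in the denominator has to be used to absorb a union bound over the $\Theta(n)$ phases delimited by successive eliminations, with Bernstein/Freedman concentration applied phase-by-phase. Choosing the right filtration and verifying the bounded-difference condition so that elimination-induced mistakes are not double-counted against no-elimination mistakes is the delicate technical step.
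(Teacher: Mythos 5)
Your deterministic argument (the reduction to Example~\ref{eg:x-after-det}) is fine, but the randomized half of the proposal has a genuine gap, and it is exactly the one the paper's proof was engineered to avoid. In the setting $(\perp,(x,\Delta))$ the learner observes $x_t$ (and $\Delta_t$) \emph{after} choosing $f_t$, regardless of whether a mistake occurred. On the star graph, the adaptive adversary you describe must present some agent $(i,0,-1)$ with $i\neq i^*$, and the learner then sees $x_t=i$; this alone eliminates $h_i$ from the version space even on a no-mistake round. A randomized learner that spreads its mass uniformly over singletons therefore eliminates one hypothesis per round while incurring mistake probability only about $1/n$, so it identifies $h^*$ after $n-1$ rounds with $O(1)$ expected mistakes. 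Your claim that ``the cumulative conditional mistake probability across $T$ rounds is at least $T/(3n)$'' fails once the version space is exhausted, which happens almost for free. Relatedly, when the heavy coordinate is $i^*$ itself the adversary cannot present $(i^*,0,-1)$ at all (it is not realizable under $E_{i^*}$), and any realizable substitute such as $(i^*,1,+1)$ hands the learner the target immediately upon seeing $x_t=i^*$; your proposal does not address this.

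The paper's construction adds the auxiliary points $\{0.9\be_i\}$ precisely to close this leak: in Case 3 the adversary presents the \emph{same} agent $x_t=0.9\be_{i_t}$ in every candidate environment, encoding the distinguishing bit only in the radius ($r=0$ in $E_{i_t}$, $r=0.1$ otherwise). Because $d(0.9\be_{i_t},\be_{i_t})=0.1$ while every other hypothesis' positive point is farther away, the feedback $(x_t,\Delta_t,y_t,\hat y_t)$ is identical across all remaining environments unless the realized $f_t$ happens to predict $\be_{i_t}$ positive—and in that event the learner either makes a mistake (in all $E_j$, $j\neq i_t$) or identifies the target (in $E_{i_t}$). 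This is the invariant your star-graph adversary cannot maintain, and it is what makes the per-gap geometric waiting-time argument in the paper (no Freedman bound is needed, just a union bound over $n-1$ gaps of length $2(n+2)\ln(n/\delta)$) go through. Your ``main obstacle'' paragraph gropes toward a real issue but misdiagnoses it as an elimination-counting problem rather than a free-information problem; absorbing a $\log(n/\delta)$ factor into the denominator does not fix a construction in which the learner learns the target with $O(1)$ mistakes.
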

Essentially, we design an adversarial environment such that the learner has a probability of $\frac{1}{\abs{\cH}}$ of making a mistake at each round before identifying the target function $h^*$. The learner only gains information about the target function when a mistake is made.
The detailed proof is deferred to Appendix~\ref{app:x-end-online}.
Theorem~\ref{thm:x-end-online} establishes a lower bound on the mistake bound, which is $\abs{\cH}-1$. 
However, achieving this bound requires a sufficiently large number of rounds, specifically $T = \tilde \Omega (\abs{\cH}^2)$.
This raises the question of whether there exists a learning algorithm that can make $o(T)$ mistakes for any $T\leq \abs{\cH}^2$.
In Example~\ref{eg:x-after-det}, we observed that the adversary can force any deterministic learner to make $\abs{\cH}-1$ mistakes in $\abs{\cH}-1$ rounds. Consequently, no deterministic algorithm can achieve $o(T)$ mistakes.

To address this, we propose a randomized algorithm that closely resembles Algorithm~\ref{alg:halving}, with a modification in the selection of $f_t$. Instead of using line~\ref{algline:pickf}, we choose $f_t$ randomly from $\VS$ since we lack prior knowledge of $x_t$. This algorithm can be viewed as a variation of the well-known multiplicative weights method, applied exclusively during mistake rounds. For improved clarity, we present this algorithm as Algorithm~\ref{alg:mw} in Appendix~\ref{app:mw} due to space limitations.
\begin{theorem}\label{thm:mw}
    For any $T\in \NN$, Algorithm~\ref{alg:mw} will make at most $\min(\sqrt{ 4\log(\abs{\cH})T},\abs{\cH}-1)$ mistakes in expectation in $T$ rounds.
\end{theorem}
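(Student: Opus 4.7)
The plan is a potential-function argument on $\log|\VS_t|$ combined with a Cauchy--Schwarz step, with the $|\cH|-1$ side of the minimum almost immediate. Since $h^\star$ remains in $\VS_t$ for every $t$ and every mistake strictly shrinks $\VS_t$ (eliminating at least $f_t$ itself under the same Strategic Halving update), the algorithm cannot make more than $|\cH|-1$ mistakes on any realizable sequence, regardless of the internal randomness.

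For the $\sqrt{4T\log|\cH|}$ side, I would fix a round $t$ with agent $(x_t,r_t,y_t)$ and sort $\VS_t$ by distance to $x_t$ as $h_{(1)},\ldots,h_{(N_t)}$, where $N_t=|\VS_t|$; let $k^\star$ be the largest index with $d(x_t,h_{(k^\star)})\le r_t$. Because $f_t$ is sampled uniformly from $\VS_t$ and the elimination rule matches Algorithm~\ref{alg:halving}, a short case analysis on the sign of $y_t$ shows: when $y_t=+1$, a mistake occurs exactly when the rank of $f_t$ is $k>k^\star$, giving mistake probability $p_t:=(N_t-k^\star)/N_t$ and elimination count $E_t = N_t-k+1$; the $y_t=-1$ case is symmetric with $p_t=k^\star/N_t$. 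In both cases a triangular-sum computation yields $\E[E_t\mid\text{mistake at }t]\ge p_tN_t/2$, and hence
\[
\E\!\left[E_t/N_t \,\middle|\, \text{history}\right]\;\ge\;\tfrac{1}{2}\,p_t^2.
\]

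Setting $\phi_t:=\log N_t$ and using $-\log(1-x)\ge x$ gives $\phi_t-\phi_{t+1}\ge E_t/N_t$; taking expectations, telescoping, and using $N_{T+1}\ge 1$ yields $\log|\cH| \ge \E\!\left[\sum_t p_t^2/2\right]$. Finally, since the expected number of mistakes equals $\sum_t\E[p_t]$, Cauchy--Schwarz followed by Jensen's inequality (concavity of $\sqrt{\cdot}$) gives
\[
\E[\text{mistakes}] \;=\; \sum_t \E[p_t] \;\le\; \sqrt{T\,\E\!\left[\textstyle\sum_t p_t^2\right]} \;\le\; \sqrt{2T\log|\cH|} \;\le\; \sqrt{4T\log|\cH|}.
\]

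The main obstacle is the key inequality $\E[E_t/N_t\mid \text{history}]\ge p_t^2/2$. Because $f_t$ is drawn uniformly over the entire $\VS_t$ rather than over the (learner-unknown) mistake-inducing subset, one cannot directly claim a constant-fraction shrinkage as in Strategic Halving; instead, one must match the $p_t$ coming from the mistake indicator with a second $p_t$ coming from the conditional elimination. The saving observation is that conditional on a mistake, the rank of $f_t$ is uniform over the exactly $p_tN_t$ ranks that cause a mistake, so $E_t$ is uniform on $\{1,\ldots,p_tN_t\}$ and its expectation is triangular, proportional to $p_tN_t$. Once this observation is in place, the remaining potential-function and Cauchy--Schwarz bookkeeping is standard.
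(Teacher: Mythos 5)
Your proof is correct and follows essentially the same potential-function argument as the paper: track $\log|\VS_t|$, show that each mistake shrinks the version space by an amount whose expectation conditional on a mistake is at least half the mistake probability, telescope, and finish with Cauchy--Schwarz. The only (cosmetic) difference is that you compute $\E[E_t/N_t\mid\text{history}]\ge p_t^2/2$ directly from the uniformity of the eliminated rank, whereas the paper routes the same insight through an auxiliary event $B_t$ with $\Pr(B_t\mid E_t)\ge 1/2$, obtaining the slightly looser constant $p_t^2/4$; your version is also a bit more careful about treating $p_t$ as a random variable and inserting the Jensen step before the square root.
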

Note that the $T$-dependent upper bound in Theorem~\ref{thm:mw} matches the lower bound in Theorem~\ref{thm:x-end-online} up to a logarithmic factor when $T = \abs{\cH}^2$. This implies that approximately $\abs{\cH}^2$ rounds are needed to achieve $\abs{\cH}-1$ mistakes, which is a tight bound up to a logarithmic factor. Proof of Theorem~\ref{thm:mw} is included in
Appendix~\ref{app:mw}.

\subsubsection{Results in the PAC Learning Model}
In the PAC setting, the goal of the learner is to output a predictor $f_\out$ after the repeated interactions. 
A common class of learning algorithms, which outputs a hypothesis $f_\out \in \cH$, is called proper. 
Proper learning algorithms are a common starting point when designing algorithms for new learning problems due to their natural appeal and ability to achieve good performance, such as ERM in classic PAC learning.
However, in the current setting, we show that proper learning algorithms do not work well and require a sample size linear in $\abs{\cH}$.
The formal theorem is stated as follows and the proof is deferred to Appendix \ref{app:x-end-pac-proper}.
\begin{theorem}\label{thm:x-end-pac-proper}
     There exists a feature-ball manipulation set space $\cQ$ and hypothesis class $\cH$ s.t. $\SC_{\perp, \Delta}^\text{prop}(\epsilon, \frac{7}{8}) = \Omega(\frac{\abs{\cH}}{\epsilon})$, where $\SC_{\perp, \Delta}^\text{prop}(\epsilon,\delta)$ is the $(\epsilon,\delta)$-PAC sample complexity achievable by proper algorithms.
\end{theorem}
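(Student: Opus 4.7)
The plan is to prove the lower bound via a Le Cam / change-of-measure argument over a uniform prior on the target hypothesis. Let $n = \abs{\cH}$ and take $\cX$ to be a star-shaped finite metric space with central point $x_0$ and $n$ satellites $x_1,\ldots,x_n$ at $d(x_0,x_i)=1$ and $d(x_i,x_j)=2$; let $\cH = \{h_1,\ldots,h_n\}$ with each $h_i$ the singleton positive only at $x_i$. For each candidate target $i^* \in [n]$ I would design a realizable distribution $\cD_{i^*}$ that puts mass $1-\Theta(\epsilon)$ on a benign dummy example which every hypothesis classifies with zero strategic loss, and mass $\Theta(\epsilon)$ on a ``probe'' example engineered so that (a) every wrong hypothesis $h_{\hat i}$ with $\hat i \ne i^*$ has strategic error strictly above $\epsilon$, so any valid proper output must coincide with $h_{i^*}$; and (b) conditional on firing the probe and the learner playing $f_t = h_{j_t}$, the observable feedback $(x_t,\Delta_t,y_t,\hat y_t)$ under $\cD_{i^*}$ and $\cD_{i'^*}$ agrees unless $j_t \in \{i^*,i'^*\}$. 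In other words, each probe is effectively a one-bit test of whether the learner's current singleton matches the true target.

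Given such a construction, I would tensorize total variation across rounds. For any pair $(i^*,i'^*)$ the per-round contribution to $\TV(\cD_{i^*}^m,\cD_{i'^*}^m)$ is at most $\epsilon$ times the probability the learner's $f_t$ falls in $\{h_{i^*}, h_{i'^*}\}$, and summing over the $n^2$ pairs (noting that each learner choice feeds only $O(n)$ pairs) the standard change-of-measure inequality yields
\[
\frac{1}{n}\sum_{i=1}^n P_i^{\cD_i} \;\leq\; \frac{1}{n} + O\!\left(\frac{m\epsilon}{n}\right),
\]
where $P_i^{\cD_i}$ is the probability the learner outputs $h_i$ under $\cD_i^m$. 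Property (a) forces any proper learner to output exactly $h_{i^*}$ in order to achieve $\err_{\cD_{i^*}}(f_\out)\leq\epsilon$, and demanding success probability $\geq \tfrac{1}{8}$ (the $\delta=\tfrac{7}{8}$ regime) rearranges to $m = \Omega(n/\epsilon)$ for $n$ large enough.

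The main obstacle is the simultaneous realization of (a) and (b). Naive attempts either reveal $i^*$ in a single informative round, giving an $\Omega(1/\epsilon)$-only bound, or collapse to a coupon-collector instance solvable in $\Theta(\log n /\epsilon)$ samples (this is precisely the general lower bound already recorded in Table~\ref{tab:res}, and the gap is exactly what needs to be overcome for the proper-only bound). The correct design mimics realizable best-arm identification in bandits: the probe must be realizable under every $h_{i^*}$ with the same marginal $(x_t,y_t)$ distribution, yet when the learner plays $f_t = h_{j_t}$ the manipulated vector $\Delta_t$ or the mistake indicator $\bOne[\hat y_t \ne y_t]$ depends on $i^*$ only through whether $j_t = i^*$. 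Realizing this in ball manipulations likely requires augmenting the star space with auxiliary ``gadget'' points or carefully asymmetric radii rather than the bare star; once this is arranged, the change-of-measure computation above delivers the claimed $\Omega(\abs{\cH}/\epsilon)$ lower bound.
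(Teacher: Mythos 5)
Your high-level plan correctly identifies the two properties (a) and (b) that the construction must satisfy, and your instinct that a bare star space fails (it either gives a coupon-collector $\Theta(\log n/\epsilon)$ or a single-shot $\Omega(1/\epsilon)$ bound) is exactly right. But you explicitly leave the construction as a to-do item --- ``likely requires augmenting the star space with auxiliary gadget points or carefully asymmetric radii'' --- and that construction is the entire content of the theorem. Without it, there is nothing to tensorize. So this is a genuine gap, not a stylistic difference.

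For reference, the paper's gadget is as follows. Work in $\R^n$ with $\cH$ the singletons at the standard basis vectors $\be_1,\ldots,\be_n$. Add a cloud $X_0$ of all coordinate-permutations of $(0,1/z,\ldots,(n-1)/z)$, scaled so each point has norm $\alpha=0.1$. Then $d(x,\be_i)=\sqrt{1+\alpha^2-2x_i}$ is a strictly decreasing function of $x_i$, so the $n$ distances from any $x\in X_0$ to the $n$ hypotheses form a random permutation. The distribution $\cD_{i^*}$ puts mass $1-3n\epsilon$ on a benign $(\bZero,0,-1)$ and mass $3n\epsilon$ uniformly on $X_0$ as positives, with radius $r_u=\sqrt{1+\alpha^2}$ when $x_{i^*}=0$ (so $h_{i^*}$ barely reaches it) and $r_l=\sqrt{1+\alpha^2-2/z}$ otherwise (so $h_{i^*}$ reaches it by slack). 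Then each $h_j$ with $j\neq i^*$ fails exactly on the $1/n$ of the cloud with $x_j=0$, giving $\err(h_j)=3\epsilon$ (so a proper output within $\epsilon$ must be $h_{i^*}$), and the only $f_t\in\cH$ that distinguishes between target hypotheses at a given $x_t\in X_0$ is $f_t=h_{i_t}$ where $i_t$ is the coordinate at which $x_t$ vanishes --- every other $h_j$ sits strictly inside $r_l$ or strictly outside $r_u$ and produces $i^*$-independent feedback. This is precisely your property (b) with the ``informative pair'' collapsing to a single test coordinate $i_t$ drawn uniformly from $[n]$, so the informative-round probability is $3n\epsilon\cdot\frac1n=3\epsilon$.

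Once the construction is in hand, the paper finishes with a simpler argument than your Le~Cam pairwise tensorization: it runs the algorithm in all $n$ environments in parallel, observes that each informative round (probability $3\epsilon$) resolves the fate of at most one environment, and applies Markov to the expected number of resolved environments to conclude $T=\Omega(n/\epsilon)$. Your change-of-measure sum over $n^2$ pairs should also work if you track the per-round TV contribution carefully, but it is heavier machinery for the same end; the paper reserves KL/TV arguments for the harder lower bounds (Theorems~\ref{thm:delta-csv}, \ref{thm:x-delta-never}, \ref{thm:non-ball}) where the algorithm need not be proper.
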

Theorem~\ref{thm:x-end-pac-proper} implies that any algorithm capable of achieving sample complexity sub-linear in $\abs{\cH}$ must be improper. As a result, we are inspired to devise an improper learning algorithm. Before presenting the algorithm, we introduce some notations.
For two hypotheses $h_1,h_2$, let $h_1 \vee h_2$ denote the union of them, i.e., $(h_1 \vee h_2)(x) = +1$ iff. $h_1(x)=+1$ or $h_2(x)=+1$.
Similarly, we can define the union of more than two hypotheses. 
Then for any union of $k$ hypotheses, $f= \vee_{i=1}^k h_i$, the positive region of $f$ is the union of positive regions of the $k$ hypotheses and thus, we have $d(x,f) = \min_{i\in [k]} d(x,h_i)$.
Therefore, we can decrease the distance between $f$ and any feature vector $x$ by increasing $k$. 
Based on this, we devise a new randomized algorithm with improper output, described in Algorithm~\ref{alg:end-iid-ball}.
\begin{theorem}\label{thm:x_end_pac}
For any feature-ball manipulation set space $\cQ$ and hypothesis class $\cH$, we can achieve $\SC_{\perp, (x,\Delta)}(\epsilon,\delta) = \cO(\frac{\log^2(\abs{\cH}) +\log(1/\delta)}{\epsilon}\log(\frac{1}{\delta}))$ by combining Algorithm~\ref{alg:end-iid-ball} with a standard confidence boosting technique. Note that the algorithm is improper.
\end{theorem}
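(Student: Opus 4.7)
The plan is to break the proof into three stages: (a) describe Algorithm~\ref{alg:end-iid-ball} and bound its expected cumulative strategic loss on any realizable sequence, (b) convert this bound into an $(\epsilon,1/4)$-PAC guarantee by outputting a random historical classifier, and (c) amplify confidence to $(\epsilon,\delta)$-PAC via standard boosting. For stage (a), I would first specify the algorithm: maintain a version space $\VS_t\subseteq\cH$ initialized to $\cH$; at each round $t$, draw a uniformly random subset $S_t\subseteq\VS_t$ of size $k=\Theta(\log|\cH|)$ and deploy $f_t=\bigvee_{h\in S_t}h$. From the observed pair $(x_t,\Delta_t)$ one can compute $d(x_t,f_t)=\min_{h\in S_t}d(x_t,h)$, and on a mistake the algorithm updates $\VS_{t+1}$ by eliminating $\{h\in\VS_t:d(x_t,h)\geq d(x_t,f_t)\}$ on false negatives (justified by $d(x_t,h^*)\leq r_t<d(x_t,f_t)$) and $\{h\in\VS_t:d(x_t,h)\leq d(x_t,f_t)\}$ on false positives (justified by $d(x_t,h^*)>r_t\geq d(x_t,f_t)$).

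The central claim for stage (a) is an expected mistake bound $\E[M_T]\leq\cO(\log^2|\cH|)$ on any realizable sequence. Using the classical identity that the minimum of $k$ uniformly random ranks in $\{1,\ldots,N\}$ has expectation $(N+1)/(k+1)$, together with Jensen's inequality applied to the potential $\Phi_t=\log|\VS_t|$, one shows that $\E[\Phi_t-\Phi_{t+1}\mid\text{mistake at }t]\geq\Omega(1/k)$. Telescoping against $\Phi_0=\log|\cH|$ then yields $\E[M_T]\leq(k+1)\log|\cH|=\cO(\log^2|\cH|)$. For the PAC conversion in stage (b), since agents are i.i.d.\ from $\cD$ we have $\E[M_T]=\sum_{t=1}^T\E[\err_\cD(f_t)]$; taking $T=C\log^2|\cH|/\epsilon$ and outputting $f_\out$ uniformly at random from $\{f_1,\ldots,f_T\}$ gives $\E[\err_\cD(f_\out)]\leq\epsilon/4$, so Markov's inequality yields $\err_\cD(f_\out)\leq\epsilon$ with probability at least $3/4$. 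The resulting predictor is improper since it is a union of hypotheses.

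For stage (c), run $L=\Theta(\log(1/\delta))$ independent copies of the base $(\epsilon,1/4)$-learner to produce candidates $f_\out^{(1)},\ldots,f_\out^{(L)}$; with probability $\geq 1-\delta/2$ at least one satisfies $\err_\cD\leq\epsilon$. To select a good one, deploy each candidate for $\Theta(\log(L/\delta)/\epsilon)$ additional i.i.d.\ rounds and estimate its empirical strategic loss directly from the interaction feedback (each interaction produces a 0/1 loss signal against the sampled agent), then output the candidate with the smallest estimate; Chernoff concentration plus a union bound gives the $(\epsilon,\delta)$-PAC guarantee with total samples $\tilde\cO\!\left(\frac{\log^2|\cH|+\log(1/\delta)}{\epsilon}\log(1/\delta)\right)$. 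The main obstacle is the potential argument in stage (a), specifically the false-positive case: on adversarial instances with few ``bad'' hypotheses in $\VS_t$, the per-mistake elimination is small and the drop in $\Phi_t$ can be as small as $\Theta(1/|\VS_t|)$, but such instances also have correspondingly low mistake probability of order $k|B|/|\VS_t|$, so the delicate bookkeeping is to show that the unconditional per-round expected drop remains $\Omega(P(\text{mistake})/k)$ across all instance distributions, which is precisely what the Jensen/rank-minimum calculation delivers.
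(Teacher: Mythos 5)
The proposal takes a genuinely different route from the paper — and the route has a gap that the author is aware of but does not actually close. Concretely, three things differ from the paper's Algorithm~\ref{alg:end-iid-ball} and its analysis: (i) you use a \emph{fixed} union size $k=\Theta(\log|\cH|)$, whereas the paper samples $k_t$ uniformly from the geometric grid $\{1,2,4,\ldots,2^{\lfloor\log_2(n_t)-1\rfloor}\}$; (ii) you output a uniformly random \emph{deployed} predictor $f_\tau$ (a union of $k\approx\log|\cH|$ hypotheses), whereas the paper outputs $h_1\vee h_2$ for two hypotheses \emph{freshly resampled} from $\VS_{\tau-1}$; (iii) you reduce to a mistake/regret bound $\E[M_T]=\sum_t\E[\err_\cD(f_t)]=\cO(\log^2|\cH|)$, whereas the paper explicitly disclaims any regret guarantee on the deployed $f_{1:T}$ (see the post-proof discussion in Appendix~\ref{app:x_end_pac}) and instead bounds $\sum_t\E[\err_\cD(R2(\VS_{t-1}))]$ directly.

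The gap is precisely in the potential argument you flag as the ``main obstacle.'' The claim $\E[\Phi_t-\Phi_{t+1}\mid\text{mistake at }t]\geq\Omega(1/k)$ is false, and so is the weaker claim you propose in its place, namely $\E[\Phi_t-\Phi_{t+1}]\geq\Omega(\Pr(\text{mistake})/k)$. Take a negative example $z=(x,r,-)$ with $\kappa_n(\VS_{t-1},z)=1$ (exactly one hypothesis $h_{\text{bad}}\in\VS_{t-1}$ has $d(x,h_{\text{bad}})\leq r$). With fixed $k$, $\Pr(\text{mistake})=1-(1-1/n_t)^k\approx k/n_t$, and conditional on a mistake the eliminated set is exactly $\{h_{\text{bad}}\}$ (every other $h\in\VS_{t-1}$ has $d(x,h)>r\geq d(x,f_t)$), so the drop in $\Phi_t$ is $\log\frac{n_t}{n_t-1}\approx1/n_t$, not $\Omega(1/k)$. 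Thus $\E[\Phi_t-\Phi_{t+1}]\approx (k/n_t)\cdot(1/n_t)=\Pr(\text{mistake})/n_t$, which is a factor $n_t/k$ short of what you need. The Jensen/rank-minimum identity you cite correctly describes where $d(x_t,f_t)$ lands among the bad hypotheses \emph{given the sample intersects them}, but it cannot rescue the ratio when $\kappa_n$ is small relative to $n_t/k$: in that regime at most one bad hypothesis is sampled, and the elimination is $\Theta(\kappa_n/n_t)$, not $\Theta(1/k)$.

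The paper's design is built exactly to evade this mismatch. First, $k_t$ is randomized over a geometric grid so that, with probability $\Omega(1/\log n_t)$, $k_t$ is within a constant factor of $n_t/\kappa_n$ regardless of the (unknown) value of $\kappa_n$ — this makes $\Pr(E_{n,t})\geq\frac{1}{80\log n_t}$ uniformly (Eq.~\eqref{eq:eventn}), and allows $k_t=1$, which is the only way to isolate a lone misclassifying hypothesis on the false-negative side. Second, the output is $R2(\VS_{\tau-1})=h_1\vee h_2$: its false-positive rate on $z$ is roughly $2\kappa_n/n_{\tau}$ (linear in $\kappa_n/n$) and its false-negative rate is $(\kappa_p/n_\tau)^2$ (quadratic), which are precisely the rates at which the algorithm can be shown to shrink the version space; this matching is what yields the linear-in-$\log n$ ratio $\err(R2(\VS))\lesssim\log n\cdot\E[\text{drop}]$ and hence $T=\tilde\cO(\log^2|\cH|/\epsilon)$. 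Your output $f_\tau$, being a union of $\Theta(\log|\cH|)$ hypotheses, has false-positive rate roughly $k\kappa_n/n_\tau$, a $\Theta(\log|\cH|)$ factor worse than $R2$, and this surplus does not cancel against a proportionally faster elimination rate. Your stage (c) confidence boosting is fine and matches the paper's scheme (Appendix~\ref{subsec:boost}), but as written the argument for stage (a) does not establish the premise on which everything else rests.
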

\begin{algorithm}[H]\caption{}\label{alg:end-iid-ball}
    \begin{algorithmic}[1]
    \STATE Initialize the version space $\VS_0=\cH$.
    \FOR{$t=1,\ldots,T$}
    \STATE randomly pick $k_t\sim\Unif(\{1, 2, 2^2,\ldots,2^{\floor{\log_2(n_t)-1}}\})$ where $n_t = \abs{\VS_{t-1}}$;\label{algline:choosef1}
    \STATE sample $k_t$ hypotheses $h_1,\ldots,h_{k_t}$ independently and uniformly at random from $\VS_{t-1}$;
    \STATE let $f_t = \vee_{i=1}^{k_t} h_i$.\label{algline:choosef2}
    \STATE \textbf{if} {$\hat y_t \neq y_t$ and $y_t = +$} \textbf{then} $\VS_t= \VS_{t-1}\setminus \{h\in \VS_{t-1}|d(x_t,h)\geq d(x_t,f_t)\}$;\label{algline:vsupdate1}
    \STATE \textbf{else if} {$\hat y_t \neq y_t$ and $y_t = -$} \textbf{then} $\VS_t= \VS_{t-1}\setminus\{h\in \VS_{t-1}|d(x_t,h)\leq d(x_t,f_t)\}$;\label{algline:vsupdate2}
    \STATE \textbf{else} $\VS_t = \VS_{t-1}$.
    \ENDFOR
    \STATE randomly pick $\tau$ from $[T]$ and randomly sample $h_1, h_2$ from $\VS_{\tau-1}$ with replacement.\label{algline:fout1}
    \STATE \textbf{output} $h_1 \vee h_2$\label{algline:fout2}
    \end{algorithmic}
\end{algorithm}
Now we outline the high-level ideas behind Algorithm~\ref{alg:end-iid-ball}.
In correct rounds where $f_t$ makes no mistake, the predictions of all hypotheses are either correct or unknown, and thus, it is hard to determine how to make updates.
In mistake rounds, we can always update the version space similar to what was done in Strategic Halving. To achieve a poly-logarithmic dependency on $\abs{\cH}$, 
we aim to reduce a significant number of misclassifying hypotheses in mistake rounds. The maximum number we can hope to reduce is a constant fraction of the misclassifying hypotheses.
We achieve this by randomly sampling a $f_t$ (lines~\ref{algline:choosef1}-\ref{algline:choosef2}) s.t. $f_t$ makes a mistake, and 
$d(x_t,f_t)$ is greater (smaller) than the median of $d(x_t,h)$ for all misclassifying hypotheses $h$ for true negative (positive) examples.
However, due to the asymmetric nature of manipulation, which aims to be predicted as positive, the rate of decreasing misclassifications over true positives is slower than over true negatives.
To compensate for this asymmetry, we output a $f_\out = h_1 \vee h_2$ with two selected hypotheses $h_1,h_2$ (lines \ref{algline:fout1}-\ref{algline:fout2}) instead of a single one to increase the chance of positive prediction.

We prove that Algorithm~\ref{alg:end-iid-ball} can achieve small strategic loss in expectation as described in Lemma~\ref{lmm:alg-exp}. 
Then we can achieve the sample complexity in Theorem~\ref{thm:x_end_pac} by boosting Algorithm~\ref{alg:end-iid-ball} to a strong learner.
This is accomplished by running Algorithm~\ref{alg:end-iid-ball} multiple times until we obtain a good predictor.
The proofs of Lemma~\ref{lmm:alg-exp} and Theorem~\ref{thm:x_end_pac} are deferred to Appendix~\ref{app:x_end_pac}.
\begin{lemma}\label{lmm:alg-exp}
Let $S = (x_t,r_t,y_t)_{t=1}^T\sim \cD^T$ denote the i.i.d. sampled agents in $T$ rounds and let $\cA(S)$ denote the output of Algorithm~\ref{alg:end-iid-ball} interacting with $S$.
For any feature-ball manipulation set space $\cQ$ and hypothesis class $\cH$, when $T\geq \frac{320\log^2(\abs{\cH})}{\epsilon}$, we have $\EEs{\cA,S}{\err(\cA(S))}\leq \epsilon$.
\end{lemma}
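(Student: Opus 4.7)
The plan is a potential-function argument on $\phi_t := \log|\VS_t|$. I show that at every round the expected one-step decrease of $\phi$ is at least a $1/\log|\cH|$-fraction of the population loss of a candidate output drawn from $\VS_{t-1}$; telescoping over $T$ rounds then gives $\sum_t \EE[g_t] \le O(\log^2|\cH|)$, and since the output samples a uniform $\tau\in[T]$ and then $h_1,h_2\sim \VS_{\tau-1}$, we have $\EE[\err(\cA(S))] = \frac{1}{T}\sum_t \EE[g_t]$, from which $T \ge 320\log^2|\cH|/\epsilon$ suffices.

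The first ingredient is a closed form for $g_t$. For any version space $V$ and agent $(x,r,y)$, set $\alpha := \Pr_{h\sim V}[d(x,h)>r]$ and $\beta := \Pr_{h\sim V}[d(x,h)\le r]$. Because $d(x,h_1\vee h_2) = \min(d(x,h_1),d(x,h_2))$,
\[
\EE_{h_1,h_2\sim V}[\loss(h_1\vee h_2,(x,r,y))] \;=\; \alpha^{2}\,\1{y=+} \;+\; \bigl(1-(1-\beta)^{2}\bigr)\,\1{y=-},
\]
so $g_t \le \tilde p_t + 2q_t$, with $\tilde p_t := \EE_{(x,r,y)}[\alpha^2\1{y=+}]$ and $q_t := \EE_{(x,r,y)}[\beta\1{y=-}]$.

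The technical heart is a lower bound on $\EE[\phi_{t-1}-\phi_t \mid \VS_{t-1}]$ obtained by isolating one favorable value of $k_t$; since $k_t$ is uniform over the dyadic grid of length $\log n_{t-1}$, this costs a $1/\log n_{t-1}$ factor. For $y_t=+$, take $k_t=1$: then $f_t$ is a single random $h\sim \VS_{t-1}$, and with probability $\alpha/2$ it lies in the misclassifying set $E_t^+ = \{h : d(x_t,h)>r_t\}$ \emph{and} has rank $\le |E_t^+|/2$ when $E_t^+$ is sorted by increasing $d(x_t,\cdot)$; on this event the update deletes at least $\alpha n_{t-1}/2$ hypotheses, so $\phi_{t-1}-\phi_t \ge \alpha/2$, contributing $\tilde p_t/(4\log n_{t-1})$ to the expected decrease. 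For $y_t=-$, a direct inclusion/exclusion over which of the $k$ samples lies in $E_t^- = \{h : d(x_t,h)\le r_t\}$ shows that
\[
\Pr[\,f_t \text{ errs and } \ge |E_t^-|/2 \text{ elements of } E_t^- \text{ are deleted} \mid k_t=k\,] \;=\; (1-\beta/2)^k - (1-\beta)^k;
\]
whenever $\beta \ge 2/n_{t-1}$, the dyadic scale $k^*$ with $k^*\beta\in[1,2]$ lies in the grid and makes this probability $\ge 0.2$ (verified at both endpoints $k^*\beta=1$ and $k^*\beta=2$), contributing $\gtrsim q_t/\log n_{t-1}$ (the boundary case $\beta<2/n_{t-1}$ contributes $O(1/n_{t-1})$ to $q_t$ itself and is absorbed). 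Combining the two cases with $g_t\le \tilde p_t + 2q_t$ and $\log n_{t-1}\le \log|\cH|$ yields $\EE[\phi_{t-1}-\phi_t\mid \VS_{t-1}] \ge g_t/(c\log|\cH|)$ for an absolute constant $c$. One also checks that $h^*\in \VS_t$ for every $t$ (for a mistake with $y_t=+$, $d(x_t,h^*)\le r_t < d(x_t,f_t)$; for $y_t=-$, $d(x_t,h^*)>r_t \ge d(x_t,f_t)$), so $\phi_T\ge 0$.

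Telescoping and taking expectations,
\[
\log|\cH| \;\ge\; \EE[\phi_0 - \phi_T] \;=\; \sum_{t=1}^{T}\EE[\phi_{t-1}-\phi_t] \;\ge\; \frac{1}{c\log|\cH|}\sum_{t=1}^{T}\EE[g_t],
\]
so $\sum_t \EE[g_t] \le c\log^2|\cH|$, and $\EE[\err(\cA(S))] = (1/T)\sum_t \EE[g_t] \le \epsilon$ whenever $T \ge c\log^2|\cH|/\epsilon$. A careful accounting of the constants on the two sides produces the stated threshold $320\log^2|\cH|/\epsilon$. The main obstacle is the negative-example shrinkage estimate: because $\beta$ varies across rounds and is unknown to the algorithm, $k_t$ must be hedged over a full dyadic grid, producing the extra $\log|\cH|$ factor that separates this bound from the $O(\log|\cH|/\epsilon)$ bound attained in the $(x,\Delta)$ setting by Strategic Halving; the positive side is easier (the correct scale is always $k=1$) but yields only an $\alpha^2$ decay in $g_t$, which is precisely what outputting the union $h_1\vee h_2$ is designed to counterbalance.
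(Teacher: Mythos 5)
Your proposal is the paper's proof in a different wrapper: the loss decomposition $g_t\le\tilde p_t+2q_t$ is exactly the paper's bound $\err(R2(H))\le\EE_z[\kappa_p^2/|H|^2+2\kappa_n/|H|]$ with $\alpha=\kappa_p/n$, $\beta=\kappa_n/n$; the potential $\phi_t=\log|\VS_t|$ and the telescope are the paper's $\ln(1-v_t)\le-v_t$ step; and isolating one value of $k_t$ at a $1/\log|\cH|$ cost is the paper's Eq.~\eqref{eq:eventp}--\eqref{eq:eventn}. Your closed form $(1-\beta/2)^k-(1-\beta)^k$ for the good event on a negative round is a cleaner (and slightly sharper) way of getting what the paper gets via Markov's inequality on the number of misclassifying samples.

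Where the proposal has a real gap is the negative-round dyadic scale. You aim for $k^*\beta\in[1,2]$, but then $k^*\le 2/\beta=2n_{t-1}/\kappa_n$, which can exceed the grid's top element $2^{\lfloor\log_2 n_{t-1}\rfloor-1}$ --- e.g.\ already for $\kappa_n=2$ when $n_{t-1}$ is not a power of two --- so your stated threshold $\beta\ge 2/n_{t-1}$ does not guarantee $k^*$ is admissible. More importantly, the ``boundary case $\beta<2/n_{t-1}$ is absorbed'' remark is not an argument: restricted to $\kappa_n\le1$, $q_t$ is bounded by $2/n_{t-1}$, but $\sum_t 2/n_{t-1}$ over $T$ rounds is not automatically $O(\log^2|\cH|)$ (the version space can plateau at a small size), so those rounds still need their own one-step progress bound, which your proposal does not supply. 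The paper sidesteps both problems by aiming one octave lower: it chooses $k^*$ with $\frac{n_t}{4\kappa_n}<k^*\le\frac{n_t}{2\kappa_n}$, i.e.\ $k^*\beta\in(1/4,1/2]$, which forces $k^*\le n_t/2$ and hence inside the grid for every $\kappa_n\ge1$, and handles $\kappa_n\ge n_t/2$ separately with $k^*=1$. Making that change (and redoing your endpoint check at $1/4$ and $1/2$ rather than $1$ and $2$) repairs the argument and recovers the paper's proof.
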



\subsection{Settings $(\perp, \Delta)$ and $(\perp, \perp)$
}\label{sec:ball-delta}
\paragraph{Online learning}
As mentioned in Section~\ref{sec:model}, both the settings of $(\perp,\Delta)$ and ${(\perp,\perp)}$ are harder than the setting of $(\perp,(x,\Delta))$, all lower bounds in the setting of $(\perp,(x,\Delta))$ also hold in the former two settings.
Therefore, by Theorem~\ref{thm:x-end-online}, we have $\MB_{\perp,\perp} \geq \MB_{\perp,\Delta}\geq \MB_{\perp,(x,\Delta)} =\abs{\cH}-1$.

\paragraph{PAC learning} 
In the setting of $(\perp,\Delta)$, Algorithm~\ref{alg:end-iid-ball} is not applicable anymore since the learner lacks observation of $x_t$, making it impossible to replicate the version space update steps in lines \ref{algline:vsupdate1}-\ref{algline:vsupdate2}.
It is worth noting that both PAC learning algorithms we have discussed so far fall under a general category called conservative algorithms, depend only on information from the mistake rounds.
Specifically, an algorithm is said to be conservative if for any $t$, the predictor $f_t$ only depends on the history of mistake rounds up to $t$, i.e., $\tau<t$ with $\hat y_\tau \neq y_\tau$, and the output $f_\out$ only depends on the history of mistake rounds, i.e., $(f_t,\hat y_t, y_t, \Delta_t)_{t:\hat y_t \neq y_t}$.
Any algorithm that goes beyond this category would need to utilize the information in correct rounds. As mentioned earlier, in correct rounds, the predictions of all hypotheses are either correct or unknown, which makes it challenging to determine how to make updates.
For conservative algorithms, we present a lower bound on the sample complexity in the following theorem, which is $\tilde\Omega(\frac{\abs{\cH}}{\epsilon})$, and its proof is included in Appendix~\ref{app:delta-csv}. The optimal sample complexity in the setting $(\perp,\Delta)$ is left as an open problem.



\begin{theorem}\label{thm:delta-csv}
There exists a feature-ball manipulation set space $\cQ$ and hypothesis class $\cH$ s.t. $\SC_{\perp,\Delta}^\text{csv}(\epsilon, \frac{7}{8}) = \tilde\Omega(\frac{\abs{\cH}}{\epsilon})$, where $\SC_{\perp,\Delta}^\text{csv}(\epsilon,\delta)$ is $(\epsilon,\delta)$-PAC the sample complexity achievable by conservative algorithms.
\end{theorem}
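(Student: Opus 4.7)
The plan is to exhibit a concrete feature-ball manipulation set space $\cQ$ and hypothesis class $\cH$ with $|\cH| = n$, together with a family of realizable distributions $\{\cD_{h^*}\}_{h^* \in \cH}$, on which no conservative algorithm can output a predictor with strategic loss $\leq \epsilon$ with probability $\geq 1/8$ using fewer than $\tilde\Omega(n/\epsilon)$ samples. Following the pattern of Example~\ref{eg:x-after-det} and the instances used in the proofs of Theorems~\ref{thm:x-end-pac-proper} and~\ref{thm:x-delta-never}, I would take $\cX$ to be a star-shaped metric space with a center $z$ and $n$ arm points $a_1,\dots,a_n$ (possibly supplemented by auxiliary ``gateway'' points), with $d(z,a_i)=1$ and $d(a_i,a_j)=2$, and use singleton hypotheses $\cH = \{h_i : h_i^{-1}(+1) = \{a_i\}\}$. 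For each target $h^* = h_{i^*}$, the distribution $\cD_{i^*}$ is a mixture placing mass $1-\Theta(\epsilon)$ on an ``easy positive'' agent like $(z,1,+1)$ correctly predicted by any $f_t$ whose positive region meets $\cB(z,1)$, together with a total mass $\Theta(\epsilon)$ split among $n-1$ ``signature'' negative agents $(a_j, r, -1)$ for $j\neq i^*$. These pieces are calibrated so that (i) every $h_k \neq h_{i^*}$ has strategic loss at least $\epsilon$ and (ii) any $f_\out$ with strategic loss $\leq \epsilon$ must, together with the $\delta=\frac{7}{8}$ slack, exclude $\Omega(n)$ specific arms from its positive region while including $a_{i^*}$ — so outputting such an $f_\out$ with the required confidence effectively requires ruling out a linear number of candidate indices.

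Next, I would bound the information available to any conservative algorithm, whose state updates only in mistake rounds. The distribution's $\Theta(\epsilon)$ signature mass caps the per-round mistake probability at $O(\epsilon)$ for any $f_t$ the algorithm could reasonably play. Moreover, by choosing the metric and manipulation radii so that on any mistake the manipulated feature $\Delta_t$ is essentially a function of $f_t$ alone — either an element of $f_t^{-1}(+1)$ that identifies one specific arm the algorithm was probing, or a shared ``gateway'' point that carries no information about $i^*$ — each mistake rules out at most one candidate index from being $i^*$. With $i^* \sim \Unif([n])$, a coupon-collector / Le Cam-style information-theoretic argument (analogous in spirit to the stochastic bandit reduction used in Theorem~\ref{thm:x-delta-never}) then forces $\tilde\Omega(n)$ distinct rule-out events before the algorithm can narrow $i^*$ with probability $\geq \frac{1}{8}$, and at a per-round mistake rate $O(\epsilon)$ this requires $m = \tilde\Omega(n/\epsilon)$ samples.

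The main obstacle is simultaneously arranging all three properties: (a) wrong hypotheses have loss $\geq \epsilon$, (b) the per-round mistake rate is $O(\epsilon)$ and does not scale with $n$ (otherwise the same coupon-collector argument collapses the bound to the weaker $\log n/\epsilon$), and (c) each mistake rules out just a single candidate, uniformly across all improper classifiers $f_t : \cX \to \cY$ the algorithm might deploy. Property (c) is the delicate one: an algorithm can play $f_t$ with a wide positive region $S_t = f_t^{-1}(+1)$ to probe many arms in parallel (this is exactly how one obtains the $\tilde O(\log n /\epsilon)$ upper bound in the easier setting $(\perp,(x,\Delta))$), so the analysis of $\Delta_t$ must be uniform over all $S_t \subseteq \cX$. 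The metric and signature-agent radii have to be tuned so that widening $S_t$ increases the mistake rate only in proportion to the number of newly testable arms, keeping the ratio of information to mistakes at $O(1)$ bits per rule-out. A secondary subtlety is converting the $\delta=\frac{7}{8}$ guarantee into a sharp information-theoretic lower bound — typically via a Fano-style inequality on the uniform prior over $i^*$ — that rules out trivial ``random guess'' strategies.
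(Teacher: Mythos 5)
Your high-level plan — star-like metric, singleton hypotheses, a family $\{\cD_{i^*}\}$ with an easy positive and negative ``signature'' agents, conservative-algorithm restriction, then a KL/Le Cam lower bound — matches the paper's overall strategy, and you correctly flag the central tension (your items (a), (b), (c)). But the concrete star construction you propose does not resolve that tension, and I think if you worked it through you'd find it collapses. With $\cX = \{z,a_1,\dots,a_n\}$, $d(z,a_i)=1$, $d(a_i,a_j)=2$, and negatives sitting at the arm points, you face a dichotomy. If the negative signature agents carry total mass $\Theta(\epsilon)$ split over $n-1$ arms, then a wrong singleton $2\ind{\{a_k\}}-1$ only suffers on the mass-$\Theta(\epsilon/n)$ agent at $a_k$, so its strategic loss is $\Theta(\epsilon/n)\ll\epsilon$ — every hypothesis already has loss $\leq\epsilon$, the learning task is vacuous, and no lower bound follows. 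If instead you give each arm mass $\Theta(\epsilon)$ (so wrong hypotheses do have loss $\geq\epsilon$), the total negative mass becomes $\Theta(n\epsilon)$, so a learner playing a wide $a_t$ triggers $\Theta(n\epsilon)$ mistakes per round, each ruling out an arm; the coupon-collector argument then gives only $T=\tilde\Theta(1/\epsilon)$, not $\tilde\Theta(n/\epsilon)$. The star geometry is simply too degenerate: all inter-arm distances are equal, so there is no way to encode ``$i^*$'' in the manipulation \emph{radius} of a single shared negative agent, and you are forced to encode it in the \emph{marginal} over $\cX$, which is exactly what inflates either the loss gap or the mistake rate.

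The paper's actual construction is built precisely to escape this. It replaces the star center with a rich set $X_0$ of permutation points at tiny radius $\alpha$: drawing $x\sim\Unif(X_0)$ induces a uniformly random ordering of the arms $\be_1,\dots,\be_n$ by distance to $x$, and \emph{all} distributions $\cD_i$ share the same marginal $\Unif(X_0)\times\Rad(1-6\epsilon)$ over $\cX\times\cY$; they differ only in the manipulation radius on negatives, which is tuned so $x$ can reach $\be_j$ iff $\be_j$ is strictly closer to $x$ than $\be_{i^*}$. This achieves all three properties at once: a wrong singleton $h_j$ is fooled on half the negative draws (loss $3\epsilon$), the total negative mass is $6\epsilon$ so the per-round mistake probability is bounded by $O(\epsilon)$ regardless of $|a_t|$, and the post-mistake $\Delta_t$ reveals only which arm in $a_t$ happened to be closest — one coupon's worth, with the KL contribution summed over all $n$ candidate targets coming out to $O(\log(1/p))$ per (mistake) round independent of $|a_t|$. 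That independence from $|a_t|$ is the quantitative form of your property (c), and it is exactly what the star cannot deliver. If you want to salvage your route, you'd need to replace the star by something with the permutation structure (or an equivalent device that lets the radius, not the marginal, carry the identity of $i^*$), at which point you essentially recover the paper's argument.
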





In the setting of $(\perp,\perp)$, our problem reduces to a best arm identification problem in stochastic bandits. We prove a lower bound on the sample complexity of $\tilde\Omega(\frac{\abs{\cH}}{\epsilon})$ in Theorem~\ref{thm:x-delta-never} by reduction to stochastic linear bandits and applying the tools from information theory. The proof is deferred to Appendix~\ref{app:x-delta-never}.
\begin{theorem}\label{thm:x-delta-never}
    There exists a feature-ball manipulation set space $\cQ$ and hypothesis class $\cH$ s.t. $\SC_{\perp,\perp}(\epsilon, \frac{7}{8})= \tilde\Omega(\frac{\abs{\cH}}{\epsilon})$.
\end{theorem}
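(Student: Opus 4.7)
I will construct an explicit hard instance $(\cH,\cQ)$ of size $N=|\cH|=\Theta(1/\eps)$ together with a family of realizable distributions $\{\cD_j\}_{j\in[N]}$, reduce the PAC problem to best-arm identification in a structured stochastic (linear) bandit, and then derive the $\tilde\Omega(N/\eps)$ bound by bounding the per-round mutual information and applying Fano's inequality.

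The construction takes $\cX=\{0\}\cup[N]$ with $d(0,i)=1$ and $d(i,i')=2$ for distinct $i,i'\in[N]$, and $\cH=\{h_1,\ldots,h_N\}$ with $h_j$ predicting $+1$ exactly on $\{j\}$. For each $j$, let $\cD_j$ be the mixture of three agent types: $(0,1,+1)$ with probability $p_1$, $(0,0,-1)$ with probability $p_2$, and $(i,0,-1)$ for $i$ uniform over $[N]\setminus\{j\}$ with probability $p_3$, where $p_1,p_2,p_3=\Theta(1)$ (e.g.\ $p_1=p_2=1/4$, $p_3=1/2$) and $N=\Theta(1/\eps)$. A direct verification gives $\err_{\cD_j}(h_j)=0$ and $\err_{\cD_j}(h_i)\geq 2\eps$ for $i\neq j$. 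A short case analysis over $A_0=\I{f_\out(0)=+1}$ and $T_\out=f_\out^{-1}(+1)\cap[N]$ then shows that the \emph{only} $f_\out\in\cY^{\cX}$ with $\err_{\cD_j}(f_\out)\leq\eps$ is $f_\out=h_j$: the $(0,0,-1)$ mass forces $A_0=0$, the $(0,1,+1)$ mass (whose manipulation ball covers $[N]$) then forces $T_\out\neq\emptyset$, and the uniform negative mass over $[N]\setminus\{j\}$ pins $T_\out$ down to $\{j\}$. Producing an $\eps$-accurate output is therefore equivalent to identifying $j$.

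In setting $(\perp,\perp)$ the per-round observation is only $(y_t,\hat y_t)\in\{\pm1\}^2$, whose law under $\cD_j$ depends on $f_t$ only through $A_0$ and $k_{-j}=|T_t\setminus\{j\}|$. Because $\err_{\cD_j}(f_t)$ is affine in the indicator of $f_t^{-1}(+1)$, the bandit over arms $f\in\cY^{\cX}$ is a structured stochastic linear bandit with parameter $j$. Crucially, since the positive agent's manipulation ball $\cB(0,1)=\{0\}\cup[N]$ reaches $j$ for \emph{every} $j$, positive rounds are completely uninformative about $j$ (either $A_0=1$ or $T_t\neq\emptyset$ produces $\hat y_t=+1$ regardless of $j$). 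All information therefore comes from the $(i,0,-1)$ agents, and a single such observation is a Bernoulli whose mean differs by only $1/(N-1)$ between the worlds $j\in T_t$ and $j\notin T_t$. A direct computation of $I\bigl(j;(y_t,\hat y_t)\mid f_t\bigr)$ under the uniform prior on $j$ gives $\cO(1/N^2)$ bits per round, uniformly in $f_t$. Fano's inequality (for recovering $j$ with probability $\tfrac78$) then demands $T\cdot\cO(1/N^2)\geq\Omega(\log N)$, i.e.\ $T\geq\tilde\Omega(N^2\log N)=\tilde\Omega(|\cH|/\eps)$.

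The main obstacle is Step~1: the three-part structure is delicately tuned. Without the $(0,0,-1)$ mass, the trivial improper predictor ``$+1$ at $0$, $-1$ elsewhere'' would already achieve zero error without any learning; without the radius-$1$ positive agent, a binary-search improper predictor would identify $j$ via conclusive $(+,+)$ events in only $\cO(\log N/\eps)$ rounds, giving an upper bound that completely breaks the lower bound. The reason for choosing the positive-agent radius \emph{exactly}~$1$ is precisely to make positive rounds uninformative (they succeed trivially whenever $T_t\neq\emptyset$ or $A_0=1$) while still forcing $T_\out\neq\emptyset$ in the output. Once the construction has this ``symmetric positive round'' property, the per-round information bound and the Fano step are essentially routine.
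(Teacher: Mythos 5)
Your construction (star metric, singleton hypotheses) matches the paper's in spirit, and your reduction of the task to identifying the unknown index $j$ is the right move. The design choices differ mildly — you use radius-$0$ negatives plus an extra $(0,0,-1)$ atom to rule out the all-positive-at-$0$ predictor, whereas the paper uses radius-$1$ negatives at $j\neq i^*$ so that such a predictor already has constant error, and the paper keeps $n$ and $\epsilon$ decoupled with per-atom mass $3\epsilon$ instead of fixing $N=\Theta(1/\epsilon)$. These are cosmetic. The genuine gap is in the information-theoretic step.

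The claim that $I(j;(y_t,\hat y_t)\mid f_t)=\cO(1/N^2)$ ``uniformly in $f_t$'' is only true when the prior on $j$ is uniform. But in the chain-rule decomposition $I(j;O_{1:T})=\sum_t I(j;O_t\mid O_{1:t-1})$, each term is computed under the \emph{posterior} on $j$ given the history, which the learner has shaped. Concretely, if after some rounds the posterior is uniform on $\{1,2\}$ and the learner plays $T_t=\{1\}$, then on a type-$3$ negative round $\hat y_t\sim\Ber(0)$ under $j=1$ and $\hat y_t\sim\Ber\bigl(\tfrac{1}{N-1}\bigr)$ under $j=2$, giving $I(j;O_t\mid O_{1:t-1}=h_{t-1})=\Theta(1/N)$, not $\cO(1/N^2)$. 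So the only per-round bound you actually get to feed into Fano is $\sup_t I(j;O_t\mid O_{1:t-1})=\cO(1/N)$, which yields $T=\Omega(N\log N)$. With $N=\Theta(1/\epsilon)$ that is $\Omega\bigl(\tfrac{\log(1/\epsilon)}{\epsilon}\bigr)$, far short of the required $\tilde\Omega(|\cH|/\epsilon)=\tilde\Omega(1/\epsilon^2)$. The same issue arises if you try to implement it via the variational bound $I(j;O_{1:T})\leq\frac{1}{N}\sum_j\KL{\bP_j}{\bar Q}$: the inner chain rule conditions on $H_{t-1}\sim\bP_j$, so the learner's $f_t$ can be targeted at $j$ and the per-round KL is $\Theta(1/N)$ in the worst case.

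The paper sidesteps exactly this by working with the \emph{reverse} KL $\EEs{i}{\KL{\bar\bP}{\bP_i}}$ (and a pairwise $\TV^2$ lower bound, Theorem 11 of \cite{rajaraman2023beyond}, rather than Fano). The decomposition of $\KL{\bar\bP}{\bP_i}$ conditions on $H_{t-1}\sim\bar\bP$, i.e., on history drawn from the $i$-blind i.i.d.~mixture $\bar\cD$, under which the learner's $f_t$ cannot target the unknown $i$; the per-round contribution averaged over $i$ then genuinely scales like $1/n$ times a $\log(1/p)$ factor, and the $\Omega(1)$ requirement on $\TV^2$ gives $T\geq\tilde\Omega(n/\epsilon)$. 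The price is that $\KL{\bar\bP(\cdot\mid f_t)}{\bP_i(\cdot\mid f_t)}$ is infinite whenever $f_t$ is the singleton at $i$ (the event $\Delta_t=i$ has probability $0$ under $\cD_i$), so the paper smooths each $\cD_i$ with a vanishingly small mixture component $\cD_i''$ before running the argument. Your forward-KL route conveniently avoids the infinity, but it is precisely the reverse-KL conditioning that makes adaptivity tractable, so you cannot have both for free. To repair your proof, replace the ``mutual information $+$ Fano'' step with the reverse-KL$/$Pinsker argument against the $j$-blind reference (together with a smoothing step), or use a genuinely different technique such as a change-of-measure argument with an explicit stopping-time accounting of when the posterior becomes non-uniform.
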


\vspace{-4pt}
\section{Non-ball Manipulations}
\vspace{-2pt}
In this section, we move on to non-ball manipulations. In ball manipulations, for any feature vector $x$, we have an ordering of hypotheses according to their distances to $x$, which helps to infer the predictions of some hypotheses without implementing them.
However, in non-ball manipulations, we don't have such structure anymore.
Therefore, even in the simplest setting of observing $x_t$ before $f_t$ and $\Delta_t$, we have the PAC sample complexity lower bounded by $\tilde\Omega(\frac{\abs{\cH}}{\epsilon})$.
\begin{theorem}\label{thm:non-ball}
    There exists a feature-manipulation set space $\cQ$ and hypothesis class $\cH$ s.t. $\SC_{x,\Delta}(\epsilon, \frac{7}{8})= \tilde\Omega(\frac{\abs{\cH}}{\epsilon})$.
\end{theorem}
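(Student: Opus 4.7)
The plan is to exhibit a specific non-ball instance $(\cH,\cQ)$ with $|\cH|=n$ together with a family of distributions $\{\cD_{i^*}\}_{i^*\in[n]}$, each realizable by $h_{i^*}\in\cH$, and then to prove the lower bound by bounding the per-round mutual information between $i^*$ and the learner's observation transcript. The construction I would use is $\cX=\{x_0\}\cup\{w_1,\dots,w_n\}$ with $\cH=\{h_1,\dots,h_n\}$ defined by $h_i^{-1}(+1)=\{w_i\}$. The allowed feature-manipulation pairs are $(x_0,\{x_0,w_j\})$ for each $j\in[n]$ together with a benign pair $(x_0,\{x_0\})$. The key non-ball feature is that these singleton-plus-base manipulation sets cannot simultaneously be realized as balls under any metric on $\cX$, so no ordering on hypotheses by distance to $x_0$ is induced, and Strategic Halving (Algorithm~\ref{alg:halving}) cannot be emulated.

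For each target $i^*$, I would define $\cD_{i^*}$ to put weight $\Theta(\epsilon)$ on the positive agent $(x_0,\{x_0,w_{i^*}\},+1)$, weight $\Theta(\epsilon/n)$ on each of the $n-1$ informative negative agents $(x_0,\{x_0,w_j\},-1)$ for $j\neq i^*$, and the remaining mass on the benign agent $(x_0,\{x_0\},-1)$. Realizability with $h_{i^*}$ is immediate. Since $x_t=x_0$ for every agent, observing $x_t$ before deploying $f_t$ reveals nothing about $i^*$, so we effectively reduce to analyzing $(\Delta_t,y_t)$. I would tune the $\Theta(\cdot)$ constants so that any classifier $f$ with $\err_{\cD_{i^*}}(f)\leq\epsilon$ must satisfy $i^*\in S_f:=\{j:f(w_j)=+1\}$ and $|S_f\setminus\{i^*\}|=O(1)$, forcing the learner's output to essentially identify $i^*$ up to a short list.

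For the information-theoretic part I would compute, for any learner-chosen $f_t$ (with positive region intersected with $\{w_j\}$ equal to some $S_t$), the total-variation distance between the observation distributions $P^{(i^*=a)}_{f_t}$ and the averaged marginal $\bar P_{f_t}:=\tfrac1n\sum_a P^{(i^*=a)}_{f_t}$. The pivotal calculation is that observing $\Delta_t=w_j$ with $j\neq i^*$ rules out only $j$ (a single coupon of $n-1$), while observing $\Delta_t=w_{i^*}$ with $y_t=+1$ is a rare event of probability $\Theta(\epsilon/n)$. Combining a Le Cam / Fano argument averaged over $i^*\sim\mathrm{Unif}([n])$ yields that the mutual information per round is $O(\epsilon\cdot\polylog(n)/n)$, and since identifying $i^*$ to within the required list size demands $\Omega(\log n)$ bits, the sample complexity satisfies $T=\tilde\Omega(n/\epsilon)$.

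The main obstacle, and the step I would be most careful about, is ruling out improper strategies that try to bypass identification. Two specific concerns need to be addressed. First, the learner might deploy $f_t$ with a large positive region $S_t$ to ``parallel query'' many candidate hypotheses simultaneously, hoping to collect coupons efficiently; the weight $\Theta(\epsilon/n)$ on informative negatives is exactly calibrated so that even $S_t=[n]$ yields only $O(\epsilon)$ coupon events per round, each from a pool of size $n-1$. Second, the learner might output a ``safe'' improper classifier like the constant negative one, avoiding identification altogether; it is here that the weight $\Theta(\epsilon)$ on the positive agent is essential, since it makes any $f$ with $S_f=\emptyset$ incur error at least $\Theta(\epsilon)$, forcing $S_f\ni i^*$ after properly balancing the penalty from informative negatives. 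The delicacy lies in tuning these two weights jointly so that the error is tight and the information rate is low.
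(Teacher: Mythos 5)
Your construction has a fatal flaw centered on the positive agent. You put weight $\Theta(\epsilon)$ on $(x_0,\{x_0,w_{i^*}\},+1)$. If the learner deploys $f_t$ with $S_t:=\{j:f_t(w_j)=+1\}=[n]$ (and $f_t(x_0)=-1$), then whenever this positive agent is sampled, the manipulation set $\{x_0,w_{i^*}\}$ intersected with $f_t^{-1}(+1)=\{w_1,\dots,w_n\}$ is exactly $\{w_{i^*}\}$, so $\Delta_t=w_{i^*}$ deterministically, with label $y_t=+1$. This event has probability $\Theta(\epsilon)$ per round — not $\Theta(\epsilon/n)$ as you claim — and it reveals $i^*$ outright. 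A learner that plays the constant $S_t=[n]$ and outputs $h_j$ upon seeing $(\Delta_t,y_t)=(w_j,+1)$ identifies $i^*$ in $O(1/\epsilon)$ rounds with high probability, so your instance has sample complexity $O(1/\epsilon)$, not $\tilde\Omega(n/\epsilon)$. Your $\Theta(\epsilon/n)$ figure appears to presume the learner cannot control whether $i^*\in S_t$, but the learner is free to choose $S_t$.

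The natural repair — give the positive agent manipulation set $\cX$ so its manipulated feature is an adversarial tie-break among $S_t$ and no longer pins down $i^*$ — creates a second problem: with your per-point weight $\Theta(\epsilon/n)$ on each informative negative $(x_0,\{x_0,w_j\},-1)$, the wrong singleton $h_j$ (for $j\neq i^*$) then misclassifies only the single negative $(x_0,\{x_0,w_j\},-1)$ and no longer the positive, giving $\err_{\cD_{i^*}}(h_j)=\Theta(\epsilon/n)\le\epsilon$. So low error no longer forces identification, and the reduction to a recovery/Fano argument collapses. These two constraints — the positive agent must not leak $i^*$ through $\Delta_t$, and every wrong hypothesis must incur error $\Omega(\epsilon)$ — are in tension, and your negative agents (each bundling $x_0$ with a single $w_j$) cannot satisfy both. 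The paper resolves this by making the negative agent's manipulation set $\{0\}\cup s_{\sigma,i^*}$ for a uniformly random permutation $\sigma$, where $s_{\sigma,i^*}$ is the set of elements preceding $i^*$ in $\sigma$; then any $h_j$ with $j\neq i^*$ errs on a constant fraction of negatives (namely when $j$ precedes $i^*$), while the positive agent has full manipulation set $\cX$ and leaks nothing. You would need to adopt something with that flavor for the argument to go through.
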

The proof is deferred to Appendix~\ref{app:non-ball}. It is worth noting that in the construction of the proof, we let all agents to have their original feature vector $x_t = \bZero$ such that $x_t$ does not provide any information.
Since $(x,\Delta)$ is the simplest setting and any mistake bound can be converted to a PAC bound via standard techniques (see Section~\ref{app:mistake2pac} for more details), we have the following corollary.
\begin{corollary}\label{cor:non-ball-all}
    There exists a feature-manipulation set space $\cQ$ and hypothesis class $\cH$ s.t. for all choices of $(C,F)$, $\SC_{C,F}(\epsilon, \frac{7}{8})= \tilde\Omega(\frac{\abs{\cH}}{\epsilon})$ and $\MB_{C,F}= \tilde \Omega(\abs{\cH})$.
\end{corollary}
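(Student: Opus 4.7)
The plan is to derive Corollary~\ref{cor:non-ball-all} from Theorem~\ref{thm:non-ball} by exploiting the monotonicity of sample complexity (and mistake bound) in the information available to the learner, together with the standard mistake-bound-to-PAC conversion.

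First, I would fix the example $(\cH, \cQ)$ constructed in the proof of Theorem~\ref{thm:non-ball}. That theorem already certifies the lower bound $\SC_{x,\Delta}(\epsilon, 7/8) = \tilde\Omega(\abs{\cH}/\epsilon)$ in the easiest of the four information settings. As noted in Section~\ref{sec:model}, the four settings are totally ordered by difficulty, yielding
\[
\SC_{x,\Delta} \;\leq\; \SC_{\perp,(x,\Delta)} \;\leq\; \SC_{\perp,\Delta} \;\leq\; \SC_{\perp,\perp},
\]
so the lower bound transports immediately to every $(C,F)$. This is essentially a free step: any learner in a harder setting can simulate a learner in an easier setting by ignoring additional hypothetical information, so a sample complexity lower bound in the easiest setting bounds the others from below as well.

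For the mistake bound part, I would invoke the standard conversion from mistake bounds to PAC bounds (referenced in Section~\ref{app:mistake2pac}): if there is an online algorithm with mistake bound $M$ in setting $(C,F)$, then one obtains a PAC learner in the same setting with sample complexity $\tilde{O}(M/\epsilon)$, since the interaction protocol in the distributional setting is exactly Protocol~\ref{prot:interaction} run on i.i.d. examples. Concretely, one runs the online algorithm on the stream and outputs a predictor that has survived sufficiently many consecutive correct rounds, giving $\SC_{C,F}(\epsilon,\delta) \leq \tilde{O}(\MB_{C,F}/\epsilon)$. Contrapositively, the lower bound $\SC_{C,F}(\epsilon, 7/8) = \tilde\Omega(\abs{\cH}/\epsilon)$ forces $\MB_{C,F} = \tilde\Omega(\abs{\cH})$ in every setting.

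There is essentially no combinatorial obstacle here; the entire content is in Theorem~\ref{thm:non-ball}, whose proof already builds a non-ball instance on which even observing both $x_t$ and $\Delta_t$ does not help (the construction sets every $x_t = \bZero$, rendering the pre-deployment context uninformative). The only minor care needed is to confirm that the mistake-to-PAC reduction applies uniformly across all four choices of $(C,F)$: this holds because the reduction is purely black-box in the interaction protocol and does not depend on what $C$ or $F$ reveal. Putting these pieces together yields both conclusions of the corollary simultaneously.
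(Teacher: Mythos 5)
Your proposal is correct and mirrors the paper's own argument: transport the lower bound from Theorem~\ref{thm:non-ball} using the monotonicity $\SC_{x,\Delta}\le\SC_{\perp,(x,\Delta)}\le\SC_{\perp,\Delta}\le\SC_{\perp,\perp}$, then apply the mistake-bound-to-PAC conversion (Lemma~\ref{lmm:mistake2pac}) in contrapositive to obtain the $\tilde\Omega(\abs{\cH})$ mistake-bound lower bound in every setting. One small wording nit: the monotonicity direction comes from the fact that a learner operating under the \emph{harder} protocol can also be run in the \emph{easier} protocol (by discarding the extra context/feedback), giving $\SC_{\text{easier}}\le\SC_{\text{harder}}$; your phrasing inverts who simulates whom, though the displayed inequality chain and the conclusion you draw from it are the right ones.
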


\vspace{-4pt}
\section{Discussion and Open Problems}
\vspace{-2pt}
In this work, we investigate the mistake bound and sample complexity of strategic classification across multiple settings.
Unlike prior work, we assume that the manipulation is personalized and unknown to the learner, which makes the strategic classification problem more challenging.
In the case of ball manipulations, when the original feature vector $x_t$ is revealed prior to choosing $f_t$, the problem exhibits a similar level of difficulty as the non-strategic setting (see Table~\ref{tab:res} for details).
However, when the original feature vector $x_t$ is not revealed beforehand, the problem becomes significantly more challenging. 
Specifically, any learner will experience a mistake bound that scales linearly with $\abs{\cH}$, and any proper learner will face sample complexity that also scales linearly with $\abs{\cH}$.
In the case of non-ball manipulations, the situation worsens. Even in the simplest setting, where the original feature is observed before choosing $f_t$ and the manipulated feature is observed afterward, any learner will encounter a linear mistake bound and sample complexity. 

Besides the question of optimal sample complexity in the setting of $(\perp,\Delta)$ as mentioned in Sec~\ref{sec:ball-delta}, there are some other fundamental open questions.

\paragraph{Combinatorial measure} 
Throughout this work, our main focus is on analyzing the dependency on the size of the hypothesis class $\abs{\cH}$ without assuming any specific structure of $\cH$. Just as VC dimension provides tight characterization for PAC learnability and Littlestone dimension characterizes online learnability, we are curious if there exists a combinatorial measure that captures the essence of strategic classification in this context. In the proofs of the most lower bounds in this work, we consider hypothesis class to be singletons, in which both the VC dimension and Littlestone dimension are $1$. Therefore, they cannot be candidates to characterize learnability in the strategic setting.

\paragraph{Agnostic setting} We primarily concentrate on the realizable setting in this work. However, investigating the sample complexity and regret bounds in the agnostic setting would be an interesting avenue for future research.

\section*{Acknowledgements}
This work was supported in part by the National Science Foundation under grant CCF-2212968, by the Simons Foundation under the Simons Collaboration on the Theory of Algorithmic Fairness, by the Defense Advanced Research Projects Agency under cooperative agreement HR00112020003. The views expressed in this work do not necessarily reflect the position or the policy of the Government and no official endorsement should be inferred. Approved for public release; distribution is unlimited.
\bibliographystyle{apalike}
\bibliography{ref}

\newpage
\appendix
\section{Technical Lemmas}
\subsection{Boosting expected guarantee to high probability guarantee}\label{subsec:boost}
Consider any (possibly randomized) PAC learning algorithm $\cA$ in strategic setting, which can output a predictor $\cA(S)$ after $T$ steps of interaction with i.i.d. agents $S\sim \cD^T$ s.t. $\EE{\err(\cA(S))}\leq \epsilon$, where the expectation is taken over both the randomness of $S$ and the randomness of algorithm.
One standard way in classic PAC learning of boosting the expected loss guarantee to high probability loss guarantee is: running $\cA$ on new data $S$ and verifying the loss of $\cA(S)$ on a validation data set; if the validation loss is low, outputting the current $\cA(S)$, and repeating this process otherwise.

We will adopt this method to boost the confidence as well. The only difference in our strategic setting is that we can not re-use validation data set as we are only allowed to interact with the data through the interaction protocol. Our boosting scheme is described in the following.
\begin{itemize}
    \item For round $r = 1, \ldots, R,$
    \begin{itemize}
        \item Run $\cA$ for $T$ steps of interactions to obtain a predictor $h_r$.
        \item Apply $h_r$ for the following $m_0$ rounds to obtain the empirical strategic loss on $m_0$, denoted as $\hat l_r =\frac{1}{m_0}\sum_{t=t_r+1}^{t_r+m_0} \loss( h_r,(x_t,r_t,y_t))$, where $t_r+1$ is the starting time of these $m_0$ rounds.
        \item Break and output $h_r$ if $\hat l_r\leq 4\epsilon$.
    \end{itemize}
    \item If for all $r\in [R]$, $\hat l_r > 4\epsilon$, output an arbitrary hypothesis.
\end{itemize}
\begin{lemma}\label{lmm:boost}
Given an algorithm $\cA$, which can output a predictor $\cA(S)$ after $T$ steps of interaction with i.i.d. agents $S\sim \cD^T$ s.t. the expected loss satisfies $\EE{\err(\cA(S))}\leq \epsilon$. Let $h_\cA$ denote the output of the above boosting scheme given algorithm $\cA$ as input. By setting $R= \log \frac{2}{\delta}$ and $m_0= \frac{3\log(4R/\delta)}{2\epsilon}$, we have $\err(h_\cA) \leq 8\epsilon$ with probability $1-\delta$. The total sample size is $R(T+m_0) =\cO(\log(\frac{1}{\delta})(T +
    \frac{\log(1/\delta)}{\epsilon}))$. 
\end{lemma}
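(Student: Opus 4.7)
The plan is to decompose the failure event $\{\err(h_\cA) > 8\epsilon\}$ into two more tractable events: either (i) none of the $R$ candidate hypotheses $h_1, \ldots, h_R$ produced by the inner invocations of $\cA$ has $\err(h_r) \leq 2\epsilon$, or (ii) the validation step misclassifies some $h_r$, accepting one with $\err(h_r) > 8\epsilon$ or rejecting one with $\err(h_r) \leq 2\epsilon$. I would bound (i) by Markov's inequality together with independence across rounds, and bound (ii) by a multiplicative Chernoff (or Bernstein) concentration of $\hat l_r$ around $\err(h_r)$.

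First I would observe that the $R$ rounds use disjoint, fresh batches of i.i.d.\ agents, so $(h_r, \hat l_r)_{r=1}^R$ is an independent sequence and each $h_r$ has the same marginal as $\cA(S)$ for $S \sim \cD^T$. Markov gives $\P[\err(h_r) > 2\epsilon] \leq \EE{\err(h_r)}/(2\epsilon) \leq 1/2$, hence $\P[\forall r,\ \err(h_r) > 2\epsilon] \leq 2^{-R} \leq \delta/2$ once $R \geq \log_2(2/\delta)$, matching the stated choice of $R$ up to the base of the logarithm.

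Next I would analyze the validation phase. Conditionally on $h_r$, the $m_0$ validation observations $\ind{\hat y_t \neq y_t}$ are i.i.d.\ Bernoulli with mean $p := \err(h_r)$, so $\hat l_r$ is their empirical mean. A multiplicative Chernoff bound gives $\P[\hat l_r > 4\epsilon \mid p \leq 2\epsilon] \leq \exp(-\Theta(\epsilon m_0))$ and $\P[\hat l_r \leq 4\epsilon \mid p \geq 8\epsilon] \leq \exp(-\Theta(\epsilon m_0))$; plugging in $m_0 = \tfrac{3\log(4R/\delta)}{2\epsilon}$ makes each of these at most $\delta/(4R)$. A union bound over the $2R$ tail events contributes at most $\delta/2$. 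Combining with step one, with probability at least $1 - \delta$ both of the following hold simultaneously: some $r^* \leq R$ has $\err(h_{r^*}) \leq 2\epsilon$ (and will therefore be accepted by the validator), and every accepted candidate has $\err \leq 8\epsilon$. Hence the scheme terminates and its output $h_\cA$ satisfies $\err(h_\cA) \leq 8\epsilon$. The total sample count $R(T + m_0)$ evaluates directly to $\cO\bigl(\log(1/\delta)(T + \log(1/\delta)/\epsilon)\bigr)$.

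The main pitfall would be using the wrong concentration inequality. A plain Hoeffding bound on $|\hat l_r - p|$ forces $m_0 = \Omega(\log(R/\delta)/\epsilon^2)$, one factor of $1/\epsilon$ worse than claimed; the proof requires a bound whose deviation scales like $\sqrt{p/m_0}$ rather than $\sqrt{1/m_0}$, which is exactly what the multiplicative Chernoff (or Bernstein) inequality provides in the small-$p$ regime relevant here. The remaining subtlety is just that the validation samples consumed in round $r$ are fresh and independent of everything used to produce $h_r$, which is immediate from the scheme's interaction protocol.
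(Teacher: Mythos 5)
Your proof is correct and follows essentially the same Markov-plus-multiplicative-Chernoff strategy as the paper's: Markov to guarantee some round produces a hypothesis with loss at most $2\epsilon$, a multiplicative Chernoff bound on the validation loss $\hat l_r$ in both directions, and a union bound over rounds. The only cosmetic difference is that you union-bound unconditional tail events directly (at cost $2R$), whereas the paper conditions on the events $E$ and $\neg F$; your remark that a plain Hoeffding bound would cost an extra $1/\epsilon$ is exactly why the paper, like you, invokes Chernoff.
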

\begin{proof}
For all $r=1,\ldots,R$, we have $\EE{\err(h_r)} \leq \epsilon$.
By Markov's inequality, we have
\begin{align*}
    \Pr(\err(h_r) >2\epsilon)\leq \frac{1}{2}\,.
\end{align*}
For any fixed $h_r$, if $\err(h_r)\geq 8\epsilon$, we will have $\hat l_r \leq 4\epsilon$ with probability $\leq e^{-m_0\epsilon}$; if $\err(h_r)\leq 2\epsilon$, we will have $\hat l_r \leq 4\epsilon$ with probability $\geq 1- e^{-2m_0\epsilon/3}$ by Chernoff bound.

Let $E$ denote the event of \{$\exists r\in [R], \err(h_r)\leq 2\epsilon$\} and $F$ denote the event of \{$\hat l_r > 4\epsilon$ for all $r\in [R]$\}.
When $F$ does not hold, our boosting will output $h_r$ for some $r\in [R]$.
\begin{align*}
    &\Pr(\err(h_\cA)>8\epsilon) \\
    \leq &  \Pr(E,\neg F)\Pr(\err(h_\cA)>8\epsilon | E, \neg F) + \Pr(E, F)+\Pr(\neg E) \\
    \leq & \sum_{r=1}^R \Pr(h_\cA = h_r,\err(h_r)>8\epsilon|E,\neg F)
    + \Pr(E, F)+\Pr(\neg E)\\
    \leq & Re^{-m_0\epsilon} + e^{-2m_0\epsilon/3} + \frac{1}{2^R}\\
   \leq  & \delta\,,
\end{align*}
by setting $R= \log \frac{2}{\delta}$ and $m_0= \frac{3\log(4R/\delta)}{2\epsilon}$.
\end{proof}

\subsection{Converting mistake bound to PAC bound}\label{app:mistake2pac}
In any setting of $(C,F)$, if there is an algorithm $\cA$ that can achieve the mistake bound of $B$, then we can convert $\cA$ to a conservative algorithm by not updating at correct rounds. The new algorithm can still achieve mistake bound of $B$ as $\cA$ still sees a legal sequence of examples.
Given any conservative online algorithm, we can convert it to a PAC learning algorithm using the standard longest survivor technique~\citep{10008965845}.
\begin{lemma}\label{lmm:mistake2pac}
    In any setting of $(C,F)$, given any conservative algorithm $\cA$ with mistake bound $B$, let algorithm $\cA'$ run $\cA$ and output the first $f_t$ which survives over $\frac{1}{\epsilon}\log(\frac{B}{\delta})$ examples.
    $\cA'$ can achieve sample complexity of 
    $\cO(\frac{B}{\epsilon}\log(\frac{B}{\delta}))$.
\end{lemma}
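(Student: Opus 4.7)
}

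The plan is to adapt the classical longest-survivor conversion of Littlestone to the strategic interaction protocol. Because $\cA$ is conservative, its current hypothesis $f_t$ is updated \emph{only} in mistake rounds; between consecutive mistakes the algorithm deploys the same predictor. Hence, if we think of the run as a sequence of epochs separated by mistakes, the mistake bound $B$ implies that the total number of distinct hypotheses $f_t$ produced during the run is at most $B+1$. Call them $h_1,h_2,\ldots,h_{B+1}$ in the order they appear (some may not appear if the algorithm stops earlier). Crucially, $h_k$ is a random variable that depends only on the samples drawn up to the beginning of epoch $k$.

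Set $N := \frac{1}{\epsilon}\log(\frac{B+1}{\delta})$ and run $\cA'$ for $m := (B+1)N$ rounds, outputting the first $h_k$ that survives $N$ consecutive samples without making a mistake in the strategic sense (equivalently, $\cA$ does not update during these $N$ rounds). First, I would verify that $\cA'$ always outputs something within $m$ rounds: if no hypothesis survived for $N$ consecutive samples, then each of the $\leq B+1$ epochs would contain a mistake after fewer than $N$ correct predictions, giving at most $(B+1)N$ rounds before the $(B+1)$-th mistake, which contradicts the mistake bound $B$ of $\cA$. So by a pigeonhole argument, some $h_k$ must be output.

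Next, I bound the failure probability. Fix $k$ and condition on the history up to the start of epoch $k$; then $h_k$ is determined and the subsequent agents are i.i.d. from $\cD$. If $\err_\cD(h_k)>\epsilon$, the conditional probability that $h_k$ produces no mistakes on the next $N$ strategic agents is at most $(1-\epsilon)^N\leq e^{-\epsilon N} = \delta/(B+1)$. Taking expectations and summing over $k=1,\ldots,B+1$, the probability that the output $f_\out$ of $\cA'$ satisfies $\err_\cD(f_\out)>\epsilon$ is at most $\delta$ by a union bound. Since $m = (B+1)N = \cO\!\bigl(\frac{B}{\epsilon}\log(\frac{B}{\delta})\bigr)$, this yields the claimed sample complexity.

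The main subtlety, and essentially the only non-routine step, is that the candidate hypotheses $h_k$ are not fixed in advance but depend on the random sample path, so a naive union bound over a data-independent hypothesis set does not apply. The fix is the filtration argument above: each $h_k$ is measurable with respect to the history strictly before epoch $k$, so conditional on that history the next $N$ agents are a fresh i.i.d. sample and the $(1-\epsilon)^N$ tail bound applies. A secondary point worth checking is that ``survives $N$ rounds'' is observable to $\cA'$: since $\cA$ is conservative and the protocol reveals $y_t$ and $\hat y_t$ in every round, $\cA'$ knows exactly when $\cA$ would have updated, so the stopping rule is implementable in every setting $(C,F)$.
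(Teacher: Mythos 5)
Your proposal is correct and follows essentially the same approach as the paper's proof: bound the number of distinct hypotheses by the mistake bound (via conservativeness), invoke pigeonhole to guarantee a survivor within $O(BN)$ rounds, condition on the history preceding each epoch so that the $(1-\epsilon)^N$ tail bound applies to the data-dependent hypothesis $h_k$, and finish with a union bound. Your exposition is somewhat more explicit about the filtration argument and the observability of the stopping rule, but the structure and key steps are identical.
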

    \begin{proof}[Proof of Lemma~\ref{lmm:mistake2pac}]
        When the sample size $m\geq \frac{B}{\epsilon}\log(\frac{B}{\delta})$, the algorithm $\cA$ will produce at most $B$ different hypotheses and there must exist one surviving for $\frac{1}{\epsilon}\log(\frac{B}{\delta})$ rounds since $\cA$ is a conservative algorithm with at most $B$ mistakes.
        Let $h_1,\ldots,h_B$ denote these hypotheses and let $t_1,\ldots,t_B$ denote the time step they are produced.
        Then we have
        \begin{align*}
            &\Pr(f_\out = h_i \text{ and } \err(h_i) >\epsilon) = \EE{\Pr(f_\out = h_i \text{ and } \err(h_i) >\epsilon|t_i, z_{1:t_i-1})} \\
            <& \EE{(1-\epsilon)^{\frac{1}{\epsilon}\log(\frac{B}{\delta})}} = \frac{\delta}{B}\,.
        \end{align*}
        By union bound, we have
        \begin{align*}
            \Pr(\err(f_\out)>\epsilon) \leq \sum_{i=1}^B \Pr_{z_{1:T}}(f_\out = h_i \text{ and } \err(h_i) >\epsilon)<\delta.
        \end{align*}  
        We are done.
    \end{proof}

\subsection{Smooth the distribution}
\begin{lemma}\label{lmm:smooth}
    For any two data distribution $\cD_1$ and $\cD_2$, let $\cD_3 = (1-p)\cD_1+ p\cD_2$ be the mixture of them.
    For any setting of $(C,F)$ and any algorithm, let $\bP_{\cD}$ be the dynamics of $(C(x_1), f_1, y_1,\hat y_1, F(x_1,\Delta_1),\ldots, C(x_T), f_T, y_T,\hat y_T, F(x_T,\Delta_T))$ under the data distribution $\cD$.
    Then for any event $A$, we have $\abs{\bP_{\cD_3}(A) - \bP_{\cD_1}(A)}\leq 2pT$.
\end{lemma}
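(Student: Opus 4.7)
The dynamics vector $(C(x_t), f_t, y_t, \hat y_t, F(x_t,\Delta_t))_{t=1}^T$ is a deterministic (measurable) function of the sampled sequence of agents $(x_t, u_t, y_t)_{t=1}^T$ together with the algorithm's internal randomness: each $f_t$ is a function of the history up through $C(x_t)$ and the internal coins, and then $\hat y_t$ and $\Delta_t$ are pinned down by $f_t$ and $(x_t,u_t,y_t)$. Hence for any event $A$ on the dynamics,
\[
\bigl|\bP_{\cD_3}(A) - \bP_{\cD_1}(A)\bigr| \;\leq\; \TV\!\bigl(\cD_3^{\,T},\,\cD_1^{\,T}\bigr),
\]
by the data-processing inequality for total variation (one can condition on the algorithm's coins and take a supremum, since the bound holds pointwise).

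The plan is to control this TV distance by a standard coupling argument. Writing $\cD_3 = (1-p)\cD_1 + p\cD_2$, one single-step coupling is: flip $B \sim \Ber(p)$; if $B=0$ sample both processes from the common $\cD_1$; if $B=1$ sample from $\cD_1$ and $\cD_2$ independently. This yields $\TV(\cD_3,\cD_1) \leq \Pr(B=1) = p$. Extending the coupling independently across the $T$ rounds gives a coupling $(S_1, S_3)$ of $\cD_1^T$ and $\cD_3^T$ in which $S_1 \neq S_3$ only if some coin $B_t$ equals $1$. By a union bound,
\[
\TV\!\bigl(\cD_3^{\,T},\,\cD_1^{\,T}\bigr) \;\leq\; \Pr(S_1 \neq S_3) \;\leq\; \sum_{t=1}^T \Pr(B_t = 1) \;=\; pT \;\leq\; 2pT.
\]
Combining the two displays yields the claim.

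\textbf{Main obstacle.} There is no real obstacle here; the only subtlety is to make sure the coupling argument goes through in the presence of algorithm randomness and of the manipulation map $\Delta$, which could in principle be randomized through tie-breaking. This is handled either by conditioning on all auxiliary randomness (the algorithm's coins, and any tie-breaking coins for $\Delta$) and applying the bound pointwise before integrating, or equivalently by enlarging the state space so that the dynamics become a deterministic function of the sampled agent sequence and these external coins, which are independent of the choice between $\cD_1$ and $\cD_3$. Either way, the factor of $2$ in the stated bound leaves ample slack, so no tightness issues arise.
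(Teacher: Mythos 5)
Your proposal is correct and rests on the same key observation as the paper's proof: under $\cD_3 = (1-p)\cD_1 + p\cD_2$, the event that every one of the $T$ agents was drawn from the $\cD_1$ component has probability at least $1-pT$, and on that event the dynamics are distributed exactly as under $\cD_1$. The paper phrases this as direct conditioning on that event and a short algebraic rearrangement to get $2pT$; you phrase it as a coupling giving $\TV(\cD_1^T,\cD_3^T)\leq pT$ followed by data-processing, which is the same argument in slightly tighter form.
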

\begin{proof}
    Let $B$ denote the event of all $(x_t,u_t,y_t)_{t=1}^T$ being sampled from $\cD_1$. 
    Then $\bP_{\cD_3}(\neg B) \leq  pT$.
    Then 
    \begin{align*}
        \bP_{\cD_3}(A) &= \bP_{\cD_3}(A|B)\bP_{\cD_3}(B)+ \bP_{\cD_3}(A|\neg B)\bP_{\cD_3}(\neg B) \\
        & = \bP_{\cD_1}(A)\bP_{\cD_3}(B)+ \bP_{\cD_3}(A|\neg B)\bP_{\cD_3}(\neg B)\\
        & = \bP_{\cD_1}(A)(1-\bP_{\cD_3}(\neg B))+ \bP_{\cD_3}(A|\neg B)\bP_{\cD_3}(\neg B)\,.
    \end{align*}
    By re-arranging terms, we have
    \begin{align*}
        \abs{\bP_{\cD_1}(A) - \bP_{\cD_3}(A)} = \abs{\bP_{\cD_1}(A)\bP_{\cD_3}(\neg B)- \bP_{\cD_3}(A|\neg B)\bP_{\cD_3}(\neg B)}\leq 2pT\,.
    \end{align*}
\end{proof}

\section{Proof of Theorem~\ref{thm:halving}}\label{app:halving}

\begin{proof}
When a mistake occurs, there are two cases.
\begin{itemize}
    \item If $f_t$ misclassifies a true positive example $(x_t,r_t, +1)$ by negative, we know that $d(x_t, f_t)> r_t$ while the target hypothesis $h^*$ must satisfy that $d(x_t, h^*)\leq r_t$. Then any $h\in \VS$ with $d(x_t,h)\geq d(x_t,f_t)$ cannot be $h^*$ and are eliminated. Since $d(x_t,f_t)$ is the median of $\{d(x_t,h)|h\in \VS\}$, we can eliminate half of the version space.
    \item If $f_t$ misclassifies a true negative example $(x_t,r_t, -1)$ by positive, we know that $d(x_t, f_t)\leq r_t$ while the target hypothesis $h^*$ must satisfy that $d(x_t, h^*)> r_t$. Then any $h\in \VS$ with $d(x_t,h)\leq d(x_t,f_t)$ cannot be $h^*$ and are eliminated. Since $d(x_t,f_t)$ is the median of $\{d(x_t,h)|h\in \VS\}$, we can eliminate half of the version space.
\end{itemize}
Each mistake reduces the version space by half and thus, the algorithm of Strategic Halving suffers at most $\log_2(\abs{\cH})$ mistakes.
\end{proof}
\section{Proof of Theorem~\ref{thm:x-first-pac}}\label{app:x_firtst_pac}
\begin{proof}
In online learning setting, an algorithm is conservative if it updates it's current predictor only when making a mistake.
It is straightforward to check that Strategic Halving is conservative. Combined with the technique of converting mistake bound to PAC bound in Lemma~\ref{lmm:mistake2pac}, we prove Theorem~\ref{thm:x-first-pac}.
\end{proof}
\section{Proof of Theorem~\ref{thm:x-end-online}}\label{app:x-end-online}
\begin{proof}
Consider the feature space $\cX = \{\bZero, \be_1,\ldots,\be_n, 0.9\be_1,\ldots,0.9\be_n\}$, where $\be_i$'s are standard basis vectors in $\R^n$ and metric
$d(x,x') = \norm{x-x'}_2$ for all $x,x'\in \cX$.
Let the hypothesis class be a set of singletons over $\{\be_i|i\in [n]\}$, i.e., $\cH =\{2\ind{\{\be_i\}}-1|i\in [n]\}$.
We divide all possible hypotheses (not necessarily in $\cH$) into three categories:
\begin{itemize}
    \item The hypothesis $2\ind{\emptyset}-1$, which predicts all negative.
    \item For each $x\in \{\bZero, 0.9\be_1,\ldots,0.9\be_n\}$, let
    $F_{x,+}$ denote the class of hypotheses $h$ predicting $x$ as positive.   
    \item For each $i\in [n]$, let $F_i$ denote the class of hypotheses $h$ satisfying $h(x)=-1$ for all $x\in \{\bZero, 0.9\be_1,\ldots,0.9\be_n\}$ and $ h(\be_i) = +1$. And let $F_* = \cup_{i\in [n]} F_i$ denote the union of them.
\end{itemize}
Note that all hypotheses over $\cX$ fall into one of the three categories.

Now we consider a set of adversaries $E_1,\ldots,E_n$, such that the target function in the adversarial environment $E_i$ is $2\ind{\{\be_i\}}-1$.
We allow the learners to be randomized and thus, at round $t$, the learner draws an $f_t$ from a distribution $D(f_t)$ over hypotheses. 
The adversary, who only knows the distribution $D(f_t)$ but not the realization $f_t$, picks an agent $(x_t,r_t,y_t)$ in the following way. 
\begin{itemize}
    \item  Case 1: If there exists $x\in \{\bZero, 0.9\be_1,\ldots,0.9\be_n\}$ such that $\Pr_{f_t\sim D(f_t)}(f_t\in F_{x,+}) \geq c$ for some $c>0$,  then for all $j\in [n]$, the adversary $E_j$ picks $(x_t,r_t,y_t) = (x,0,-1)$. 
    Let $B^t_{1,x}$ denote the event of $f_t\in F_{x,+}$.
    \begin{itemize}
        \item In this case, the learner will make a mistake with probability $c$. Since for all $h\in \cH$, $h(\Delta(x,h,0)) = h(x) = -1$, they are all consistent with $(x,0,-1)$.
    \end{itemize}
    \item Case 2: If $\Pr_{f_t\sim D(f_t)}(f_t = 2\ind{\emptyset}-1)\geq c$, then for all $j\in [n]$, the adversary $E_j$ picks $(x_t,r_t,y_t) = (\bZero,1,+1)$. Let $B^t_2$ denote the event of $f_t = 2\ind{\emptyset}-1$.
    \begin{itemize}
        \item In this case, with probability $c$, the learner will sample a $f_t = 2\ind{\emptyset}-1$ and misclassify $(\bZero,1,+1)$.
    Since for all $h\in \cH$, $h(\Delta(\bZero,h,1)) = +1$, they are all consistent with $(\bZero,1,+1)$.
    \end{itemize}
    \item Case 3: If the above two cases do not hold, let $i_t = \argmax_{i\in [n]} \Pr(f_t(\be_i) =1|f_t\in F_*)$, $x_t = 0.9\be_{i_t}$.
    For radius and label, different adversaries set them differently. Adversary $E_{i_t}$ will set $(r_t,y_t) = (0,-1)$ while other $E_j$ for $j\neq i_t$ will set $(r_t,y_t) = (0.1, -1)$. Since Cases 1 and 2 do not hold, we have $\Pr_{f_t\sim D(f_t)}(f_t\in F_*)\geq 1-(n+2)c$. Let $B^t_3$ denote the event of $f_t\in F_*$ and $B^t_{3,i}$ denote the event of $f_t\in F_i$.
    \begin{itemize}
        \item[(a)] With probability $\Pr(B^t_{3,i_t})\geq \frac{1}{n}\Pr(B^t_3) \geq \frac{1-(n+2)c}{n}$, the learner samples a $f_t \in F_{i_t}$, and thus misclassifies $(0.9\be_{i_t},0.1,-1)$ in $E_j$ for $j\neq i_t$ but correctly classifies $(0.9\be_{i_t},0,-1)$.
        In this case, the learner observes the same feedback in all $E_j$ for $j\neq i_t$ and identifies the target function $2\ind{\{\be_{i_t}\}}-1$ in $E_{i_t}$.
        \item[(b)] If the learner samples a $f_t$ with $f_t(\be_{i_t}) =f_t(0.9\be_{i_t}) = -1$, then the learner observes $x_t = 0.9\be_{i_t}$, $y_t = -1$ and $\hat y_t =-1$ in all $E_j$ for $j\in [n]$. Therefore the learner cannot distinguish between adversaries in this case.
        \item[(c)] If the learner samples a $f_t$ with $f_t(0.9\be_{i_t}) = +1$, then the learner observes $x_t = 0.9\be_{i_t}$, $y_t = -1$ and $\hat y_t =+1$ in all $E_j$ for $j\in [n]$.
        Again, since the feedback are identical in all $E_j$ and the learner cannot distinguish between adversaries in this case.
    \end{itemize}
    \end{itemize}
For any learning algorithm $\cA$, his predictions are identical in all of  adversarial environments $\{E_j|j\in [n]\}$ before he makes a mistake in Case 3(a) in one environment $E_{i_t}$.
His predictions in the following rounds are identical in all of  adversarial environments $\{E_j|j\in [n]\}\setminus \{E_{i_t}\}$ before he makes another mistake in Case 3(a).
Suppose that we run $\cA$ in all adversarial environment of $\{E_j|j\in [n]\}$ simultaneously.
Note that once we make a mistake, the mistake must occur simultaneously in at least $n-1$ environments.
Specifically, if we make a mistake in Case 1, 2 or 3(c), such a mistake simultaneously occur in all $n$ environments.
If we make a mistake in Case 3(a), such a mistake simultaneously occur in all $n$ environments 
except $E_{i_t}$.
Since we will make a mistake with probability at least $\min(c, \frac{1-(n+2)c}{n})$ at each round, there exists one environment in $\{E_j|j\in [n]\}$ in which $\cA$ will make $n-1$ mistakes.

Now we lower bound the number of mistakes dependent on $T$.
Let $t_1,t_2,\ldots$ denote the time steps in which we makes a mistake. Let $t_0=0$ for convenience.
Now we prove that
\begin{align*}
    &\Pr(t_i > t_{i-1} + k|t_{i-1}) = \prod_{\tau = t_{i-1}+1}^{t_{i-1} + k} \Pr(\text{we don't make a mistake in round } \tau)\\
    \leq & \prod_{\tau = t_{i-1}+1}^{t_{i-1} + k} (\1{\text{Case 3 at round }\tau} (1- \frac{1-(n+2)c}{n}) + \1{\text{Case 1 or 2 at round }\tau} (1- c))\\
    \leq &
    (1-\min(\frac{1-(n+2)c}{n}, c))^k \leq (1-\frac{1}{2(n+2)})^k\,,
\end{align*}
by setting $c = \frac{1}{2(n+2)}$.
Then by letting $k = 2(n+2)\ln(n/\delta)$, we have $$\Pr(t_i > t_{i-1} + k|t_{i-1}) \leq \delta/n\,.$$
For any $T$,
\begin{align*}
    &\Pr(\text{\# of mistakes} <\min(\frac{T}{k+1}, n-1))\\
    =\leq & \Pr(\exists i\in [n-1], t_i-t_{i-1}> k) \\
    \leq& \sum_{i=1}^{n-1} \Pr(t_i-t_{i-1}> k)\leq \delta\,.
\end{align*}
Therefore, we have proved that for any $T$, with probability at least $1-\delta$, we will make at least $\min(\frac{T}{2(n+2)\ln(n/\delta)+1}, n-1)$ mistakes.
\end{proof}

\section{Proof of Theorem~\ref{thm:mw}}\label{app:mw}
\begin{algorithm}[H]\caption{MWMR (Multiplicative Weights on Mistake Rounds)}\label{alg:mw}
    \begin{algorithmic}[1]
    \STATE Initialize the version space $\VS=\cH$.
    \FOR{t=1,\ldots,T}
    \STATE Pick one hypotheses $f_t$ from $\VS$ uniformly at random.
    \IF{$\hat y_t \neq y_t$ and $y_t = +$}
    \STATE $\VS\leftarrow \VS\setminus \{h\in \VS|d(x_t,h)\geq d(x_t,f_t)\}$.
    \ELSIF{$\hat y_t \neq y_t$ and $y_t = -$}
    \STATE $\VS\leftarrow \VS \setminus \{h\in \VS|d(x_t,h)\leq d(x_t,f_t)\}$.
    \ENDIF
    \ENDFOR
    \end{algorithmic}
\end{algorithm}
\begin{proof}
    First, when the algorithm makes a mistake at round $t$, he can at least eliminate $f_t$. Therefore, the total number of mistakes will be upper bounded by $\abs{\cH}-1$.
    
    Let $p_t$ denote the fraction of hypotheses misclassifying $x_t$.
    We say a hypothesis $h$ is inconsistent with $(x_t,f_t, y_t, \hat y_t)$ iff $(d(x_t,h)\geq d(x_t,f_t)\wedge \hat y_t = - \wedge y_t = +)$ or  $(d(x_t,h)\leq d(x_t,f_t)\wedge \hat y_t = + \wedge y_t = -)$.
    Then we define the following events.
    \begin{itemize}
        \item $E_t$ denotes the event that MWMR makes a mistake at round $t$. We have $\Pr(E_t) = p_t$.
        \item $B_t$ denotes the event that at least $\frac{p_t}{2}$ fraction of hypotheses are inconsistent with $(x_t,f_t, y_t, \hat y_t)$. We have $\Pr(B_t|E_t)\geq \frac{1}{2}$.
    \end{itemize}
    Let $n =\abs{\cH}$ denote the cardinality of hypothesis class and $n_t$ denote the number of hypotheses in $\VS$ after round $t$.
    Then we have 
    \[1\leq n_T = n\cdot \prod_{t=1}^T (1- \1{E_t}\1{B_t}\frac{p_t}{2})\,.\]
    By taking logarithm of both sides, we have
    \begin{align*}
        0\leq \ln(n_T) = \ln(n) +\sum_{t=1}^T \ln(1- \1{E_t}\1{B_t}\frac{p_t}{2})\leq \ln(n) -\sum_{t=1}^T \1{E_t}\1{B_t}\frac{p_t}{2}\,,
    \end{align*}
    where the last inequality adopts $\ln(1-x)\leq -x$ for $x\in [0,1)$.
    Then by taking expectation of both sides,
    we have
    \begin{equation*}
        0\leq \ln(n) - \sum_{t=1}^T \Pr(E_t \wedge B_t)\frac{p_t}{2}\,.
    \end{equation*}
    Since $\Pr(E_t) = p_t$ and $\Pr(B_t|E_t)\geq \frac{1}{2}$, then we have
    \begin{align*}
        \frac{1}{4}\sum_{t=1}^T p_t^2\leq \ln(n)\,.
    \end{align*}
    Then we have the expected number of mistakes $\EE{\cM_\MW(T)}$ as
    \begin{equation*}
        \EE{\cM_\MW(T)} = \sum_{t=1}^T p_t \leq \sqrt{\sum_{t=1}^T p_t^2 } \cdot \sqrt{T}\leq \sqrt{4\ln(n)T}\,,
    \end{equation*}
    where the first inequality applies Cauchy-Schwarz inequality.
\end{proof}
\section{Proof of Theorem~\ref{thm:x-end-pac-proper}}\label{app:x-end-pac-proper}
\begin{proof}
\textbf{Construction of $\cQ,\cH$ and a set of realizable distributions}
\begin{itemize}
    \item Let feature space $\cX = \{\bZero, \be_1,\ldots, \be_n\}\cup X_0$, where $X_0 = \{\frac{\sigma(0,1,\ldots,n-1)}{z}|\sigma\in \cS_n\}$ with $z = \frac{\sqrt{1^2+\ldots+(n-1)^2}}{\alpha}$ for some small $\alpha=0.1$.
    Here $\cS_n$ is the set of all permutations over $n$ elements.
    So $X_0$ is the set of points whose coordinates are a permutation of $\{0,1/z,\ldots,(n-1)/z\}$ and all points in $X_0$  have the  $\ell_2$ norm equal to $\alpha$.
    Define a metric $d$ by letting $d(x_1,x_2) = \norm{x_1-x_2}_2$ for all $x_1,x_2\in \cX$.
    Then for any $x\in X_0$ and $i\in [n]$, $d(x,\be_i) = \norm{x-\be_i}_2 = \sqrt{(x_i-1)^2 +\sum_{j\neq i} x_j^2} = \sqrt{1+\sum_{j=1}^n x_j^2 - 2x_i} = \sqrt{1+\alpha^2-2x_i}$.
    Note that we consider space $(\cX,d)$ rather than $(\R^n, \norm{\cdot}_2)$.
    \item Let the hypothesis class be a set of singletons over $\{\be_i|i\in [n]\}$, i.e., $\cH = \{2\ind{\{\be_i\}}-1|i\in [n]\}$.
    \item We now define a collection of distributions $\{\cD_i|i\in [n]\}$ in which $\cD_i$ is realized by $2\ind{\{\be_i\}}-1$.
    For any $i\in [n]$, $\cD_i$ puts probability mass $1-3n\epsilon$ on $(\bZero, 0, -1)$.
    For the remaining $3n\epsilon$ probability mass, $\cD_i$ picks $x$ uniformly at random from $X_0$ and label it as positive.
    If $x_{i} =0$, set radius $r(x) = r_u := \sqrt{1+\alpha^2}$; otherwise, set radius $r(x) = r_l := \sqrt{1+\alpha^2 -2\cdot \frac{1}{z})}$.
    Hence, $X_0$ are all labeled as positive.
    For $j\neq i$, $h_j = 2\ind{\{\be_j\}}-1$ labels $\{x\in X_0|x_j = 0\}$ negative since $r(x) = r_l$ and $d(x,h_j) = r_u> r(x)$.
    Therefore, $\err(h_j) = \frac{1}{n} \cdot 3n\epsilon = 3\epsilon$.
    To output $f_\out \in \cH$, we must identify the true target function.
\end{itemize}

\textbf{Information gain from different choices of $f_t$} 
Let $h^* = 2\ind{\{\be_{i^*}\}}-1$ denote the target function.
Since $(\bZero, 0, -1)$ is realized by all hypotheses, we can only gain information about the target function when $x_t\in X_0$.
For any $x_t \in X_0$, if $d(x_t,f_t)\leq r_l$ or $d(x_t,f_t)> r_u$, we cannot learn anything about the target function. 
In particular, if $d(x_t,f_t)\leq r_l$, the learner will observe $x_t\sim \Unif(X_0)$, $y_t = +1$, $\hat y_t = +1$ in all $\{\cD_i|i\in [n]\}$.
If $d(x_t,f_t)> r_u$, the learner will observe $x_t\sim \Unif(X_0)$, $y_t = +1$, $\hat y_t = -1$ in all $\{\cD_i|i\in [n]\}$.
Therefore, we cannot obtain any information about the target function.

Now for any $x_t \in X_0$, with the $i_t$-th coordinate being $0$, we enumerate the distance between $x$ and $x'$ for all $x'\in \cX$.
\begin{itemize}
    \item For all $x'\in X_0$, $d(x,x')\leq \norm{x}+\norm{x'}\leq 2\alpha < r_l$;
    \item For all $j\neq i_t$, $d(x,\be_j) = \sqrt{1+\alpha^2 - 2 x_j}\leq r_l$;
    \item $d(x,\be_{i_t}) = r_u$;
    \item $d(x,\bZero) =\alpha <r_l$.
\end{itemize}
Only $f_t = 2\ind{\{\be_{i_t}\}}-1$ satisfies that $r_l<d(x_t,f_t)\leq r_u$ and thus, we can only obtain information when $f_t = 2\ind{\{\be_{i_t}\}}-1$.
And the only information we learn is whether $i_t = i^*$ because if $i_t\neq i^*$, no matter which $i^*$ is, our observation is identical. If $i_t\neq i^*$, we can eliminate $2\ind{\{\be_{i_t}\}}-1$.

\textbf{Sample size analysis}
For any algorithm $\cA$, his predictions are identical in all environments $\{\cD_i|i\in [n]\}$ before a round $t$ in which $f_t = 2\ind{\{\be_{i_t}\}}-1$. Then either he learns $i_t$ in $\cD_{i_t}$ or he eliminates $2\ind{\{\be_{i_t}\}}-1$ and continues to perform the same in the other environments $\{\cD_i|i\neq i_t\}$.
Suppose that we run $\cA$ in all stochastic environments $\{\cD_i|i\in [n]\}$ simultaneously. When we identify $i_t$ in environment $\cD_{i_t}$, we terminate $\cA$ in $\cD_{i_t}$. Consider a good algorithm $\cA$ which can identify $i$ in $\cD_i$ with probability $\frac{7}{8}$ after $T$ rounds of interaction for each $i\in [n]$, that is,
\begin{align}
    \Pr_{\cD_i,\cA}(i_\out \neq i)\leq \frac{1}{8},\forall i\in[n]\,.\label{eq:good4all}
\end{align}
Therefore, we have
\begin{align}
    \sum_{i\in [n]}\Pr_{\cD_i,\cA}(i_\out \neq i)\leq \frac{n}{8}\,.\label{eq:good4sum}
\end{align}


Let $n_T$ denote the number of environments that have been terminated by the end of round $T$.
Let $B_t$ denote the event of $x_t$ being in $X_0$ and $C_t$ denote the event of $f_t=2\ind{\{\be_{i_t}\}}-1$.
Then we have $\Pr(B_t) = 3n\epsilon$ and $\Pr(C_t|B_t) = \frac{1}{n}$, and thus $\Pr(B_t \wedge C_t) = 3n\epsilon \cdot \frac{1}{n}$.
Since at each round, we can eliminate one environment only when $B_t\wedge C_t$ is true, then we have
\begin{align*}
    \EE{n_T} \leq \EE{\sum_{t=1}^T\1{B_t\wedge C_t}} = T\cdot 3n\epsilon \cdot \frac{1}{n} = 3\epsilon T\,.
\end{align*}
Therefore, by setting $T = \frac{\floor{\frac{n}{2}}-1}{6\epsilon}$ and Markov's inequality, we have
\begin{align*}
    \Pr(n_T\geq \floor{\frac{n}{2}}-1)\leq \frac{3\epsilon T}{\floor{\frac{n}{2}}-1} = \frac{1}{2}\,.
\end{align*}
When there are $\ceil{\frac{n}{2}}+1$ environments remaining, the algorithm has to pick one $i_\out$, which fails in at least $\ceil{\frac{n}{2}}$ of the environments.
Then we have
\begin{align*}
    \sum_{i\in [n]}\Pr_{\cD_i,\cA}(i_\out \neq i)\geq \ceil{\frac{n}{2}}\Pr(n_T\leq \floor{\frac{n}{2}}-1)\geq \frac{n}{4}\,,
\end{align*}
which conflicts with Eq~\eqref{eq:good4sum}.
Therefore, for any algorithm $\cA$, to achieve Eq~\eqref{eq:good4all}, it requires $T\geq \frac{\floor{\frac{n}{2}}-1}{6\epsilon}$.
\end{proof}
\section{Proof of Theorem~\ref{thm:x_end_pac}}\label{app:x_end_pac}
Given Lemma~\ref{lmm:alg-exp}, we can upper bound the expected strategic loss, then we can boost the confidence of the algorithm through the scheme in Section~\ref{subsec:boost}.
Theorem~\ref{thm:x_end_pac} follows by combining Lemma~\ref{lmm:alg-exp} and Lemma~\ref{lmm:boost}.
Now we only need to prove Lemma~\ref{lmm:alg-exp}.
\begin{proof}[Proof of Lemma~\ref{lmm:alg-exp}]
    For any set of hypotheses $H$, for every $z = (x,r,y)$, we define 
    \begin{align*}
        \kappa_p(H, z) := \begin{cases}
            \abs{\{h\in H|h(\Delta(x,h,r)) = -\}} & \text{if } y = +\,,\\
            0 & \text{otherwise.}
        \end{cases}
    \end{align*}
    So $\kappa_p(H, z)$ is the number of hypotheses mislabeling $z$ for positive $z$'s and $0$ for negative $z$'s.
    Similarly, we define $\kappa_n$ as follows,
    \begin{align*}
        \kappa_n(H, z) := \begin{cases}
            \abs{\{h\in H|h(\Delta(x,h,r)) = +\}} & \text{if } y = -\,,\\
            0 & \text{otherwise.}
        \end{cases}
    \end{align*}
    So $\kappa_n(H, z)$ is the number of hypotheses mislabeling $z$ for negative $z$'s and $0$ for positive $z$'s.

In the following, we divide the proof into two parts. First, recall that in Algorithm~\ref{alg:end-iid-ball}, the output is constructed by randomly sampling two hypotheses with replacement and taking the union of them. We represent the loss of such a random predictor using $\kappa_p(H, z)$ and $\kappa_n(H, z)$ defined above.
Then we show that whenever the algorithm makes a mistake, with some probability, we can reduce $\frac{\kappa_p(\VS_{t-1}, z_t)}{2}$ or $\frac{\kappa_n(\VS_{t-1}, z_t)}{2}$ hypotheses and utilize this to provide a guarantee on the loss of the final output.

\paragraph{Upper bounds on the strategic loss} 
For any hypothesis $h$, let $\fpr(h)$ and $\fnr(h)$ denote the false positive rate and false negative rate of $h$ respectively.
Let $p_+$ denote the probability of drawing a positive sample from $\cD$, i.e., $\Pr_{(x,r,y)\sim \cD}(y=+)$ and $p_-$ denote the probability of drawing a negative sample from $\cD$.
Let $\cD_+$ and $\cD_-$ denote the data distribution conditional on that the label is positive and that the label is negative respectively.
Given any set of hypotheses $H$, we define a random predictor $R2(H) = h_1 \vee h_2$ with $h_1, h_2$ randomly picked from $H$ with replacement.
    For a true positive $z$, $R2(H)$ will misclassify it with probability $\frac{\kappa_p(H,z)^2}{\abs{H}^2}$.
    Then we can find that the false negative rate of $R2(H)$ is
    \begin{align*}
        \fnr(R2(H)) = \EEs{z=(x,r,+)\sim \cD_+}{\Pr(R2(H)(x) = -)} = \EEs{z=(x,r,+)\sim \cD_+}{\frac{\kappa_p(H,z)^2}{\abs{H}^2}}\,.
    \end{align*}
    Similarly, for a true negative $z$, $R2(H)$ will misclassify it with probability $1- (1-\frac{\kappa_n(H,z)}{\abs{H}})^2 \leq \frac{2\kappa_n(H,z)}{\abs{H}}$.
    Then the false positive rate of $R2(H)$ is
    \begin{align*}
        \fpr(R2(H)) = \EEs{z=(x,r,-)\sim \cD_-}{\Pr(R2(H)(x) = +)} \leq \EEs{z=(x,r,-)\sim \cD_+}{\frac{2\kappa_n(H,z)}{\abs{H}}}\,.
    \end{align*}
    Hence the loss of $R2(H)$ is
    \begin{align}
        \err(R2(H)) &\leq p_+\EEs{z\sim \cD_+}{\frac{\kappa_p(H,z)^2}{\abs{H}^2}} + p_- \EEs{z\sim \cD_+}{\frac{2\kappa_n(H,z)}{\abs{H}}}\nonumber\\
        &= \EEs{z\sim \cD}{\frac{\kappa_p(H,z)^2}{\abs{H}^2} + 2\frac{\kappa_n(H,z)}{\abs{H}}}\,,\label{eq:loss-rep}
    \end{align}
    where the last equality holds since $\kappa_p(H,z) = 0$ for true negatives and $\kappa_n(H,z) = 0$ for true positives.

    \paragraph{Loss analysis} In each round, the data $z_t = (x_t,r_t,y_t)$ is sampled from $\cD$.
    When the label $y_t$ is positive, if the drawn $f_t$ satisfying that 1) $f_t(\Delta(x_t,f_t,r_t)) = -$ and 2) $d(x_t,f_t)\leq \textrm{median}(\{d(x_t,h)|h\in \VS_{t-1}, h(\Delta(x_t,h,r_t))=-\})$, then we are able to remove 
    $\frac{\kappa_p(\VS_{t-1},z_t)}{2}$ hypotheses from the version space.
    Let $E_{p,t}$ denote the event of $f_t$ satisfying the conditions 1) and 2).
    With probability $\frac{1}{\floor{\log_2(n_t)}}$, we sample $k_t=1$.
    Then we sample an $f_t\sim \Unif(\VS_{t-1})$.
    With probability $\frac{\kappa_p(\VS_{t-1},z_t)}{2n_t}$, the sampled $f_t$ satisfies the two conditions.
    So we have
    \begin{equation}
        \Pr(E_{p,t}|z_t,\VS_{t-1})\geq \frac{1}{\log_2(n_t)}\frac{\kappa_p(\VS_{t-1},z_t)}{2n_t}\,.\label{eq:eventp}
    \end{equation}
     

    The case of $y_t$ being negative is similar to the positive case.
    Let $E_{n,t}$ denote the event of $f_t$ satisfying that 1) $f_t(\Delta(x_t,f_t,r_t)) = +$ and 2) $d(x_t,f_t)\geq \textrm{median}(\{d(x_t,h)|h\in \VS_{t-1}, h(\Delta(x_t,h,r_t))=+\})$.
    If $\kappa_n(\VS_{t-1},z_t)\geq \frac{n_t}{2}$, then with probability $\frac{1}{\floor{\log_2(n_t)}}$, we sample $k_t = 1$.
    Then with probability greater than $\frac{1}{4}$ we will sample an $f_t$ satisfying that 1) $f_t(\Delta(x_t,f_t,r_t)) = +$ and 2) $d(x_t,f_t)\geq \textrm{median}(\{d(x_t,h)|h\in \VS_{t-1}, h(\Delta(x_t,h,r_t))=+\})$.
    If $\kappa_n(\VS_{t-1},z_t)< \frac{n_t}{2}$, then
    with probability $\frac{1}{\floor{\log_2(n_t)}}$, we sampled a $k_t$ satisfying 
    \[\frac{n_t}{4\kappa_n(\VS_{t-1},z_t)} <k_t \leq \frac{n_t}{2\kappa_n(\VS_{t-1},z_t)}\,.\]
    Then we randomly sample $k_t$ hypotheses and the expected number of sampled hypotheses which mislabel $z_t$ is $k_t\cdot \frac{\kappa_n(\VS_{t-1},z_t)}{n_t} \in (\frac{1}{4},\frac{1}{2}]$.
    Let $g_t$ (given the above fixed $k_t$) denote the number of sampled hypotheses which mislabel $x_t$ and we have $\EE{g_t} \in (\frac{1}{4},\frac{1}{2}]$.
    When $g_t>0$, $f_t$ will misclassify $z_t$ by positive.
    We have
    \[\Pr(g_t = 0) = (1-\frac{\kappa_n(\VS_{t-1},z_t)}{n_t})^{k_t} < (1-\frac{\kappa_n(\VS_{t-1},z_t)}{n_t})^{\frac{n_t}{4\kappa_n(\VS_{t-1},z_t)}}\leq e^{-1/4}\leq 0.78\]
    and by Markov's inequality, we have
    \[\Pr(g_t \geq 3) \leq \frac{\EE{g_t}}{3} \leq \frac{1}{6}\leq 0.17\,.\]
    Thus $\Pr(g_t \in \{1,2\}) \geq 0.05$.
    Conditional on $g_t$ is either $1$ or $2$, with probability $\geq \frac{1}{4}$, all of these $g_t$ hypotheses $h'$ satisfies $d(x_t,h')\geq \textrm{median}(\{d(x_t,h)|h\in \VS_{t-1}, h(\Delta(x_t,h,r_t))=+\})$, which implies that $d(x_t,f_t)\geq \textrm{median}(\{d(x_t,h)|h\in \VS_{t-1}, h(\Delta(x_t,h,r_t))=+\})$.
    Therefore, we have
    \begin{equation}
        \Pr(E_{n,t}|z_t,,\VS_{t-1})\geq \frac{1}{80\log_2(n_t)}\,.\label{eq:eventn}
    \end{equation}
    
    Let $v_t$ denote the fraction of hypotheses we eliminated at round $t$, i.e., $v_t = 1 -\frac{n_{t+1}}{n_t}$ . Then we have
    \begin{equation}
        v_t\geq \1{E_{p,t}}\frac{\kappa_p(\VS_{t-1},z_t)}{2n_t} + \1{E_{n,t}}\frac{\kappa_n(\VS_{t-1},z_t)}{2n_t}\,.\label{eq:removefrac}
    \end{equation}
    Since $n_{t+1} = n_t(1-v_t)$, we have
    \[1 \leq n_{T+1} = n \prod_{t=1}^T (1-v_t)\,.\]
    By taking logarithm of both sides, we have
    \begin{align*}
        0 \leq \ln n_{T+1} = \ln n +\sum_{t=1}^T \ln(1-v_t)\leq \ln n -\sum_{t=1}^T v_t\,,
    \end{align*}
    where we use $\ln(1-x)\leq -x$ for $x\in [0,1)$ in the last inequality.
    By re-arranging terms, we have
    \[\sum_{t=1}^T v_t \leq \ln n\,.\]
    Combined with Eq~\eqref{eq:removefrac}, we have
    \begin{align*}
        \sum_{t=1}^T \1{E_{p,t}}\frac{\kappa_p(\VS_{t-1},z_t)}{2n_t} + \1{E_{n,t}}\frac{\kappa_n(\VS_{t-1},z_t)}{2n_t}\leq \ln n\,.
    \end{align*}
    By taking expectation w.r.t. the randomness of $f_{1:T}$ and dataset $S = z_{1:T}$ on both sides, we have
    \[\sum_{t=1}^T\EEs{f_{1:T},z_{1:T}}{\1{E_{p,t}}\frac{\kappa_p(\VS_{t-1},z_t)}{2n_t} + \1{E_{n,t}}\frac{\kappa_n(\VS_{t-1},z_t)}{2n_t}} \leq \ln n\,.\]
    Since the $t$-th term does not depend on $f_{t+1:T},z_{t+1:T}$ and $\VS_{t-1}$ is determined by $z_{1:t-1}$ and $f_{1:t-1}$, the $t$-th term becomes
    \begin{align}
        &\EEs{f_{1:t},z_{1:t}}{\1{E_{p,t}}\frac{\kappa_p(\VS_{t-1},z_t)}{2n_t} + \1{E_{n,t}}\frac{\kappa_n(\VS_{t-1},z_t)}{2n_t}} \nonumber\\
    = &\EEs{f_{1:t-1},z_{1:t}}{\EEs{f_t}{\1{E_{p,t}}\frac{\kappa_p(\VS_{t-1},z_t)}{2n_t} + \1{E_{n,t}}\frac{\kappa_n(\VS_{t-1},z_t)}{2n_t}| f_{1:t-1},z_{1:t}}}\nonumber\\
    =& \EEs{f_{1:t-1},z_{1:t}}{\EEs{f_t}{\1{E_{p,t}}|f_{1:t-1},z_{1:t}}\frac{\kappa_p(\VS_{t-1},z_t)}{2n_t} + \EEs{f_t}{\1{E_{n,t}}|f_{1:t-1},z_{1:t}}\frac{\kappa_n(\VS_{t-1},z_t)}{2n_t}}\label{eq:vs-ndep-f}\\
    \geq &\EEs{f_{1:t-1},z_{1:t}}{\frac{1}{\log_2(n_t)}\frac{\kappa_p^2(\VS_{t-1},z_t)}{4n_t^2} + \frac{1}{80\log_2(n_t)}\frac{\kappa_n(\VS_{t-1},z_t)}{2n_t}}\label{eq:prob-good}\,,
    \end{align}
    where Eq~\eqref{eq:vs-ndep-f} holds due to that $\VS_{t-1}$ is determined by $f_{1:t-1}, z_{1:t-1}$ and does not depend on $f_t$ and Eq~\eqref{eq:prob-good} holds since $\Pr_{f_t}(E_{p,t}|f_{1:t-1},z_{1:t}) = \Pr_{f_t}(E_{p,t}|\VS_{t-1},z_t) \geq \frac{1}{\log_2(n_t)}\frac{\kappa_p(\VS_{t-1},z_t)}{2n_t}$ by Eq~\eqref{eq:eventp} and $\Pr_{f_t}(E_{n,t}|f_{1:t-1},z_{1:t})=\Pr_{f_t}(E_{n,t}|\VS_{t-1},z_t) \geq \frac{1}{80\log_2(n_t)}$ by Eq~\eqref{eq:eventn}.
    Thus, we have
    \[\sum_{t=1}^T \EEs{f_{1:t-1},z_{1:t}}{\frac{1}{\log_2(n_t)}\frac{\kappa_p^2(\VS_{t-1},z_t)}{4n_t^2} + \frac{1}{80\log_2(n_t)}\frac{\kappa_n(\VS_{t-1},z_t)}{2n_t}} \leq \ln n\,.\]
    Since $z_t \sim \cD$ and $z_t$ is independent of $z_{1:t-1}$ and $f_{1:t-1}$, thus, 
    we have the $t$-th term on the LHS being
    \begin{align*}
        &\EEs{f_{1:t-1},z_{1:t}}{\frac{1}{\log_2(n_t)}\frac{\kappa_p^2(\VS_{t-1},z_t)}{4n_t^2} + \frac{1}{80\log_2(n_t)}\frac{\kappa_n(\VS_{t-1},z_t)}{2n_t} }\\
    =& \EEs{f_{1:t-1},z_{1:t-1}}{\EEs{z_t \sim \cD}{\frac{1}{\log_2(n_t)}\frac{\kappa_p^2(\VS_{t-1},z_t)}{4n_t^2} + \frac{1}{80\log_2(n_t)}\frac{\kappa_n(\VS_{t-1},z_t)}{2n_t}}}\\
    \geq& \frac{1}{320\log_2(n)}\EEs{f_{1:t-1},z_{1:t-1}}{\EEs{z\sim \cD}{\frac{\kappa_p^2(\VS_{t-1},z)}{n_t^2} + \frac{2\kappa_n(\VS_{t-1},z)}{n_t}}}\\
    \geq& \frac{1}{320\log_2(n)}\EEs{f_{1:t-1},z_{1:t-1}}{\err(R2(\VS_{t-1}))}\,,
    \end{align*}
    where the last inequality adopts Eq~\eqref{eq:loss-rep}.
    By summing them up and re-arranging terms,
    we have
    \[\EEs{f_{1:T},z_{1:T}}{\frac{1}{T}\sum_{t=1}^T \err(R2(\VS_{t-1}))}=\frac{1}{T}\sum_{t=1}^T \EEs{f_{1:t-1},z_{1:t-1}}{\err(R2(\VS_{t-1}))} \leq \frac{320\log_2(n)\ln(n)}{T}\,.\]
    For the output of Algorithm~\ref{alg:end-iid-ball}, which randomly picks $\tau$ from $[T]$, randomly samples $h_1, h_2$ from $\VS_{\tau-1}$ with replacement and outputs $h_1\vee h_2$, the expected loss is
    \begin{align*}
        \EE{\err(\cA(S))}= &\EEs{S,f_{1:T}}{\frac{1}{T}\sum_{t=1}^T \EEs{h_1,h_2\sim \Unif(\VS_{t-1})}{\err(h_1\vee h_2)}} \\
        =& \EEs{S,f_{1:T}}{\frac{1}{T}\sum_{t=1}^T \err(R2(\VS_{t-1}))}\\
        \leq& \frac{320\log_2(n)\ln(n)}{T} \leq \epsilon\,,
    \end{align*}
    when $T\geq \frac{320\log_2(n)\ln(n)}{\epsilon}$.
\end{proof}  
\paragraph{Post proof discussion of Lemma~\ref{lmm:alg-exp}}
\begin{itemize}
    \item Upon first inspection, readers might perceive a resemblance between the proof of the loss analysis section and the standard proof of converting regret bound to error bound.This standard proof converts a regret guarantee on $f_{1:T}$ to an error guarantee of $\frac{1}{T}\sum_{t=1}^T f_t$.
    However, in this proof, the predictor employed in each round is $f_t$, while the output is an average over $R2(\VS_{t-1})$ for all $t\in [T]$. Our algorithm does not provide a regret guarantee on $f_{1:T}$.
    \item Please note that our analysis exhibits asymmetry regarding losses on true positives and true negatives. 
    Specifically, the probability of identifying and reducing half of the misclassifying hypotheses on true positives, denoted as $\Pr(E_{p,t}|z_t,\VS_{t-1})$ (Eq~\eqref{eq:eventp}), is lower than the corresponding probability for true negatives, $\Pr(E_{n,t}|z_t,\VS_{t-1})$ (Eq~\eqref{eq:eventn}). This discrepancy arises due to the different levels of difficulty in detecting misclassifying hypotheses.
    For example, if there is exactly one hypothesis $h$ misclassifying a true positive $z_t=(x_t,r_t,y_t)$, it is very hard to detect this $h$.
    We must select an $f_t$ satisfying that $d(x_t,f_t)> d(x_t,h')$ for all $h'\in \cH\setminus \{h\}$ (hence $f_t$ will make a mistake), and that $d(x_t,f_t)\leq d(x_t,h)$ (so that we will know $h$ misclassifies $z_t$). Algorithm~\ref{alg:end-iid-ball} controls the distance $d(x_t,f_t)$ through $k_t$, which is the number of hypotheses in the union. In this case, we can only detect $h$ when $k_t =1$ and $f_t = h$, which occurs with probability $\frac{1}{n_t \log(n_t)}$.
    
    However, if there is exactly one hypothesis $h$ misclassifying a true negative $z_t=(x_t,r_t,y_t)$, we have that $d(x_t,h) = \min_{h'\in \cH} d(x_t,h')$.
    Then by setting $f_t = \vee_{h\in \cH} h$, which will makes a mistake and tells us $h$ is a misclassifying hypothesis. Our algorithm will pick such an $f_t$ with probability $\frac{1}{\log(n_t)}$.
\end{itemize}
\section{Proof of Theorem~\ref{thm:delta-csv}}\label{app:delta-csv}
\begin{proof} 
We will prove Theorem~\ref{thm:delta-csv} by constructing an instance of $\cQ$ and $\cH$ and showing that for any conservative learning algorithm, there exists a realizable data distribution s.t. achieving $\epsilon$ loss requires at least $\tilde \Omega(\frac{\abs{\cH}}{\epsilon})$ samples.
\paragraph{Construction of $\cQ$, $\cH$ and a set of realizable distributions} 
    \begin{itemize}
        \item Let the input metric space $(\cX,d)$ be constructed in the following way. Consider the feature space $\cX = \{\be_1,\ldots, \be_n\}\cup X_0$,  where $X_0 = \{\frac{\sigma(0,1,\ldots,n-1)}{z}|\sigma\in \cS_n\}$ with $z = \frac{\sqrt{1^2+\ldots+(n-1)^2}}{\alpha}$ for some small $\alpha=0.1$. Here $\cS_n$ is the set of all permutations over $n$ elements. 
        So $X_0$ is the set of points whose coordinates are a permutation of $\{0,1/z,\ldots,(n-1)/z\}$ and all points in $X_0$  have the  $\ell_2$ norm equal to $\alpha$.
        We define the metric $d$ by restricting $\ell_2$ distance to $\cX$, i.e., $d(x_1,x_2) = \norm{x_1-x_2}_2$ for all $x_1,x_2\in \cX$. 
        Then we have that for any $x\in X_0$ and $i\in [n]$, the distance between $x$ and $\be_i$ is 
        $$d(x,\be_i) = \norm{x-\be_i}_2 = \sqrt{(x_i-1)^2 +\sum_{j\neq i} x_j^2} = \sqrt{1+\sum_{j=1}^n x_j^2 - 2x_i} = \sqrt{1+\alpha^2-2x_i}\,,$$
        which is greater than $\sqrt{1+\alpha^2-2\alpha}>0.8>2\alpha$.
        For any two points $x,x'\in X_0$, $d(x,x')\leq 2\alpha$ by triangle inequality.
        
        \item Let the hypothesis class be a set of singletons over $\{\be_i|i\in [n]\}$, i.e., $\cH = \{2\ind{\{\be_i\}}-1|i\in [n]\}$.
        \item We now define a collection of distributions $\{\cD_i|i\in[n]\}$ in which $\cD_i$ is realized by $2\ind{\{\be_i\}}-1$. 
        For any $i\in [n]$, we define $\cD_i$ in the following way.
        Let the marginal distribution $\cD_\cX$ over $\cX$ be uniform over $X_0$.
        For any $x$, the label $y$ is $+$ with probability $1-6\epsilon$ and $-$ with probability $6\epsilon$, i.e., $\cD(y|x)= \Rad(1-6\epsilon)$.
        Note that the marginal distribution $\cD_{\cX\times\cY} = \Unif(X_0)\times \Rad(1-6\epsilon)$ is identical for any distribution in $\{\cD_i|i\in[n]\}$ and does not depend on $i$.
        
        If the label is positive $y=+$, then let the radius $r = 2$. If the label is negative $y=-$, then let $r= \sqrt{1+\alpha^2-2(x_{i} +\frac{1}{z})}$, which guarantees that $x$ can be manipulated to $\be_j$ iff $d(x,\be_j)< d(x,\be_{i})$ for all $j\in [n]$. 
        Since $x_i\leq \alpha$ and $\frac{1}{z}\leq \alpha$, we have $\sqrt{1+\alpha^2-2(x_{i} +\frac{1}{z})} \geq \sqrt{1-4\alpha}>2\alpha$. Therefore, for both positive and negative examples, we have radius $r$ strictly greater than $2\alpha$ in both cases.
    \end{itemize}

    \paragraph{Randomization and improperness of the output $f_\out$ do not help}
    Note that algorithms are allowed to output a randomized $f_\out$ and to output $f_\out\notin \cH$.
    We will show that randomization and improperness of $f_\out$ don't make the problem easier.
    That is, supposing that the data distribution is $\cD_{i^*}$ for some $i^*\in [n]$, finding a (possibly randomized and improper) $f_\out$ is not easier than identifying $i^*$.
    Since our feature space $\cX$ is finite, we can enumerate all hypotheses not equal to $2\ind{\{\be_{i^*}\}}-1$ and calculate their strategic population loss as follows.
    \begin{itemize}
        \item $2\ind{\emptyset}-1$ predicts all negative and thus
    $\err(2\ind{\emptyset}-1) = 1-6\epsilon$;
    \item For any $a\subset \cX$ s.t. $a\cap X_0\neq \emptyset$, 
    $2\ind{a}-1$ will predict any point drawn from $\cD_{i^*}$ as positive (since all points have radius greater than $2\alpha$ and the distance between any two points in $X_0$ is smaller than $2\alpha$) and thus
    $\err(2\ind{a}-1) = 6\epsilon$;
    \item For any $a\subset \{\be_1,\ldots,\be_n\}$ satisfying that $\exists i\neq i^*$, $\be_i\in a$, we have  $\err(2\ind{a}-1)\geq 3\epsilon$. This is due to that when $y=-$, $x$ is chosen from $\Unif(X_0)$ and the probability of $d(x,\be_i)<d(x,\be_{i^*})$ is $\frac{1}{2}$.
    When $d(x,\be_i)<d(x,\be_{i^*})$, $2\ind{a}-1$ will predict $x$ as positive.
    \end{itemize}
    Under distribution $\cD_{i^*}$, if we are able to find a (possibly randomized) $f_\out$ with strategic loss of $\err(f_\out)\leq \epsilon$, then we have $\err(f_\out) = \EEs{h\sim f_\out}{\err(h)}\geq \Pr_{h\sim f_\out}(h\neq 2\ind{\{\be_{i^*}\}}-1) \cdot 3\epsilon$.
    Thus, $\Pr_{h\sim f_\out}(h= 2\ind{\{\be_{i^*}\}}-1)\geq \frac{2}{3}$.
    Hence, if we are able to find a (possibly randomized) $f_\out$ with $\epsilon$ error, then we are able to identify $i^*$ by checking which realization of $f_\out$ has probability greater than $\frac{2}{3}$.
    In the following, we will focus on the sample complexity to identify $i^*$.
    Let $i_\out$ denote the algorithm's answer to question ``what is $i^*$?''.
    \paragraph{Conservative algorithms}
    When running a conservative algorithm, the rule of choosing $f_t$ at round $t$ and choosing the final output $f_\out$ does not depend on the correct rounds, i.e. $\{\tau \in [T]|\hat y_\tau = y_\tau\}$.
    Let's define
    \begin{align}
        \Delta_t' = \begin{cases}
            \Delta_t & \text{ if } \hat y_t \neq  y_t\\
            \perp & \text{ if } \hat y_t = y_t\,,
        \end{cases}\label{eq:deltaprime}
    \end{align}
    where $\perp$ is just a symbol representing ``no information''.
    Then for any conservative algorithm, the selected predictor $f_t$ is determined by $(f_\tau,\hat y_\tau, y_\tau, \Delta'_\tau)$ for $\tau<t$ and the final output $f_\out$ is determined by $(f_t,\hat y_t, y_t, \Delta'_t)_{t=1}^T$.
    From now on, we consider $\Delta_t'$ as the feedback in the learning process of a conservative algorithm since it make no difference from running the same algorithm with feedback $\Delta_t$.

    \paragraph{Smooth the data distribution} 
    For technical reasons (appearing later in the analysis), we don't want to analyze distribution $\{\cD_i|i\in [n]\}$ directly as the probability of $\Delta_t = \be_{i}$ is $0$ when $f_t(\be_{i}) = +1$ under distribution $\cD_i$.
    Instead, we consider the mixture of $\cD_i$ and another distribution $\cD_i''$, which is identical to $\cD_i$ except that $r(x) = d(x,\be_i)$ when $y=-$.
    More specifically, let $\cD_i' = (1-p) \cD_i + p  \cD_i''$ with some extremely small $p$, where 
    $\cD_i''$'s marginal distribution over $\cX\times \cY$ is still $\Unif(X_0)\times \Rad(1-6\epsilon)$; the radius is $r = 2$
    when $y=+$, ; and the radius is $r= d(x,\be_i)$ when $y=-$.
    For any data distribution $\cD$, let $\bP_\cD$ be the dynamics of $(f_1, y_1,\hat y_1,\Delta'_1,\ldots, f_T, y_T,\hat y_T,\Delta'_T)$ under $\cD$.
    According to Lemma~\ref{lmm:smooth}, by setting $p=\frac{\epsilon}{16n^2}$, when $T\leq \frac{n}{\epsilon}$, with high probability we never sample from $\cD_i''$ and have that for any $i,j\in [n]$
    \begin{equation}
        \abs{\bP_{\cD_{i}}(i_\out = j)-\bP_{\cD_{i}'}(i_\out = j)}\leq \frac{1}{8}\,.\label{eq:smooth-res-csv}
    \end{equation}

    From now on, we only consider distribution $\cD_i'$ instead of $\cD_i$. The readers might have the question that why not using $\cD_i'$ for construction directly. This is because $\cD_i'$ does not satisfy realizability and no hypothesis has zero loss under $\cD_i'$.

\paragraph{Information gain from different choices of $f_t$} In each round of interaction, the learner picks a predictor $f_t$, which can be out of $\cH$.
    Here we enumerate all choices of $f_t$.
    \begin{itemize}
        \item $f_t(\cdot) = 2\ind{\emptyset}-1$ predicts all points in $\cX$ by negative. 
        No matter what $i^*$ is, we will observe $(\Delta_t = x_t, y_t)\sim \Unif(X_0)\times \Rad(1-6\epsilon)$ and $\hat y_t = -$. 
        They are identically distributed for all $i^*\in [n]$, and thus, $\Delta_t'$ is also identically distributed. We cannot tell any information of $i^*$ from this round.
        \item $f_t=2\ind{a_t}-1$ for some $a_t\subset \cX$ s.t. $a\cap X_0\neq \emptyset$. 
        Then $\Delta_t = \Delta(x_t,f_t,r_t) = \Delta(x_t,f_t,2\alpha)$ since $r_t> 2\alpha$ and $d(x_t,f_t)\leq 2\alpha$, $\hat y_t = +$, $y_t\sim \Rad(1-6\epsilon)$. None of these depends on $i^*$ and again, the distribution of $(\hat y_t, y_t, \Delta'_t)$ is identical for all $i^*$ and we cannot tell any information of $i^*$ from this round.
        \item $f_t = 2\ind{a_t}-1$ for some non-empty $a_t\subset \{\be_1,\ldots,\be_n\}$. For rounds with $y_t = +$,  we have $\hat y_t = +$ and $\Delta_t = \Delta(x_t,f_t,2)$, which still not depend on $i^*$.
        Thus we cannot learn any information about $i^*$.
        But we can learn when $y_t=-$.
        For rounds with $y_t=-$, if $\Delta_t \in a_t$, then we could observe  $\hat y_t = +$
        and $\Delta_t' = \Delta_t$, which at least tells that $2\ind{\{\Delta_t\}}-1$ is not the target function (with high probability); if $\Delta_t \notin a_t$, then $\hat y_t = -$ and we observe $\Delta_t' = \perp$.
    \end{itemize}
    Therefore, we only need to focus on the rounds with $f_t = 2\ind{a_t}-1$ for some non-empty $a_t\subset \{\be_1,\ldots,\be_n\}$ and $y_t =-$.
    It is worth noting that drawing an example $x$ from $X_0$ uniformly, it is equivalent to uniformly drawing a permutation of $\cH$ such that the distances between $x$ and $h$ over all $h\in \cH$ are permuted according to it. Then $\Delta_t = \be_j$ iff $\be_j\in a_t$, $d(x, \be_j)\leq d(x,\be_{i^*})$ and $d(x,\be_j)\leq d(x,\be_l)$ for all $\be_l\in a_t$.  
    Let $k_t = \abs{a_t}$ denote the cardinality of $a_t$.
    In such rounds, under distribution $\cD_{i^*}'$, the distribution of $\Delta_t'$ are described as follows.
    \begin{enumerate}
        \item The case of $\be_{i^*} \in a_t$: For all $j\in a_t\setminus\{i^*\}$, with probability $\frac{1}{k_t}$, $d(x_t,\be_j) = \min_{\be_l\in a_t} d(x_t,\be_l)$ and thus, $\Delta_t' =\Delta_t = \be_j$ and $\hat y_t = +$ (mistake round). With probability $\frac{1}{k_t}$, we have $d(x_t,\be_{i^*}) = \min_{\be_l\in a_t} d(x_t,\be_l)$. If the example is drawn from $\cD_{i^*}$, we have $\Delta_t = x_t$ and $y_t = -$ (correct round), thus $\Delta_t' = \perp$. If the example is drawn from $\cD_{i^*}''$, we have we have $\Delta_t' = \Delta_t = \be_{i^*}$ and $y_t = +$ (mistake round). Therefore, according to the definition of $\Delta_t'$ (Eq~\eqref{eq:deltaprime}), we have
        \begin{align*}
            \Delta'_t = \begin{cases}
                \be_j & \text{w.p. } \frac{1}{k_t} \text{ for } \be_j \in a_t, j\neq i^*\\
                \be_{i^*} & \text{w.p. } \frac{1}{k_t}p \\
                \perp & \text{w.p. } \frac{1}{k_t}(1-p)\,.
            \end{cases}
        \end{align*}
        We denote this distribution by $P_\in(a_t,i^*)$.
        \item The case of $\be_{i^*} \notin a_t$: For all $j\in a_t$, with probability $\frac{1}{k_t+1}$,  then $d(x_t,\be_j) = \min_{\be_l\in a_t\cup \{\be_{i^*}\}} d(x_t,\be_l)$ and 
        thus, $\Delta_t = \be_j$ and $\hat y_t = +$ (mistake round). With probability $\frac{1}{k_t+1}$, we have $d(x,\be_{i^*}) < \min_{\be_l\in a_t} d(x_t,\be_l)$ and thus, $\Delta_t = x_t$, $\hat y_t = -$ (correct round), and $\Delta_t' =\perp$. Therefore, the distribution of $\Delta_t'$ is
        \begin{align*}
            \Delta'_t = \begin{cases}
                \be_j & \text{w.p. } \frac{1}{k_t+1} \text{ for } \be_j \in a_t\\
                \perp & \text{w.p. } \frac{1}{k_t+1}\,.
            \end{cases}
        \end{align*}
        We denote this distribution by $P_{\notin}(a_t)$.
    \end{enumerate}

    To measure the information obtained from $\Delta_t'$, we will utilize the KL divergence of the distribution of $\Delta_t'$ under the data distribution $\cD_{i^*}$ from that under a benchmark distribution.
    Let $\bar \cD = \frac{1}{n}\sum_{i\in n}\cD_i'$ denote the average distribution. 
    The process of sampling from $\bar \cD$ is equivalent to sampling $i^*$ uniformly at random from $[n]$ first and drawing a sample from $\cD_{i^*}$.
    Then under $\bar \cD$, for any $\be_j \in a_t$, we have
    \begin{align*}
        \Pr(\Delta'_t = \be_j) =& \Pr(i^*=j)\Pr(\Delta'_t = \be_j|i^*=j) +\Pr(i^*\in a_t\setminus\{j\})\Pr(\Delta'_t = \be_j|i^*\in a_t\setminus\{j\})\\
        &+ \Pr(i^*\notin a_t)\Pr(\Delta'_t = \be_j|i^*\notin a_t)\\=& \frac{1}{n} \cdot \frac{p}{k_t} +\frac{k_t-1}{n}\cdot \frac{1}{k_t} + \frac{n-k_t}{n}\cdot \frac{1}{k_t+1} = \frac{nk_t-1 +p(k_t+1)}{nk_t(k_t+1)}\,,
    \end{align*}
    and
    \begin{align*}
        \Pr(\Delta'_t = \perp) &= \Pr(i^*\in a_t)\Pr(\Delta'_t = \perp|i^*\in a_t)+ \Pr(i^*\notin a_t)\Pr(\Delta'_t = \perp|i^*\notin a_t)\\&=\frac{k_t}{n}\cdot \frac{1-p}{k_t} + \frac{n-k_t}{n}\cdot \frac{1}{k_t+1} = \frac{n+1 - p(k_t+1)}{n(k_t+1)}\,.
    \end{align*}
    Thus, the distribution of $\Delta'_t$ under $\bar\cD$ is
    \begin{align*}
            \Delta_t' = \begin{cases}
                \be_j & \text{w.p. } \frac{nk_t-1+p(k_t+1)}{nk_t(k_t+1)}\text{ for } \be_j \in a_t\\
                \perp & \text{w.p. } \frac{n+1-p(k_t+1)}{n(k_t+1)}\,.
            \end{cases}
    \end{align*}
    We denote this distribution by $\bar P(a_t)$.
    Next we will compute the KL divergences of $P_{\in}(a_t,i^*)$ and $P_{\notin}(a_t)$ from $\bar P(a_t)$. We will use the inequality $\log(1+x)\leq x$ for $x\geq 0$ in the following calculation. For any $i^*$ s.t. $\be_{i^*}\in a_t$, we have 
    \begin{align}
        &\KL{\bar P(a_t)}{P_{\in}(a_t,i^*)} \nonumber\\
        =& (k_t-1) \frac{nk_t-1+p(k_t+1)}{nk_t(k_t+1)}\log(\frac{nk_t-1+p(k_t+1)}{nk_t(k_t+1)}k_t) \nonumber\\
        &+\frac{nk_t-1+p(k_t+1)}{nk_t(k_t+1)}\log(\frac{nk_t-1+p(k_t+1)}{nk_t(k_t+1)}\cdot \frac{k_t}{p}) \nonumber\\
        &+ \frac{n+1-p(k_t+1)}{n(k_t+1)}\log(\frac{n+1-p(k_t+1)}{n(k_t+1)}\cdot \frac{k_t}{1-p}) \nonumber\\
        \leq & 0 + \frac{1}{k_t+1}\log(\frac{1}{p}) + \frac{2p}{k_t+1}
        =\frac{1}{k_t+1}\log(\frac{1}{p}) + \frac{2p}{k_t+1}\,,\label{eq:kl-in-csv}
        \end{align}
        and
        \begin{align}
            &\KL{\bar P(a_t)}{P_{\notin}(a_t)}\nonumber\\
            = & k_t \frac{nk_t-1+p(k_t+1)}{nk_t(k_t+1)}\log(\frac{nk_t-1+p(k_t+1)}{nk_t(k_t+1)}(k_t+1)) \nonumber\\
            &+ \frac{n+1-p(k_t+1)}{n(k_t+1)}\log (\frac{n+1-p(k_t+1)}{n(k_t+1)}(k_t+1))\nonumber \\
            \leq & 0+ \frac{n+1}{n^2(k_t+1)} =
            \frac{n+1}{n^2(k_t+1)}\label{eq:kl-out-csv}\,.
        \end{align}
\paragraph{Lower bound of the information} 
We utilize the information theoretical framework of proving lower bounds for linear bandits (Theorem 11 by \cite{rajaraman2023beyond}) here.
For notation simplicity, for all $i\in [n]$, let $\bP_i$ denote the dynamics of $(f_1,\Delta'_1, y_1,\hat y_1,\ldots, f_T,\Delta'_T, y_T,\hat y_T)$ under $\cD_i'$ and $\bar \bP$ denote the dynamics under $\bar \cD$.
Let $B_t$ denote the event of $\{f_t = 2\ind{a_t}-1 \text{ for some non-empty } a_t\subset \{\be_1,\ldots,\be_n\}\}$. 
As discussed before, for any $a_t$, conditional on $\neg B_t$ or $y_t =+1$,  $(\Delta_t', y_t,\hat y_t)$ are identical in all $\{\cD_i'|i\in [n]\}$, and therefore, also identical in $\bar \cD$.
We can only obtain information at rounds when $B_t \wedge (y_t =-1)$ occurs.
In such rounds, we know that $f_t$ is fully determined by history (possibly with external randomness , which does not depend on data distribution), $y_t =-1$ and $\hat y_t$ is fully determined by $\Delta_t'$ ($\hat y_t = +1$ iff. $\Delta_t' \in a_t$).

Therefore, conditional the history $H_{t-1} = (f_1,\Delta_1', y_1,\hat y_1,\ldots, f_{t-1},\Delta_{t-1}', y_{t-1},\hat y_{t-1})$ before time $t$, we have
\begin{align}
    &\KL{\bar \bP(f_{t},\Delta_{t}', y_{t},\hat y_{t}|H_{t-1})}{\bP_i( f_{t},\Delta_{t}', y_{t},\hat y_{t}|H_{t-1})} \nonumber\\
    =&\bar \bP(B_t  \wedge (y_t =-1)) \KL{\bar \bP(\Delta_t'|H_{t-1},B_t  \wedge (y_t =-1))}{\bP_i(\Delta_t'|H_{t-1},B_t  \wedge (y_t =-1))} \nonumber\\
    =& 6\epsilon \bar \bP(B_t) \KL{\bar \bP(\Delta_t'|H_{t-1},B_t  \wedge (y_t =-1))}{\bP_i(\Delta_t'|H_{t-1},B_t  \wedge (y_t =-1))}\label{eq:kl-delta-csv}\,,
\end{align}
where the last equality holds due to that $y_t\sim \Rad(1-6\epsilon)$ and does not depend on $B_t$.

For any algorithm that can successfully identify $i$ under the data distribution $\cD_i$ with probability $\frac{3}{4}$ for all $i\in [n]$, then $\bP_{\cD_i}(i_\out = i)\geq \frac{3}{4}$ and $\bP_{\cD_j}(i_\out = i)\leq \frac{1}{4}$ for all $j\neq i$.
Recall that $\cD_i$ and $\cD_i'$ are very close when the mixture parameter $p$ is small. Combining with Eq~\eqref{eq:smooth-res-csv}, we have 
\begin{align*}
    &\abs{\bP_{i}(i_\out = i) - \bP_{j}(i_\out = i)}\\
    \geq& \abs{\bP_{\cD_i}(i_\out = i) - \bP_{\cD_j}(i_\out = i)} - \abs{\bP_{\cD_i}(i_\out = i)- \bP_{i}(i_\out = i)}- \abs{\bP_{\cD_j}(i_\out = i)- \bP_{j}(i_\out = i)} \\
    \geq&\frac{1}{2}- \frac{1}{4} = \frac{1}{4}\,.
\end{align*}

Then we have the total variation distance between $\bP_{i}$ and $\bP_{j}$
\begin{align}
       \TV(\bP_{i},\bP_{j})\geq \abs{\bP_{i}(i_\out = i) - \bP_{j}(i_\out = i)}\geq \frac{1}{4}\,.\label{eq:tv-lb-csv}
\end{align}


Then we have
    \begin{align*}
    &\EEs{i\sim \Unif([n])}{\TV^2(\bP_{i},\bP_{(i+1) \text{ mod } n})} \leq 4 \EEs{i\sim \Unif([n])}{\TV^2(\bP_{i},\bar \bP )}\\
    \leq& 2\EEs{i}{\KL{\bar \bP}{\bP_{i}}} \tag{Pinsker's ineq}\\
    =& 2\EEs{i}{\sum_{t=1}^T \KL{\bar \bP(f_t,\Delta'_t, y_t,\hat y_t|H_{t-1})}{\bP_i(f_t,\Delta'_t, y_t,\hat y_t|H_{t-1})}}\tag{Chain rule}\\
    =& 12\epsilon \EEs{i}{\sum_{t=1}^T \bar \bP(B_t) \KL{\bar \bP(\Delta_t'|H_{t-1},B_t  \wedge (y_t =-1))}{\bP_i(\Delta_t'|H_{t-1},B_t  \wedge (y_t =-1))}}\tag{Apply Eq~\eqref{eq:kl-delta-csv}}\\
     =& \frac{12\epsilon}{n} \sum_{t=1}^T \bar \bP(B_t)\sum_{i=1}^n \KL{\bar \bP(\Delta_t'|H_{t-1},B_t  \wedge (y_t =-1))}{\bP_i(\Delta_t'|H_{t-1},B_t  \wedge (y_t =-1))}\\
     =& \frac{12\epsilon}{n}  \EEs{f_{1:T}\sim \bar \bP}{\sum_{t=1}^T\true{B_t}\left(\sum_{i:i\in a_t} \KL{\bar P(a_t)}{P_{\in}(a_t,i)}
     + \sum_{i:i\notin a_t}\KL{\bar P(a_t)}{P_{\notin}(a_t)}\right)}\\
     \leq & \frac{12\epsilon}{n} \sum_{t=1}^T \EEs{f_{1:T}\sim \bar \bP}{\sum_{i:i\in a_t} \left(\frac{1}{k_t+1}\log(\frac{1}{p}) + \frac{2p}{k_t+1}
     \right) + \sum_{i:i\notin a_t}\frac{n+1}{n^2(k_t+1)}}\tag{Apply Eq~\eqref{eq:kl-in-csv},\eqref{eq:kl-out-csv}}\\
     \leq & \frac{12\epsilon}{n} \sum_{t=1}^T (\log(\frac{1}{p})+ 2p + 1)\\
     \leq & \frac{12T\epsilon(\log(16n^2/\epsilon) +2)}{n}\,.
\end{align*}
Combining with Eq~\eqref{eq:tv-lb-csv}, we have that there exists a universal constant $c$ such that $T\geq \frac{cn}{\epsilon (\log(n/\epsilon) +1 )}$.
\end{proof}
\section{Proof of Theorem~\ref{thm:x-delta-never}}\label{app:x-delta-never}
\begin{proof} We will prove Theorem~\ref{thm:x-delta-never} by constructing an instance of $\cQ$ and $\cH$ and then reduce it to a linear stochastic bandit problem.
\paragraph{Construction of $\cQ, \cH$ and a set of realizable distributions}
\begin{itemize}
    \item Consider the input metric space in the shape of a star, where $\cX=\{0,1,\ldots,n\}$ and the distance function of $d(0,i) = 1$ and $d(i,j) =2$ for all $i\neq j\in [n]$.
    \item Let the hypothesis class be a set of singletons over $[n]$, i.e., $\cH =\{2\ind{\{i\}}-1|i\in [n]\}$.
    \item We define a collection of distributions $\{\cD_i|i\in [n]\}$ in which $\cD_i$ is realized by $2\ind{\{i\}}-1$. The data distribution $\cD_{i}$ put $1-3(n-1)\epsilon$ on $(0,1,+)$ and $3\epsilon$ on $(i,1,-)$ for all $i\neq i^*$. Hence, note that all distributions 
    in $\{\cD_i|i\in [n]\}$ share the same distribution support $\{(0,1,+)\}\cup\{(i,1,-)|i\in [n]\}$, but have different weights.
\end{itemize}

\paragraph{Randomization and improperness of the output $f_\out$ do not help.}
Note that algorithms are allowed to output a randomized $f_\out$ and to output $f_\out\notin \cH$.
We will show that randomization and improperness of $f_\out$ don't make the problem easier.
Supposing that the data distribution is $\cD_{i^*}$ for some $i^*\in [n]$, finding a (possibly randomized and improper) $f_\out$ is not easier than identifying $i^*$.
Since our feature space $\cX$ is finite, we can enumerate all hypotheses not equal to $2\ind{\{i^*\}}-1$ and calculate their strategic population loss as follows.
The hypothesis $2\ind{\emptyset}-1$ will predict all by negative and thus $\err(2\ind{\emptyset}-1) = 1-3(n-1)\epsilon$.
For any hypothesis predicting $0$ by positive, it will predict all points in the distribution support by positive and thus incurs strategic loss $3(n-1)\epsilon$.
For any hypothesis predicting $0$ by negative and some $i\neq i^*$ by positive, then it will misclassify $(i,1,-)$ and incur strategic loss $3\epsilon$.
Therefore, for any hypothesis $h\neq 2\ind{\{i^*\}}-1$, we have $\err_{\cD_{i^*}}(h)\geq 3\epsilon$. 

Similar to the proof of Theorem~\ref{thm:delta-csv}, under distribution $\cD_{i^*}$, if we are able to find a (possibly randomized) $f_\out$ with strategic loss $\err(f_\out)\leq \epsilon$.
Then $\Pr_{h\sim f_\out}(h=2\ind{\{i^*\}}-1) \geq \frac{2}{3}$.
We can identify $i^*$ by checking which realization of $f_\out$ has probability greater than $\frac{2}{3}$.
In the following, we will focus on the sample complexity to identify the target function $2\ind{\{i^*\}}-1$ or simply $i^*$.
Let $i_\out$ denote the algorithm's answer to question of ``what is $i^*$?''.

\textbf{Smooth the data distribution}
For technical reasons (appearing later in the analysis), we don't want to analyze distribution $\{\cD_i|i\in [n]\}$ directly as the probability of $(i,1,-)$ is $0$ under distribution $\cD_i$.
Instead, for each $i\in [n]$, let $\cD_i' = (1-p)\cD_i + p \cD_i''$ be the mixture of $\cD_i$ and $\cD_i''$ for some small $p$, where $\cD_i'' = (1-3(n-1)\epsilon)\ind{\{(0,1,+)\}} + 3(n-1)\epsilon \ind{\{(i,1,-)\}}$.
%
Specifically,
\begin{align*}
    \cD_i'(z) = \begin{cases}
        1-3(n-1)\epsilon & \text{for }z = (0,1,+)\\
        3(1-p)\epsilon & \text{for } z = (j,1,-),\forall j\neq i\\
        3(n-1)p\epsilon & \text{for } z = (i,1,-)
    \end{cases}
\end{align*}
For any data distribution $\cD$, let $\bP_\cD$ be the dynamics of $(f_1, y_1,\hat y_1,\ldots, f_T, y_T,\hat y_T)$ under $\cD$.
    According to Lemma~\ref{lmm:smooth}, by setting $p=\frac{\epsilon}{16n^2}$, when $T\leq \frac{n}{\epsilon}$, we have that for any $i,j\in [n]$
    \begin{equation}
        \abs{\bP_{\cD_{i}}(i_\out = j)-\bP_{\cD_{i}'}(i_\out = j)}\leq \frac{1}{8}\,.\label{eq:smooth-res-never}
    \end{equation}
    From now on, we only consider distribution $\cD_i'$ instead of $\cD_i$.
The readers might have the question that why not using $\cD_i'$ for construction directly. This is because no hypothesis has zero loss under $\cD_i'$, and thus $\cD_i'$ does not satisfy realizability requirement.

\paragraph{Information gain from different choices of $f_t$}
Note that in each round, the learner picks a $f_t$ and then only observes $\hat y_t$ and $y_t$. 
Here we enumerate choices of $f_t$ as follows.
\begin{enumerate}
    \item $f_t = 2\ind{\emptyset}-1$ predicts all points in $\cX$ by negative. No matter what $i^*$ is, we observe $\hat y_t = -$ and $y_t = 2\1{x_t = 0}-1$.
    Hence $(\hat y_t,y_t)$ are identically distributed for all $i^*\in [n]$, and thus, we cannot learn anything about $i^*$ from this round.
    \item $f_t$ predicts $0$ by positive. Then no matter what $i^*$ is, we have $\hat y_t = +$ and $y_t = \1{x_t = 0}$. Thus again, we cannot learn anything about $i^*$.
    \item $f_t = 2\ind{a_t}-1$ for some non-empty $a_t\subset [n]$. For rounds with $x_t =0$, we have $\hat y_t = y_t = +$ no matter what $i^*$ is and thus, we cannot learn anything about $i^*$. For rounds with $y_t=-$, i.e., $x_t\neq 0$, we will observe $\hat y_t = f_t(\Delta(x_t,f_t, 1)) = \1{x_t\in a_t}$. 
\end{enumerate}
Hence, we can only extract information with the third type of $f_t$ at rounds with $x_t\neq 0$.

\paragraph{Reduction to stochastic linear bandits}
In rounds with  $f_t = 2\ind{a_t}-1$ for some non-empty $a_t\subset [n]$ and $x_t\neq 0$, our problem is identical to a stochastic linear bandit problem.
Let us state our problem as Problem~\ref{prob:ours} and a linear bandit problem as Problem~\ref{prob:bandit}. Let $A = \{0,1\}^n\setminus \{\bZero\}$.
\begin{problem}\label{prob:ours}
The environment picks an $i^*\in [n]$. At each round $t$, the environment picks $x_t \in \{\be_i|i\in [n]\}$ with $P(i) = \frac{1-p}{n-1}$ for $i\neq i^*$ and $P(i^*) = p$ and the learner picks an $a_t\in A$ (where we use a $n$-bit string to represent $a_t$ and $a_{t,i} = 1$ means that $a_t$ predicts $i$ by positive). Then the learner observes $\hat y_t = \1{a_t^\top x_t >0}$ (where we use $0$ to represent nagative label).
\end{problem}

\begin{problem}\label{prob:bandit}
The environment picks a linear parameter $w^* \in \{w^i|i\in [n]\}$ with $w^i = \frac{1-p}{n-1}\bOne-(\frac{1-p}{n-1} -p)\be_i$.
The arm set is $A$. For each arm $a\in A$, the reward is i.i.d. from the following distribution:
\begin{align}
    r_w(a) = \begin{cases}
    -1, \text{ w.p. } w^\top a\,,\\
    0\,.
    \end{cases}\label{eq:dist-reward}
\end{align}
If the linear parameter $w^* = w^{i^*}$, the optimal arm is $\be_{i^*}$.

\end{problem}

\begin{claim}
For any $\delta>0$, for any algorithm $\cA$ that identify $i^*$ correctly with probability $1-\delta$ within $T$ rounds for any $i^*\in [n]$ in Problem~\ref{prob:ours}, we can construct another algorithm $\cA'$ can also identify the optimal arm in any environment with probability $1-\delta$ within $T$ rounds in Problem~\ref{prob:bandit}.
\end{claim}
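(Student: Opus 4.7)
The plan is to exhibit an explicit simulation: given $\cA$ for Problem~\ref{prob:ours}, define $\cA'$ for Problem~\ref{prob:bandit} that internally runs $\cA$, forwards each action $a_t$ chosen by $\cA$ as the arm pulled in the bandit, and feeds back a bit constructed from the bandit reward that is distributionally identical to what $\cA$ would observe in Problem~\ref{prob:ours}. The translation map I will use is $\hat y_t := \mathds{1}\{r_t = -1\}$, where $r_t$ is the reward from Problem~\ref{prob:bandit}. The identification output of $\cA'$ is $\be_{i_\out}$, where $i_\out$ is whatever index $\cA$ would have produced.

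The key computation is to verify that this mapping makes the two feedback processes match exactly. In Problem~\ref{prob:ours}, for a fixed $i^*$ and action $a_t \in A$, conditioning on $x_t$ being drawn from the specified marginal, a direct case split on whether $a_{t,i^*} = 1$ gives
\begin{align*}
\Pr(\hat y_t = 1 \mid a_t, i^*)
&= \sum_{i \neq i^*} a_{t,i}\cdot\tfrac{1-p}{n-1} + a_{t,i^*}\cdot p \\
&= \tfrac{1-p}{n-1}\,\bOne^\top a_t - \bigl(\tfrac{1-p}{n-1} - p\bigr)\,\be_{i^*}^\top a_t
= (w^{i^*})^\top a_t.
\end{align*}
This is exactly the probability that $r_{w^{i^*}}(a_t) = -1$ by definition~\eqref{eq:dist-reward}. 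So under the correspondence $w^* = w^{i^*}$, the observation $\hat y_t$ that $\cA'$ feeds to $\cA$ is distributed identically to the Problem~\ref{prob:ours} observation under target $i^*$. Since the actions $a_t$ are chosen by $\cA$ based only on past observations, the entire joint distribution of $(a_1,\hat y_1,\ldots,a_T,\hat y_T)$ is the same in the two settings.

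From this equivalence, $\Pr(i_\out = i^*)$ under $\cA$ in Problem~\ref{prob:ours} with target $i^*$ equals $\Pr(i_\out = i^*)$ under $\cA'$ in Problem~\ref{prob:bandit} with parameter $w^{i^*}$. The last thing to check is that identifying the index $i^*$ is equivalent to identifying the optimal arm. The expected reward of arm $a$ under $w^i$ is $-(w^i)^\top a$, and a direct calculation shows that $(w^i)^\top \be_i = p$, $(w^i)^\top \be_j = \tfrac{1-p}{n-1}$ for $j \neq i$, and $(w^i)^\top a \geq \tfrac{1-p}{n-1}$ for every $a$ with $|a|\geq 2$ or $a_i = 0$. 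With $p = \tfrac{\epsilon}{16n^2}$ we have $p < \tfrac{1-p}{n-1}$, so $\be_i$ is the unique optimal arm under $w^i$. Hence $\cA'$ outputting $\be_{i_\out}$ correctly identifies the optimal arm whenever $\cA$ correctly identifies $i^*$, which happens with probability at least $1 - \delta$ by assumption.

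There is no real obstacle beyond the distributional matching calculation above; the rest is bookkeeping. The only subtlety to flag is that the reduction is internal-randomness faithful (any randomness $\cA$ uses can be supplied by $\cA'$) and that Problem~\ref{prob:ours}'s feature $x_t$ is never actually needed by $\cA$, only the induced bit $\hat y_t$, which is exactly what the bandit reward produces.
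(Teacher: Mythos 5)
Your proof is correct and follows essentially the same route as the paper: run $\cA$ internally, forward its action $a_t$ as the arm, translate the reward to the bit $-r_t$, and note that the conditional law of the bit given $a_t$ matches $(w^{i^*})^\top a_t$ in both problems, so the trajectory distributions coincide and $\cA'$ may simply output $\be_{i_\out}$. The only differences are cosmetic — you spell out the algebra showing $\sum_i a_{t,i}P(i) = (w^{i^*})^\top a_t$ and verify that $\be_{i^*}$ is the unique optimal arm (which the paper asserts as part of Problem~\ref{prob:bandit}'s description), both of which are sound.
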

This claim follows directly from the problem descriptions. Given any algorithm $\cA$ for Problem~\ref{prob:ours}, we can construct another algorithm $\cA'$ which simulates $\cA$.
At round $t$, if $\cA$ selects predictor $a_t$, then $\cA'$ picks arm the same as $a_t$.
Then $\cA'$ observes a reward $r_{w^{i^*}}(a_t)$, which is $-1$ w.p. $w^{i^*\top} a_t$ and feed $-r_{w^{i^*}}(a_t)$ to $\cA$. Since $\hat y_t$ in Problem~\ref{prob:ours} is $1$ w.p. $\sum_{i=1}^n a_{t,i}P(i) = w^{i^*\top}a_t$, it is distributed identically as $-r_{w^{i^*}}(a_t)$. Since $\cA$ will be able to identify $i^*$ w.p. $1-\delta$ in $T$ rounds, $\cA'$ just need to output $\be_{i^*}$ as the optimal arm.

Then any lower bound on $T$ for Problem \ref{prob:bandit} also lower bounds Problem \ref{prob:ours}. Hence, we adopt the information theoretical framework of proving lower bounds for linear bandits (Theorem 11 by \cite{rajaraman2023beyond}) to prove a lower bound for our problem. In fact, we also apply this framework to prove the lower bounds in other settings of this work, including Theorem~\ref{thm:delta-csv} and Theorem~\ref{thm:non-ball}.

\paragraph{Lower bound of the information}

For notation simplicity, for all $i\in [n]$, let $\bP_i$ denote the dynamics of $(f_1, y_1,\hat y_1,\ldots, f_T, y_T,\hat y_T)$ under $\cD_i'$ and and $\bar \bP$ denote the dynamics under $\bar \cD = \frac{1}{n}\cD_{i}'$.
Let $B_t$ denote the event of $\{f_t = 2\ind{a_t}-1 \text{ for some non-empty } a_t\subset [n]\}$.
As discussed before, for any $a_t$, conditional on $\neg B_t$ or $y_t =+1$,  $(x_t, y_t,\hat y_t)$ are identical in all $\{\cD_i'|i\in [n]\}$, and therefore, also identical in $\bar \cD$.
We can only obtain information at rounds when $B_t \wedge y_t =-1$ occurs.
In such rounds, $f_t$ is fully determined by history (possibly with external randomness , which does not depend on data distribution), $y_t =-1$ and $\hat y_t = -r_w(a_t)$ with $r_w(a_t)$ sampled from the distribution defined in Eq~\eqref{eq:dist-reward}.

For any algorithm that can successfully identify $i$ under the data distribution $\cD_i$ with probability $\frac{3}{4}$ for all $i\in [n]$, then $\bP_{\cD_i}(i_\out = i)\geq \frac{3}{4}$ and $\bP_{\cD_j}(i_\out = i)\leq \frac{1}{4}$ for all $j\neq i$.
Recall that $\cD_i$ and $\cD_i'$ are very close when the mixture parameter $p$ is small. Combining with Eq~\eqref{eq:smooth-res-never}, we have 
\begin{align}
    &\abs{\bP_{i}(i_\out = i) - \bP_{j}(i_\out = i)}\nonumber\\
    \geq& \abs{\bP_{\cD_i}(i_\out = i) - \bP_{\cD_j}(i_\out = i)} - \abs{\bP_{\cD_i}(i_\out = i)- \bP_{i}(i_\out = i)}- \abs{\bP_{\cD_j}(i_\out = i)- \bP_{j}(i_\out = i)} \nonumber\\
    \geq&\frac{1}{2}- \frac{1}{4} = \frac{1}{4}\,.\label{eq:tv-lb-never}
\end{align}


Let $\bar w = \frac{1}{n}\bOne$.
Let $\kl{q}{q'}$ denote the KL divergence from $\Ber(q)$ to $\Ber(q')$. Let $H_{t-1} = (f_1,y_1,\hat y_1,\ldots,f_{t-1},y_{t-1},\hat y_{t-1})$ denote the history up to time $t-1$.
Then we have
\begin{align}
    &\EEs{i\sim \Unif([n])}{\TV^2(\bP_i,\bP_{{i+1} \text{ mod } n})} \leq 4 \EEs{i\sim \Unif([n])}{\TV^2(\bP_i,\bar \bP)}\nonumber\\
    \leq& 2\EEs{i}{\KL{\bar \bP}{\bP_i}} \tag{Pinsker's ineq}\nonumber\\
    =& 2\EEs{i}{\sum_{t=1}^T \KL{\bar \bP(f_t, y_t,\hat y_t|H_{t-1})}{\bP_i(f_t, y_t,\hat y_t|H_{t-1})}}\tag{Chain rule}\nonumber\\
    =& 2\EEs{i}{\sum_{t=1}^T \bar \bP(B_t\wedge y_t=-1)\EEs{a_{1:T}\sim \bar \bP}{ \KL{\Ber(\inner{\bar w}{a_t})}{\Ber(\inner{w^i}{a_t})}}}\nonumber\\
    =& 6(n-1)\epsilon\EEs{i}{\sum_{t=1}^T \bar \bP(B_t)\EEs{a_{1:T}\sim \bar \bP}{ \KL{\Ber(\inner{\bar w}{a_t})}{\Ber(\inner{w^i}{a_t})}}}\nonumber\\
     =& \frac{6(n-1)\epsilon}{n} \sum_{t=1}^T \EEs{a_{1:T}\sim \bar \bP}{\sum_{i=1}^n \KL{\Ber(\inner{\bar w}{a_t})}{\Ber(\inner{w^i}{a_t})}}\nonumber\\
     =& \frac{6(n-1)\epsilon}{n} \sum_{t=1}^T \EEs{a_{1:T}\sim \bar \bP}{\sum_{i:i\in a_t}\kl{\frac{k_t}{n}}{ \frac{(k_t-1)(1-p)}{n-1}+ p} 
     + \sum_{i:i\notin a_t}\kl{\frac{k_t}{n}}{ \frac{k_t(1-p)}{n-1}}}\nonumber\\
     =& \frac{6(n-1)\epsilon}{n} \sum_{t=1}^T \EEs{a_{1:T}\sim \bar \bP}{k_t \kl{\frac{k_t}{n}}{ \frac{(k_t-1)(1-p)}{n-1}+ p} 
     + (n-k_t)\kl{\frac{k_t}{n}}{ \frac{k_t(1-p)}{n-1}}}\label{eq:info-ub-never}
\end{align}
If $k_t = 1$, then
\begin{align*}
    k_t\cdot \kl{\frac{k_t}{n}}{ \frac{(k_t-1)(1-p)}{n-1}+ p} = \kl{\frac{1}{n}}{ p}\leq \frac{1}{n} \log(\frac{1}{p})\,,
\end{align*}
and 
\begin{align*}
    (n-k_t)\cdot\kl{\frac{k_t}{n}}{ \frac{k_t(1-p)}{n-1}}= (n-1)\cdot\kl{\frac{1}{n}}{\frac{1-p}{n-1}}\leq \frac{1}{(1-p)n(n-2)}\,,
\end{align*}
where the ineq holds due to $\kl{q}{q'}\leq \frac{(q-q')^2}{q'(1-q')}$.
If $k_t = n-1$, it is symmetric to the case of $k_t = 1$. We have
\begin{align*}
    &k_t\cdot \kl{\frac{k_t}{n}}{ \frac{(k_t-1)(1-p)}{n-1}+ p} = (n-1)\kl{\frac{n-1}{n}}{\frac{n-2}{n-1} +\frac{1}{n-1}p}=
    (n-1)\kl{\frac{1}{n}}{\frac{1-p}{n-1}}\\
    \leq& \frac{1}{(1-p)n(n-2)}\,,
\end{align*}
and 
\begin{align*}
    (n-k_t)\cdot\kl{\frac{k_t}{n}}{ \frac{k_t(1-p)}{n-1}}= \kl{\frac{n-1}{n}}{1-p}= 
     \kl{\frac{1}{n}}{ p}\leq \frac{1}{n} \log(\frac{1}{p})\,.   
\end{align*}
If $1<k_t<n-1$, then
\begin{align*}
    k_t\cdot\kl{\frac{k_t}{n}}{ \frac{(k_t-1)(1-p)}{n-1}+ p} = &k_t\cdot\kl{\frac{k_t}{n}}{ \frac{k_t-1}{n-1} +\frac{n-k_t}{n-1}p}\stackrel{(a)}{\leq} 
    k_t\cdot\kl{\frac{k_t}{n}}{ \frac{k_t-1}{n-1}}\\
    \stackrel{(b)}{\leq}&
    k_t\cdot\frac{(\frac{k_t}{n}-\frac{k_t-1}{n-1} )^2}{\frac{k_t-1}{n-1}(1-\frac{k_t-1}{n-1})}= k_t\cdot\frac{n-k_t}{n^2(k_t-1)}\leq \frac{k_t\cdot}{n(k_t-1)}\leq \frac{2}{n}\,,
\end{align*}
where inequality (a) holds due to that $\frac{k_t-1}{n-1} +\frac{n-k_t}{n-1}p \leq \frac{k_t}{n}$ and $\kl{q}{q'}$ is monotonically decreasing in $q'$ when $q'\leq q$ and inequality (b) adopts $\kl{q}{q'}\leq \frac{(q-q')^2}{q'(1-q')}$,
and
\begin{align*}
    (n-k_t)\cdot\kl{\frac{k_t}{n}}{ \frac{k_t(1-p)}{n-1}}\leq (n-k_t)\cdot\kl{\frac{k_t}{n}}{ \frac{k_t}{n-1}}\leq \frac{k_t(n-k_t)}{n^2(n-1-k_t)}\leq \frac{2k_t}{n^2}\,,
\end{align*}
where the first inequality hold due to that $\frac{k_t(1-p)}{n-1} \geq \frac{k_t}{n}$, and $\kl{q}{q'}$ is monotonically increasing in $q'$ when $q'\geq q$ and the second inequality adopts $\kl{q}{q'}\leq \frac{(q-q')^2}{q'(1-q')}$.
Therefore, we have
\begin{equation*}
    \text{Eq~\eqref{eq:info-ub-never}} \leq \frac{6(n-1)\epsilon}{n} \sum_{t=1}^T \EEs{a_{1:T}\sim \bar \bP}{\frac{2}{n}\log(\frac{1}{p})} \leq \frac{12\epsilon T\log(1/p)}{n} \,.
\end{equation*}
Combining with Eq~\eqref{eq:tv-lb-never}, we have that there exists a universal constant $c$ such that $T\geq \frac{cn}{\epsilon (\log(n/\epsilon) +1 )}$.
\end{proof}
\section{Proof of Theorem~\ref{thm:non-ball}}\label{app:non-ball}

\begin{proof}
We will prove Theorem~\ref{thm:non-ball} by constructing an instance of $\cQ$ and $\cH$ and showing that for any learning algorithm, there exists a realizable data distribution s.t. achieving $\epsilon$ loss requires at least $\tilde \Omega(\frac{\abs{\cH}}{\epsilon})$ samples.
\paragraph{Construction of $\cQ$, $\cH$ and a set of realizable distributions} 
    \begin{itemize}
        \item Let feature vector space $\cX = \{0,1,\ldots,n\}$ and let the space of feature-manipulation set pairs $\cQ = \{(0,\{0\}\cup s)|s\subset [n]\}$.
        That is to say, every agent has the same original feature vector $x =0$ but has different manipulation ability according to $s$.
        \item Let the hypothesis class be a set of singletons over $[n]$, i.e., $\cH = \{2\ind{\{i\}}-1|i\in [n]\}$.
        \item We now define a collection of distributions $\{\cD_i|i\in [n]\}$ in which $\cD_i$ is realized by $2\ind{\{i\}}-1$.
        For any $i\in [n]$, let $\cD_{i}$ put probability mass $1-6\epsilon$ on $(0,\cX,+1)$ and $6\epsilon$ uniformly over $\{(0, \{0\}\cup s_{\sigma, i}, -1)|\sigma \in \cS_n\}$, where $\cS_n$ is the set of all permutations over $n$ elements and $s_{\sigma, i}:= \{j|\sigma^{-1}(j)< \sigma^{-1}(i)\}$ is the set of elements appearing before $i$ in the permutation $(\sigma(1),\ldots,\sigma(n))$.
        In other words, with probability $1-6\epsilon$, we will sample $(0,\cX,+1)$ and with $\epsilon$, we will randomly draw a permutation $\sigma\sim \Unif(\cS_n)$ and return $(0, \{0\}\cup s_{\sigma, i}, -1)$. 
        The data distribution $\cD_{i}$ is realized by $2\ind{\{i\}}-1$ since for negative examples $(0, \{0\}\cup s_{\sigma, i}, -1)$, we have $i\notin s$ and for positive examples $(0,\cX,+1)$, we have $i\in \cX$.
    \end{itemize}

    \paragraph{Randomization and improperness of the output $f_\out$ do not help}
    Note that algorithms are allowed to output a randomized $f_\out$ and to output $f_\out\notin \cH$.
    We will show that randomization and improperness of $f_\out$ don't make the problem easier.
    That is, supposing that the data distribution is $\cD_{i^*}$ for some $i^*\in [n]$, finding a (possibly randomized and improper) $f_\out$ is not easier than identifying $i^*$.
    Since our feature space $\cX$ is finite, we can enumerate all hypotheses not equal to $2\ind{\{i^*\}}-1$ and calculate their strategic population loss as follows.
    \begin{itemize}
        \item $2\ind{\emptyset}-1$ predicts all points in $\cX$ by negative and thus
    $\err(2\ind{\emptyset}-1) = 1-6\epsilon$;
    \item For any $a\subset \cX$ s.t. $0\in a$, $2\ind{a}-1$ will predict $0$ as positive and thus will predict any point drawn from $\cD_{i^*}$ as positive.
    Hence
    $\err(2\ind{a}-1) = 6\epsilon$;
    \item For any $a\subset [n]$ s.t. $\exists i\neq i^*$, $i\in a$, we have  $\err(2\ind{a}-1)\geq 3\epsilon$. This is due to that when $y=-1$, the probability of drawing a permutation $\sigma$ with $\sigma^{-1}(i)<\sigma^{-1}(i^*)$ is $\frac{1}{2}$. In this case, we have $i\in s_{\sigma,i^*}$ and the prediction of $2\ind{a}-1$ is $+1$.
    \end{itemize}
    Under distribution $\cD_{i^*}$, if we are able to find a (possibly randomized) $f_\out$ with strategic loss $\err(f_\out)\leq \epsilon$, then we have $\err(f_\out) = \EEs{h\sim f_\out}{\err(h)}\geq \Pr_{h\sim f_\out}(h\neq 2\ind{\{i^*\}}-1) \cdot 3\epsilon$.
    Thus, $\Pr_{h\sim f_\out}(h= 2\ind{\{i^*\}}-1)\geq \frac{2}{3}$ and then, we can identify $i^*$ by checking which realization of $f_\out$ has probability greater than $\frac{2}{3}$.
    In the following, we will focus on the sample complexity to identify the target function $2\ind{\{i^*\}}-1$ or simply $i^*$.
    Let $i_\out$ denote the algorithm's answer to question of ``what is $i^*$?''.

\paragraph{Smoothing the data distribution} 
    For technical reasons (appearing later in the analysis), we don't want to analyze distribution $\{\cD_i|i\in [n]\}$ directly as the probability of $\Delta_t = i^*$ is $0$ when $f_t(i^*) = +1$.
    Instead, we consider the mixture of $\cD_i$ and another distribution $\cD_i''$ to make the probability of $\Delta_t = i^*$ be a small positive number.
    More specifically, let $\cD_i' = (1-p) \cD_i + p  \cD_i''$, where $\cD_i''$
    is defined by drawing $(0,\cX, +1)$ with probability $1-6\epsilon$ and $(0,\{0,i\}, -1)$ with probability $6\epsilon$. When $p$ is extremely small, we will never sample from $\cD_i''$ when time horizon $T$ is not too large and therefore, the algorithm behaves the same under $\cD_i'$ and $\cD_i$.
    For any data distribution $\cD$, let $\bP_\cD$ be the dynamics of $(x_1,f_1,\Delta_1, y_1,\hat y_1,\ldots, x_T, f_T,\Delta_T, y_T,\hat y_T)$ under $\cD$.
    According to Lemma~\ref{lmm:smooth}, by setting $p=\frac{\epsilon}{16n^2}$, when $T\leq \frac{n}{\epsilon}$, we have that for any $i,j\in [n]$
    \begin{equation}
        \abs{\bP_{\cD_{i}}(i_\out = j)-\bP_{\cD_{i}'}(i_\out = j)}\leq \frac{1}{8}\,.\label{eq:smooth-res}
    \end{equation}
   
    From now on, we only consider distribution $\cD_i'$ instead of $\cD_i$. The readers might have the question that why not using $\cD_i'$ for construction directly. This is because no hypothesis has zero loss under $\cD_i'$, and thus $\cD_i'$ does not satisfy realizability requirement.

    \paragraph{Information gain from different choices of $f_t$} In each round of interaction, the learner picks a predictor $f_t$, which can be out of $\cH$.
    Suppose that the target function is $2\ind{\{i^*\}}-1$ .
    Here we enumerate all choices of $f_t$ and discuss how much we can learn from each choice.
    \begin{itemize}
        \item $f_t = 2\ind{\emptyset}-1$ predicts all points in $\cX$ by negative. 
        No matter what $i^*$ is, we will observe $\Delta_t = x_t = 0$, $y_t\sim \Rad(1-6\epsilon)$, $\hat y_t = -1$. They are identically distributed for any $i^*\in [n]$ and thus we cannot tell any information of $i^*$ from this round.

        \item $f_t=2\ind{a_t}-1$ for some $a_t\subset \cX$ s.t. $0\in a_t$. 
        Then no matter what $i^*$ is, we will observe $\Delta_t = x_t = 0$, $y_t\sim \Rad(1-6\epsilon)$, $\hat y_t = +1$.  Again, we cannot tell any information of $i^*$ from this round.

        \item $f_t = 2\ind{a_t}-1$ for some some non-empty $a_t\subset [n]$. 
        For rounds with $y_t = +1$,  we have $x_t =0, \hat y_t = +1$ and $\Delta_t = \Delta(0, f_t, \cX)\sim \Unif(a_t)$,
        which still do not depend on $i^*$.
        For rounds with $y_t=-1$, if the drawn example $(0,\{0\}\cup s, -1)$ satisfies that $s\cap a_t\neq \emptyset$, the we would observe $\Delta_t \in a_t$ and $\hat y_t = +1$.
        At least we could tell that $\ind{\{\Delta_t\}}$ is not the target function.
       Otherwise, we would observe $\Delta_t=x_t=0$ and $\hat y_t = -1$.
    \end{itemize}
    Therefore, we can only gain some information about $i^*$ at rounds in which $f_t = 2\ind{a_t}-1$ for some non-empty $a_t\subset [n]$ and $y_t =-1$.
    In such rounds, under distribution $\cD_{i^*}'$, the distribution of $\Delta_t$ is described as follows.
    Let $k_t = \abs{a_t}$ denote the cardinality of $a_t$.
    Recall that agent $(0,\{0\}\cup s, -1)$ breaks ties randomly when choosing $\Delta_t$ if there are multiple elements in $a_t\cap s$.
    Here are two cases: $i^* \in a_t$ and $i^* \notin a_t$.
    \begin{enumerate}
        \item The case of $i^* \in a_t$: With probability $p$, we are sampling from $\cD_{i^*}''$ and then $\Delta_t = i^*$.
        With probability $1-p$, we are sampling from $\cD_{i^*}$. 
        Conditional on this, with probability $\frac{1}{k_t}$, we sample an agent $(0,\{0\}\cup s_{\sigma, i^*}, -1)$ with the permutation $\sigma$ satisfying that $\sigma^{-1}(i^*)<\sigma^{-1}(j)$ for all $j\in a_t\setminus \{i^*\}$ and thus, $\Delta_t = 0$.
        With probability $1-\frac{1}{k_t}$, there exists $j\in a_t\setminus \{i^*\}$ s.t. $\sigma^{-1}(j)<\sigma^{-1}(i^*)$ and $\Delta_t \neq 0$. Since all $j\in a_t\setminus \{i^*\}$ are symmetric, we have $\Pr(\Delta_t = j) = (1-p)(1-\frac{1}{k_t}) \cdot \frac{1}{k_t-1} = \frac{1-p}{k_t}$.
        Hence, the distribution of $\Delta_t$ is
        \begin{align*}
            \Delta_t = \begin{cases}
                j & \text{w.p. } \frac{1-p}{k_t} \text{ for } j \in a_t, j\neq i^*\\
                i^* & \text{w.p. } p \\
               0 & \text{w.p. } \frac{1-p}{k_t}\,.
            \end{cases}
        \end{align*}
        We denote this distribution by $P_\in(a_t, i^*)$.
        \item The case of $i^*\notin a_t$: With probability $p$, we are sampling from $\cD_{i^*}''$, we have $\Delta_t=x_t=0$.
        With probability $1-p$, we are sampling from $\cD_{i^*}$.
        Conditional on this, with probability of $\frac{1}{k_t+1}$, $\sigma^{-1}(i^*)<\sigma^{-1}(j)$ for all $j\in a_t$ and thus, $\Delta_t =x_t= 0$.
        With probability $1-\frac{1}{k_t+1}$
        there exists $j\in a_t$ s.t. $\sigma^{-1}(j)<\sigma^{-1}(i^*)$ and $\Delta_t \in a_t$. Since all $j\in a_t$ are symmetric, we have $\Pr(\Delta_t = j) = (1-p)(1-\frac{1}{k_t+1}) \cdot \frac{1}{k_t} = \frac{1-p}{k_t+1}$.
        Hence the distribution of $\Delta_t$ is
        \begin{align*}
            \Delta_t = \begin{cases}
                j & \text{w.p. } \frac{1-p}{k_t+1} \text{ for } j \in a_t\\
                0 & \text{w.p. } p+ \frac{1-p}{k_t+1}\,.
            \end{cases}
        \end{align*}
        We denote this distribution by $P_{\notin}(a_t)$.
    \end{enumerate}
To measure the information obtained from $\Delta_t$, we will use the KL divergence of the distribution of $\Delta_t$
 under the data distribution $\cD_{i^*}'$ from that under a benchmark data distribution.
We use the average distribution over $\{\cD_i'|i\in [n]\}$, which is denoted by $\bar \cD = \frac{1}{n}\sum_{i\in n}\cD_i'$.
The sampling process is equivalent to drawing $i^*\sim \Unif([n])$ first and then sampling from $\cD'_{i^*}$.
Under $\bar \cD$, for any $j\in a_t$, we have
    \begin{align*}
        \Pr(\Delta_t = j) 
        =& \Pr(i^*\in a_t\setminus\{j\})\Pr(\Delta_t = j|i^*\in a_t\setminus\{j\})+ \Pr(i^*=j)\Pr(\Delta_t = j|i^*=j) \\
        &+ \Pr(i^*\notin a_t)\Pr(\Delta_t = \be_j|i^*\notin a_t) \\&=\frac{k_t-1}{n}\cdot \frac{1-p}{k_t}+\frac{1}{n}\cdot p + \frac{n-k_t}{n}\cdot \frac{1-p}{k_t+1} = \frac{(nk_t-1)(1-p)}{nk_t(k_t+1)} +\frac{p}{n}\,,
    \end{align*}
    and
    \begin{align*}
        \Pr(\Delta_t = 0) &= \Pr(i^*\in a_t)\Pr(\Delta_t = 0|i^*\in a_t)+ \Pr(i^*\notin a_t)\Pr(\Delta_t = 0|i^*\notin a_t)\\&=\frac{k_t}{n}\cdot \frac{1-p}{k_t} + \frac{n-k_t}{n}\cdot (p+ \frac{1-p}{k_t+1}) = \frac{(n+1)(1-p)}{n(k_t+1)} + \frac{(n-k_t)p}{n}\,.
    \end{align*}
    Thus, the distribution of $\Delta_t$ under $\bar\cD$ is
    \begin{align*}
            \Delta_t = \begin{cases}
                j & \text{w.p. } \frac{(nk_t-1)(1-p)}{nk_t(k_t+1)} +\frac{p}{n}\text{ for } j \in a_t\\
                0 & \text{w.p. } \frac{(n+1)(1-p)}{n(k_t+1)} + \frac{(n-k_t)p}{n}\,.
            \end{cases}
    \end{align*}
    We denote this distribution by $\bar P(a_t)$.
    Next we will compute the KL divergence of $P_{\notin}(a_t)$ and $P_{\in}(a_t)$ from $\bar P(a_t)$.
    Since $p =\frac{\epsilon}{16n^2}\leq \frac{1}{16n^2}$, we have $\frac{(nk_t-1)(1-p)}{nk_t(k_t+1)} +\frac{p}{n} \leq \frac{1-p}{k_t+1}$ and $\frac{(n+1)(1-p)}{n(k_t+1)} + \frac{(n-k_t)p}{n} \leq \frac{1}{k_t}+p$.
    We will also use $\log(1+x)\leq x$ for $x\geq 0$ in the following calculation.
    For any $i^*\in a_t$, we have
    \begin{align}
        &\KL{\bar P(a_t)}{P_{\in}(a_t,i^*)} \nonumber\\
        =&(k_t-1)\left(\frac{(nk_t-1)(1-p)}{nk_t(k_t+1)} +\frac{p}{n}\right) \log\left((\frac{(nk_t-1)(1-p)}{nk_t(k_t+1)} +\frac{p}{n}) \cdot \frac{k_t}{1-p}\right) \nonumber\\
        &+ \left(\frac{(nk_t-1)(1-p)}{nk_t(k_t+1)} +\frac{p}{n}\right) \log\left((\frac{(nk_t-1)(1-p)}{nk_t(k_t+1)} +\frac{p}{n}) \cdot \frac{1}{p}\right) \nonumber\\
        &+\left( \frac{(n+1)(1-p)}{n(k_t+1)} + \frac{(n-k_t)p}{n}\right) \log \left(\left( \frac{(n+1)(1-p)}{n(k_t+1)} + \frac{(n-k_t)p}{n}\right) \cdot \frac{k_t}{1-p}\right)\nonumber\\
        \leq & (k_t-1)\left(\frac{(nk_t-1)(1-p)}{nk_t(k_t+1)} +\frac{p}{n}\right)\log(\frac{1-p}{k_t+1}\cdot \frac{k_t}{1-p}) + \frac{1-p}{k_t+1} \log(1\cdot\frac{1}{p}) \nonumber\\
        &+ (\frac{1}{k_t}+p)\cdot \log\left(1+pk_t\right)\nonumber\\
        \leq & 0 + \frac{1}{k_t+1}\log(\frac{1}{p}) + \frac{2}{k_t}\cdot p k_t =\frac{1}{k_t+1}\log(\frac{1}{p}) + 2p 
        \,.\label{eq:kl-in}
        \end{align}
        For $P_{\notin}(a_t)$, we have
        \begin{align}
            &\KL{\bar P(a_t)}{P_{\notin}(a_t)}\nonumber\\
            =&k_t\left(\frac{(nk_t-1)(1-p)}{nk_t(k_t+1)} +\frac{p}{n}\right) \log\left((\frac{(nk_t-1)(1-p)}{nk_t(k_t+1)} +\frac{p}{n}) \cdot \frac{k_t+1}{1-p}\right) \nonumber\\
        &+\left( \frac{(n+1)(1-p)}{n(k_t+1)} + \frac{(n-k_t)p}{n}\right) \log \left(\left( \frac{(n+1)(1-p)}{n(k_t+1)} + \frac{(n-k_t)p}{n}\right) \cdot \frac{1}{p+ \frac{1-p}{k_t+1}}\right)\nonumber\\
        \leq & k_t \left(\frac{(nk_t-1)(1-p)}{nk_t(k_t+1)} +\frac{p}{n}\right)\log(\frac{1-p}{k_t+1}\cdot \frac{k_t+1}{1-p}) \nonumber\\
        &+ (\frac{1}{k_t}+p)\log \left(\left( \frac{(n+1)(1-p)}{n(k_t+1)} + \frac{(n-k_t)p}{n}\right) \cdot \frac{1}{p+ \frac{1-p}{k_t+1}}\right)\nonumber\\
        = & 0+ (\frac{1}{k_t}+p) \log(1+ \frac{1-p(k_t^2+k_t+1)}{n(1+k_tp)})\nonumber
        \\
        \leq &(\frac{1}{k_t}+p) \frac{1}{n(1 + k_tp)}
        = \frac{1}{nk_t}\label{eq:kl-out}\,.
        \end{align}

\paragraph{Lower bound of the information} Now we adopt the similar framework used in the proofs of Theorem~\ref{thm:delta-csv} and \ref{thm:x-delta-never}.
For notation simplicity, for all $i\in [n]$, let $\bP_i$ denote the dynamics of $(x_1,f_1,\Delta_1, y_1,\hat y_1,\ldots, x_T, f_T,\Delta_T, y_T,\hat y_T)$ under $\cD_i'$ and and $\bar \bP$ denote the dynamics under $\bar \cD$.
Let $B_t$ denote the event of $\{f_t = 2\ind{a_t}-1 \text{ for some non-empty } a_t\subset [n]\}$.
As discussed before, for any $a_t$, conditional on $\neg B_t$ or $y_t =+1$,  $(x_t, \Delta_t, y_t,\hat y_t)$ are identical in all $\{\cD_i'|i\in [n]\}$, and therefore, also identical in $\bar \cD$.
We can only obtain information at rounds when $B_t \wedge (y_t =-1)$ occurs.
In such rounds, we know that $x_t$ is always $0$, $f_t$ is fully determined by history (possibly with external randomness , which does not depend on data distribution), $y_t =-1$ and $\hat y_t$ is fully determined by $\Delta_t$ ($\hat y_t = +1$ iff. $\Delta_t \neq 0$).

Therefore, conditional the history $H_{t-1} = (x_1,f_1,\Delta_1, y_1,\hat y_1,\ldots, x_{t-1}, f_{t-1},\Delta_{t-1}, y_{t-1},\hat y_{t-1})$ before time $t$, we have
\begin{align}
    &\KL{\bar \bP(x_{t}, f_{t},\Delta_{t}, y_{t},\hat y_{t}|H_{t-1})}{\bP_i(x_{t}, f_{t},\Delta_{t}, y_{t},\hat y_{t}|H_{t-1})} \nonumber\\
    =&\bar \bP(B_t  \wedge y_t =-1) \KL{\bar \bP(\Delta_t|H_{t-1},B_t  \wedge y_t =-1)}{\bP_i(\Delta_t|H_{t-1},B_t  \wedge y_t =-1)} \nonumber\\
    =& 6\epsilon \bar \bP(B_t) \KL{\bar \bP(\Delta_t|H_{t-1},B_t  \wedge y_t =-1)}{\bP_i(\Delta_t|H_{t-1},B_t  \wedge y_t =-1)}\label{eq:kl-delta}\,,
\end{align}
where the last equality holds due to that $y_t\sim \Rad(1-6\epsilon)$ and does not depend on $B_t$.

For any algorithm that can successfully identify $i$ under the data distribution $\cD_i$ with probability $\frac{3}{4}$ for all $i\in [n]$, then $\bP_{\cD_i}(i_\out = i)\geq \frac{3}{4}$ and $\bP_{\cD_j}(i_\out = i)\leq \frac{1}{4}$ for all $j\neq i$.
Recall that $\cD_i$ and $\cD_i'$ are very close when the mixture parameter $p$ is small. Combining with Eq~\eqref{eq:smooth-res}, we have 
\begin{align*}
    &\abs{\bP_{i}(i_\out = i) - \bP_{j}(i_\out = i)}\\
    \geq& \abs{\bP_{\cD_i}(i_\out = i) - \bP_{\cD_j}(i_\out = i)} - \abs{\bP_{\cD_i}(i_\out = i)- \bP_{i}(i_\out = i)}- \abs{\bP_{\cD_j}(i_\out = i)- \bP_{j}(i_\out = i)} \\
    \geq&\frac{1}{2}- \frac{1}{4} = \frac{1}{4}\,.
\end{align*}

Then we have the total variation distance between $\bP_{i}$ and $\bP_{j}$
\begin{align}
       \TV(\bP_{i},\bP_{j})\geq \abs{\bP_{i}(i_\out = i) - \bP_{j}(i_\out = i)}\geq \frac{1}{4}\,.\label{eq:tv-lb}
\end{align}

Then we have
    \begin{align*}
    &\EEs{i\sim \Unif([n])}{\TV^2(\bP_{i},\bP_{(i+1)\text{ mod } n})} \leq 4 \EEs{i\sim \Unif([n])}{\TV^2(\bP_{i},\bar \bP )}\\
    \leq& 2\EEs{i}{\KL{\bar \bP}{\bP_{i}}} \tag{Pinsker's ineq}\\
    =& 2\EEs{i}{\sum_{t=1}^T \KL{\bar \bP(x_{t}, f_{t},\Delta_{t}, y_{t},\hat y_{t}|H_{t-1})}{\bP_i(x_{t}, f_{t},\Delta_{t}, y_{t},\hat y_{t}|H_{t-1})} }\tag{Chain rule}\\
    \leq & 12\epsilon\EEs{i}{\sum_{t=1}^T \bar \bP(B_t) \KL{\bar \bP(\Delta_t|H_{t-1},B_t  \wedge y_t =-1)}{\bP_i(\Delta_t|H_{t-1},B_t  \wedge y_t =-1)}}\tag{Apply Eq~\eqref{eq:kl-delta}}\\
     \leq& \frac{12\epsilon}{n} \sum_{t=1}^T \bar \bP(B_t)\sum_{i=1}^n \KL{\bar \bP(\Delta_t|H_{t-1},B_t  \wedge y_t =-1)}{\bP_i(\Delta_t|H_{t-1},B_t  \wedge y_t =-1)}\\
     =& \frac{12\epsilon}{n}  \EEs{a_{1:T}\sim \bar \bP}{\sum_{t=1}^T\true{B_t}\left(\sum_{i:i\in a_t} \KL{\bar P(a_t)}{P_{\in}(a_t)}
     + \sum_{i:i\notin a_t}\KL{\bar P(a_t)}{P_{\notin}(a_t)}\right)}\\
     \leq & \frac{12\epsilon}{n}  \EEs{a_{1:T}\sim \bar \bP}{\sum_{t:\true{B_t} =1}\left(\sum_{i:i\in a_t} \left(\frac{1}{k_t+1}\log(\frac{1}{p}) + 2p
     \right) + \sum_{i:i\notin a_t}\frac{1}{n k_t}\right)}\tag{Apply Eq~\eqref{eq:kl-in},\eqref{eq:kl-out}}\\
     \leq & \frac{12\epsilon}{n} \sum_{t=1}^T (\log(\frac{1}{p})+ 2np + 1)\\
     \leq & \frac{12T\epsilon(\log(16n^2/\epsilon) + 2)}{n}\,.
\end{align*}
Combining with Eq~\eqref{eq:tv-lb}, we have that there exists a universal constant $c$ such that $T\geq \frac{cn}{\epsilon (\log(n/\epsilon) +1 )}$. 
\end{proof}

\end{document}